\documentclass[twoside,11pt]{article}

\usepackage{blindtext}

\usepackage[abbrvbib, preprint]{jmlr2e}

\usepackage{amsmath}

\usepackage{multirow}
\usepackage{tcolorbox}
\usepackage{threeparttable}
\usepackage{algorithm,algorithmic}

\newcommand{\A}{\mathcal{A}}

\newcommand{\mw}{\mathbf{w}}
\newcommand{\mv}{\mathbf{v}}

\newcommand{\ls}{\left\|}
\newcommand{\rs}{\right\|_2}
\newcommand{\lp}{\left(}
\newcommand{\rp}{\right)}

\newcommand{\lk}{\left[}
\newcommand{\rk}{\right]}

\newtheorem{assumption}{Assumption}[section]
\usepackage{cleveref}
\crefname{equation}{Eq.}{Eqs.}
\crefname{definition}{Definition}{Definitions}
\crefname{assumption}{Assumption}{Assumptions}
\crefname{theorem}{Theorem}{Theorems}
\crefname{remark}{Remark}{Remarks}
\crefname{lemma}{Lemma}{Lemmas}
\crefname{corollary}{Corollary}{Corollaries}
\crefname{proposition}{Proposition}{Propositions}
\crefname{section}{Section}{Sections}
\crefname{subsection}{Subsection}{Subsections}
\crefname{example}{Example}{Examples}
\crefname{table}{Table}{Tables}
\crefname{problem}{Problem}{Problems}
\crefname{algorithm}{Algorithm}{Algorithms}
\crefname{figure}{Figure}{Figures}
\crefname{property}{Property}{Properties}
\crefname{appendix}{Appendix}{Appendices}

\usepackage{lastpage}
\jmlrheading{23}{2026}{1-\pageref{LastPage}}{1/21; Revised 5/22}{9/22}{21-0000}{Wang, Luo, Wen, Li and Sun}

\ShortHeadings{High-Probability Generalization of D-SGD}{Wang, Luo, Wen, Li and Sun}
\firstpageno{1}

\begin{document}

\title{Unveiling High-Probability Generalization in Decentralized SGD}

\author{\name Jiahuan Wang \email wangjiahuan@nudt.edu.cn \\
       \name Ping Luo \email luoping@nudt.edu.cn \\
       \name Ziqing Wen \email zqwen@nudt.edu.cn \\
       \name Dongsheng Li \email lds1201@163.com \\
       \name Tao Sun\thanks{Corresponding Author.}  \email nudtsuntao@163.com \\
       \addr College of Computer Science and Technology\\
       National University of Defense Technology\\
       Changsha, Hunan 410073, China
       }

\editor{}

\maketitle

\begin{abstract}
	Decentralized stochastic gradient descent (D-SGD) is an efficient method for large-scale distributed learning. Existing generalization studies mainly address expected results, achieving rates limited to $\mathcal{O}\lp\frac{1}{\delta \sqrt{mn}}\rp$, where $\delta$ is the confidence parameter, $m$ the number of workers, and $n$ the sample size. When $m=1$, D-SGD reduces to traditional SGD, whose optimal high-probability generalization bound is $\mathcal{O}\left(\frac{1}{\sqrt{n}}\log (1/\delta)\right)$. This discrepancy reveals a gap between high-probability guarantees for SGD and those for D-SGD. To close this, we develop a high-probability learning theory for D-SGD, aiming for the optimal $\mathcal{O}\left(\frac{1}{\sqrt{mn}}\log (1/\delta)\right)$ rate. We refine bounds for D-SGD using pointwise uniform stability in distributed learning—a weaker notion than uniform stability—and analyze them across convex, strongly convex, and non-convex settings. We also provide high-probability results for gradient-based measures in non-convex cases where only local minima exist, and derive optimization error and excess risk bounds. Finally, accounting for communication overhead, we analyze generalization bounds for local models within time-varying frameworks.
\end{abstract}

\begin{keywords}
  Stability and Generalization, High-Probability Bounds, Decentralized Stochastic Gradient Descent
\end{keywords}

\section{Introduction}
In recent years, the study of stability and generalization properties of randomized algorithms \citep*{bousquet2002stability,elisseeff2005stability,hardt2016train,kuzborskij2018data,lei2020fine,yuan2023exponential,yuan2024l_2,ramezani2024generalization} has gained significant traction, driven by their crucial role in machine learning community. These algorithms, particularly stochastic gradient descent (SGD) and its variants, have become indispensable tools in both theoretical research and practical applications, especially in large-scale settings \citep{li2014efficient,lin2018don,woodworth2020local,woodworth2020minibatch}. Among these, decentralized stochastic gradient descent has emerged as an efficient approach for distributed learning, enabling multiple agents to collaboratively learn a model without requiring central coordination \citep{nedic2009distributed,sundhar2010distributed}. Studies have shown that D-SGD not only reduces communication bottlenecks but can also achieve comparable or even superior performance to centralized methods in certain scenarios \citep{lian2017can}. This makes D-SGD particularly well-suited for large-scale optimization tasks in distributed systems \citep{shi2015extra,yuan2016convergence,lian2017can,koloskova2020unified,sun2021stability,yuan2023removing}.

Despite the widespread application of decentralized stochastic gradient descent, the theoretical understanding of its stability and generalization properties remains limited. A growing body of research has explored the generalization of D-SGD and its variants \citep{sun2021stability,zhu2022topology,deng2023stability,bars2023improved,zhu2024stability,wang2024towards}, but much of this work focuses on expected generalization bounds. These bounds typically characterize the algorithm's performance in an average-case scenario, providing valuable insights but often lacking robustness in settings where guarantees on tail behavior are critical.
For example, we can establish the uniform stability bound for D-SGD (More details can be found in \cref{section Convex Case})
\begin{align*}\label{uniform bound}
	\epsilon_{\textrm{uniform}}=4\beta L^2\sum_{t=1}^{T}\eta_t\sum_{q=1}^{t-1}\eta_q\lambda^{t-q-1}+\frac{2L^2}{mn}\sum_{t=1}^{T}\eta_t.
\end{align*}
Then we obtain the generalization error in expectation \citep{hardt2016train,sun2021stability} under $\eta_t=\eta$ with
\begin{align*}
	\left| \mathbb{E}_{S,\A}\lk R_S(\A(S))-R(\A(S))\rk\right|\leq 2L^2\lp\frac{2\eta^2\beta T}{1-\lambda}+\frac{\eta T}{mn}\rp.
\end{align*}
According to Markov's inequality, with $1-\delta$ probability, we have 
\begin{align*}
	R(\A(S))-R_S(\A(S))\leq\frac{2L^2}{\delta}\lp\frac{2\eta^2\beta T}{1-\lambda}+\frac{\eta T}{mn}\rp.
\end{align*}
If $\eta=\frac{1}{\sqrt{T}}$ and $T=mn$, we can only get $R(\A(S))-R_S(\A(S))=\mathcal{O}\lp\frac{1}{\delta \sqrt{mn}}\rp.$

In contrast, high-probability generalization bounds provide stronger guarantees by ensuring desirable performance with high certainty, even under worst-case conditions. While such bounds have been extensively studied in centralized settings \citep{bousquet2002stability,feldman2018generalization,feldman2019high,bousquet2020sharper,klochkov2021stability}, their application to D-SGD remains conspicuously underexplored. Addressing this gap is crucial to building a more complete theoretical framework for D-SGD, offering guarantees that are not only rigorous but also practically reliable. Motivated by this gap, we raise the following questions:
\begin{tcolorbox}[colback=pink!10!white, colframe=pink!60!black]
	\begin{center}
		\textit{{What are the high-probability results of the stability and generalization analysis of D-SGD? Can D-SGD achive the sharp high-probability convergence rate of $\mathcal{O}(\frac{1}{\sqrt{mn}})$?}}
	\end{center}
\end{tcolorbox}
This paper addresses the aforementioned questions by bridging key theoretical gaps and deriving satisfactory generalization bounds. 
\begin{table*}
	\begin{center}
		\renewcommand\arraystretch{1.4}
		\begin{threeparttable}
			\begin{tabular}{cccc}
				\hline
				Update Type&Reference&Situation&Generalization Bounds\\
				\hline
				{I \eqref{I}}&\cite{richards2020graph}&$\eta\leq{2}/{\beta}$&$\mathcal{O}\lp\frac{1}{\delta \sqrt{mn}}\rp$\\
				\hline
				\multirow{3}{*}{II \eqref{II}}&\cite{sun2021stability}&$\eta\leq{2}/{\beta}$&$\mathcal{O}\lp\frac{1}{\delta \sqrt{mn}}\rp$\\
				&\cite{bars2023improved}&$\eta\leq{2\min_{k}P_{kk}}/{\beta}$&$\mathcal{O}\lp\frac{1}{\delta \sqrt{mn}}\rp$\\
				&Ours&$\eta\leq{2}/{\beta}$&$\mathcal{O}\lp\frac{1}{\sqrt{mn}}\log (1/\delta)\rp$\\
				\hline
			\end{tabular}
		\end{threeparttable}
		\caption{Summary of stability and generalization analysis for D-SGD under convex case. ($P_{kk}$ represents the $k$-th diagonal element of the gossip matrix; $\eta$ means stepsize; $\beta$ denotes smoothness property.)}
		\label{Compare 1}
	\end{center}
\end{table*}
\subsection{Difference and Technical Challenges}
The main difference between our work and previous studies lies in the type of stability used for deriving generalization bounds. While previous approaches typically rely on uniform stability \citep{richards2020graph,sun2021stability}, we introduce a weaker but more general notion of pointwise uniform stability for distributed learning. By leveraging this weaker stability concept, we are able to refine the stability-based generalization results and obtain optimal convergence rates for D-SGD, specifically achieving $\mathcal{O}(\frac{1}{\sqrt{mn}})$ convergence.

One of the primary technical challenges we address is the derivation of high-probability generalization bounds for D-SGD. Unlike expected bounds \citep{richards2020graph,sun2021stability,zhu2022topology,bars2023improved}, high-probability results require careful treatment of tail behavior in the random variables involved, which adds significant complexity to the analysis. In non-convex settings, deriving high-probability optimization error bounds is another technical hurdle. We tackle this using martingale difference sequence techniques, which allow us to analyze the stochastic behavior of local updates in a rigorous manner. Additionally, we address the challenge of analyzing the generalization behavior of local models within the time-varying D-SGD framework. In decentralized settings, communication overhead and the impact of changing network topologies \citep{nedic2017achieving,koloskova2020unified,le2023refined} present unique difficulties. Our analysis of local models provides a clearer understanding of how these factors influence the generalization performance of D-SGD. More details can be found in Section \ref{challenges}.
\subsection{Contributions}
To the best of our knowledge, this work makes a novel attempt to examine the high-probability generalization behavior of D-SGD, addressing both generalization error and optimization error. We provide comprehensive theoretical results for D-SGD under convex, strongly convex, and non-convex settings. Our main contributions are summarized as follows:
\begin{itemize}
	\item We provide refined stability and generalization bounds for D-SGD under pointwise uniform stability for distributed learning (See \cref{pointwise uniform dl}), which is weaker than uniform stability. In particular, our results are derived with high probability, offering stronger guarantees compared to expected bounds. The high-probability bound achieves the sharp convergence rate of $\mathcal{O}(\frac{1}{\sqrt{mn}})$ , which aligns with the optimal rates established in centralized settings. These bounds are applicable to a broad range of settings, including convex (See \cref{gen convex case}), strongly convex (See \cref{strong convex theorem 1}), and non-convex scenarios (See \cref{nonconvex case theorem 1}). 
	\item We present high-probability bounds for optimization error and excess risk in non-convex problem under gradient-dominance condition (See \cref{nonconvex optim}), providing a more robust understanding of D-SGD's performance in these challenging settings. 
	\item We derive generalization error bounds for the local models in time-varying D-SGD (See \cref{convex local,strong convex local,non convex local}), which are especially relevant in decentralized scenarios where communication costs or instability could be a challenge. The derived bounds illustrate how the changing topology impacts the local models.
\end{itemize}

\section{Related Work}

\subsection{Stability and Generalization of D-SGD and its variants}
Algorithmic stability is a central concept in statistical learning theory, serving as a key measure of how sensitive an algorithm is to small changes in the training data \citep{bousquet2002stability,elisseeff2005stability,shalev2010learnability}. It provides dimension-independent generalization bounds, making it an essential tool in understanding the performance of learning algorithms across various settings \citep{hardt2016train,bousquet2020sharper,lei2020fine}.
\cite{richards2020graph} first introduced a generalization bound for D-SGD using the concepts of algorithmic stability and Rademacher complexity, applicable in both smooth and nonsmooth settings. \cite{sun2021stability} proposed stability and generalization bounds for D-SGD based on uniform stability. \cite{zhu2022topology} examined the impact of communication topology on D-SGD, employing the on-average stability framework under the H\"{o}lder-smooth condition \citep{lei2020fine}. Additionally, \cite{bars2023improved} established similar generalization bounds for D-SGD as those for classical C-SGD \citep{hardt2016train}. Recently, \cite{zeng2025stability} provided sharper bounds for D-SGD without requiring Lipschitz or smoothness.The specific differences in the approaches to D-SGD in the aforementioned works will be clarified in detail in \cref{Compare 1}.

More recently, several studies have explored the generalization properties of various D-SGD variants, including asynchronous D-SGD \citep{deng2023stability}, D-SGD with minibatch \citep{wang2024towards}, D-SGDA \citep{zhu2024stability}, and zeroth-order optimization \citep{wang2024towards,hu2025stability}. However, it is important to note that all of these works focus on expected generalization results, which are based on the average behavior of the algorithm across random perturbations of the training data. These approaches do not consider high-probability generalization bounds, which are central to this paper's investigation. Specifically, we aim to extend the analysis of D-SGD stability to high-probability generalization bounds, exploring whether the existing stability-based techniques can be applied to obtain tighter and more robust bounds in this context.

\subsection{High-probability bounds for uniform stable algorithms}
We also provide a brief review of the development of high-probability generalization bounds for uniformly stable algorithms. The foundational result was derived by \cite{bousquet2002stability}, which states that the generalization error bound for uniformly stable algorithms is of the form $\mathcal{O}\lp\epsilon n^{1/2}+M{n}^{-1/2}{\log^{1/2} (1/\delta)}\rp$, where $\epsilon$ denotes the stability parameter, $M$ is the upper bound of the loss function, $n$ is the sample size, and $\delta$ is the confidence parameter. However, in the regime where $\epsilon\leq n^{-1/2}$, the bound becomes trivial and vanishes. Subsequently, \cite{feldman2018generalization,feldman2019high} presented a stronger generalization bound, which is of the form
$\mathcal{O}\lp\epsilon\log n\lp\log n+\log (1/\delta)\rp+M{n}^{-1/2}{\log^{1/2} (1/\delta)}\rp$.
This result introduces the $\log^2 n$ term, but it also makes the bound less tight. Building on this, \cite{bousquet2020sharper} derived a more refined exponential bound, which improves the generalization error to $\mathcal{O}\lp\epsilon\log n\log (1/\delta)+M{n}^{-1/2}{\log^{1/2} (1/\delta)}\rp$.
In contrast to these results, \cite{klochkov2021stability} considered the excess risk bound under the Bernstein condition, deriving a generalization bound of the form $\mathcal{O}(1/n)$. Recently, \cite{fan2024high} considered a weaker form of pointwise uniform stability and derived similarly tight high-probability generalization bounds, which are also applicable in more general settings with relaxed stability assumptions.
\section{Preliminaries}
Consider a distributed system with $m$ workers, and let
$S=\{S_1,S_2,\cdots,S_m\}=\{Z_{1(1)},\cdots,Z_{k(r)},\cdots,Z_{n(m)}\}$
represent the training dataset, which each $Z_{k(r)}$ is independently and identically drawn from a probability measure $\mathcal{D}$.
We focus on finding a model $\mw$ from a given parameter space $\mathcal{W}$ that minimizes its population risk (or expected risk), which is defined as follows:
$
\min_{w\in\mathcal{W}} R(\mw):=\mathbb{E}_{Z\sim \mathcal{D}}\lk f\lp\mw;Z\rp\rk,
$
where $f\lp\mw;Z\rp$ represents a loss function which measures the performance of a model $\mw$ on data $Z$.
Since $\mathcal{D}$ is unknowable and unmeasurable, we cannot directly obtain the optimal solution for $R(\mw)$. Therefore, we consider its alternative: Empirical Risk, defined as
$
R_{S}(\mw)=\frac{1}{mn}\sum_{r=1}^{m}\sum_{k=1}^{n}f\lp\mw;Z_{k(r)}\rp.
$
And we assume that the empirical risk of $r$-th local model is $R_{S_{r}}=\frac{1}{n}\sum_{k=1}^{n}f\lp\mw;Z_{k(r)}\rp$. Let $\mathcal{A}(S)$ denote the model obtained by applying algorithm $\mathcal{A}$ (such as SGD or D-SGD) to the dataset $S$. Although $\mathcal{A}(S)$ may demonstrate a low empirical risk during training by perfectly fitting the examples, this empirical success does not guarantee a correspondingly low population risk. Therefore, it is natural to investigate the difference between population risk and empirical risk
$
	R(\mathcal{A}(S))-R_{S}(\mathcal{A}(S)).
$
We also focus on the excess generalization error $R(\A(S))-R(\mw^*)$, where $\mw^*$ is the theoretically minimizer of $R(\mw)$. This term can be structured as follows:
\begin{align*}
	&R(\A(S))-R(\mw^*)\\
	=&\underbrace{R(\A(S))-R_S(\A(S))}_{\textrm{Generalization\quad Error}}+\underbrace{\ R_S(\A(S))-R_S({\mw}_{R}^{*})}_{\textrm{Optimization\quad Error}}
	+\underbrace{R_S({\mw_{R}^{*}})-R_S(\mw^*)}_{\leq 0}+\underbrace{R_S({\mw^{*}})-R(\mw^*)}_{\textrm{Test \quad Error}},
\end{align*}
where $\mw_{R}^{*}$ denotes the minimizer of $R_S(\mw)$. Due to $R_S(\mw_{R}^{*})\leq R_S(\mw^*)$, the third item is always less than or equal to 0. For the test error term, Bernstein’s inequality \citep{hoeffding1994probability} can be applied to bound it. Thus, the analysis can be simplified to focus on the first two terms: the generalization error and the optimization error.

Before introducing the pointwise uniform stability \citep{fan2024high}, which is the primary focus of this paper, we first present uniform stability, a widely recognized tool for analyzing generalization error \citep{bousquet2002stability, elisseeff2005stability, agarwal2009generalization, hardt2016train,bousquet2020sharper,richards2020graph}.
\begin{definition}[Uniform Stability]
	A stochastic algorithm $\A$ is considered $\epsilon$-uniformly stable if, for any two training datasets $S$ and $\tilde{S}$ that differ by at most one example, the following holds:
	$$
	\sup_{Z}\left| f\lp\A(S);Z\rp-f\lp\A(\tilde{S});Z\rp\right|\leq\epsilon_{\emph{uniform}}.
	$$
\end{definition}
This means that the difference in loss between the outputs of the algorithm on the two datasets is bounded by $\epsilon_{\textrm{uniform}}$.
\begin{definition}[Pointwise Uniform Stability for Distributed Learning]\label{pointwise uniform dl}
	Let $\A$ be the random algorithm and suppose that $\epsilon=\lp\epsilon_{11},\cdots,\epsilon_{1n},\cdots,\epsilon_{m1},\cdots,\epsilon_{mn}\rp$, where $\epsilon_{rk}>0$. 
	
	\textit{\textbf{Function Value Stability:}} We say $\A$ is $\epsilon$-pointwise uniform stability for distributed learning in function values if  
	\begin{equation}
		\sup_{Z}\left|f\lp\A(S);Z\rp-f\lp\A(S^{(rk)});Z\rp\right|\leq \epsilon_{rk},
	\end{equation}
	where $S^{(rk)}$ denotes the dataset obtained by replacing the $k$-th sample in the \(r\)-th subset of $S$.  
	
	\textit{\textbf{Gradient Stability: }}We say $\A$ is $\epsilon$-pointwise uniform stability in gradients for distributed learning  if  
	\begin{equation*}
		\sup_{Z}\left\|\nabla f\lp\A(S);Z\rp-\nabla f\lp\A(S^{(rk)});Z\rp\right\|\leq \epsilon_{rk}.
	\end{equation*}
\end{definition}
For convenience, we abbreviate the following terms
\begin{equation*}
	\varDelta_{rk}^2=\frac{1}{mn}\sum_{r=1}^{m}\sum_{k=1}^{n}\epsilon_{rk}^2,\quad \varDelta_{rk}=\frac{1}{mn}\sum_{r=1}^{m}\sum_{k=1}^{n}\epsilon_{rk}.
\end{equation*}
The following lemma characterizes the relationship between pointwise uniform stability and the generalization error in the high-probability sense. 
\begin{lemma}[\citep{fan2024high}, Theorem 2 and 3]\label{Lemma 1}
	Let $\epsilon=\lp\epsilon_{11},\cdots,\epsilon_{1n},\cdots,\epsilon_{m1},\cdots,\epsilon_{mn}\rp$, where $\epsilon_{ij}>0$ for all $i,j$. Consider a distributed learning algorithm $\A$, and suppose that the loss function $f(\A(S);Z)$ is bounded by a constant $M$ for any dataset $S$ and data $Z$. If $\A$ is $\epsilon$-pointwise uniform stability for distributed learning in function values, then for $\delta\in(0,1)$, with probability at least $1-\delta$, the following inequality holds: 
	\begin{align*}
		\big|R(\A(S))-R_S(\A(S))\big|
		\lesssim\frac{M\log^{\frac{1}{2}}(1/\delta)}{\sqrt{mn}}+\lk\varDelta_{rk}^2\rk^{\frac{1}{2}}\log (mn)\log (1/\delta),
	\end{align*}
	where
	$
		\lk\varDelta_{rk}^2\rk^{\frac{1}{2}}=\lk\frac{1}{mn}\sum_{r=1}^{m}\sum_{k=1}^{n}\epsilon_{rk}^2\rk^{\frac{1}{2}}.
	$
	Moreover, if the gradients satisfy $\ls\nabla f(\A(S);Z)\rs\leq M$, we can also get
	\begin{align*}
		\ls\nabla R(\A(S))-\nabla R_S(\A(S))\rs
		\lesssim\frac{M\log^{\frac{1}{2}}(1/\delta)}{\sqrt{mn}}+\lk\varDelta_{rk}^2\rk^{\frac{1}{2}}\log (mn)\log (1/\delta).
	\end{align*}
\end{lemma}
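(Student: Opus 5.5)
The plan is to reduce the statement to Fan and Lei's high-probability bound for pointwise uniformly stable algorithms (their Theorems 2 and 3), treating the distributed dataset $S=\{Z_{k(r)}\}$ as a single i.i.d.\ sample of size $N=mn$, indexed by the pairs $(r,k)$. Since all $Z_{k(r)}$ are i.i.d.\ from $\mathcal{D}$ and $R_S$ is the uniform average over all $N$ points, the partition into workers plays no role in the probabilistic argument. Only the replacement structure matters, and replacing $Z_{k(r)}$ by an independent copy is exactly replacing one coordinate of this flattened sample.

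\textbf{Step 1.} Decompose $R(\A(S))-R_S(\A(S))=\frac{1}{N}\sum_{(r,k)} g_{rk}(S)$, where
\[
g_{rk}(S)=\mathbb{E}_{Z'}\lk f(\A(S);Z')\rk-f(\A(S);Z_{k(r)}).
\]
Following Bousquet et al.\ and Fan--Lei, replace each $g_{rk}$ by $h_{rk}=g_{rk}-\mathbb{E}[g_{rk}\mid Z_{k(r)}]$ (suitably centered). By $\epsilon$-pointwise stability, $g_{rk}$ has bounded differences in every coordinate other than $(r,k)$: changing $Z_{k'(r')}$ alters $g_{rk}$ by at most $2\epsilon_{r'k'}$. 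The $h_{rk}$ then form a family to which the Bousquet--Klochkov--Zhivotovskiy moment inequality for sums of functions with $\mathbb{E}[h_{rk}\mid Z_{k(r)}]=0$ applies. In the pointwise version, the uniform $\beta$ is replaced by the sequence $\epsilon_{r'k'}$, and the $p$-th moment bound reads
\[
\Bigl\|\sum h_{rk}\Bigr\|_p\lesssim p\log N\Bigl(\sum_{r',k'}\epsilon_{r'k'}^2\Bigr)^{1/2}\sqrt{N}.
\]
Dividing by $N$ gives $p\log N\,[\varDelta_{rk}^2]^{1/2}$.

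\textbf{Step 2.} The remainder $\frac1N\sum_{(r,k)}\mathbb{E}[g_{rk}\mid Z_{k(r)}]$ plus the bias term is a sum of independent bounded variables. By $|f|\le M$ it has sub-Gaussian moments $\sqrt{p}\,M/\sqrt N$, up to a bias of order $\varDelta_{rk}\le[\varDelta_{rk}^2]^{1/2}$ by Cauchy--Schwarz.

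\textbf{Step 3.} Combining the two moment bounds and converting $L_p$ bounds to tail bounds via Markov with $p=\log(e/\delta)$ yields the claimed bound with $\log^{1/2}(1/\delta)M/\sqrt{mn}$ plus $[\varDelta_{rk}^2]^{1/2}\log(mn)\log(1/\delta)$.

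For the gradient statement, repeat the argument with vector-valued $g_{rk}$ in a Hilbert space. The moment inequality of Bousquet et al.\ extends to Hilbert-space valued functions, as used by Fan--Lei and Lei's gradient-stability work, and the boundedness $\|\nabla f\|\le M$ replaces $|f|\le M$, with scalar Hoeffding replaced by a vector Pinelis-type inequality.

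The main obstacle is Step 1: the moment inequality in the pointwise setting, where the bounded-difference constants vary by coordinate. This requires rerunning the recursive splitting argument of Bousquet et al.\ (dyadic grouping of indices, conditioning on blocks) while tracking $\sum\epsilon_{rk}^2$ instead of $N\beta^2$. Since this is exactly the content of Fan--Lei's Theorems 2 and 3, I would invoke them directly after the flattening identification, checking only that their hypotheses (independence, coordinatewise stability, boundedness) hold under the relabeling $(r,k)\mapsto (r-1)n+k$.
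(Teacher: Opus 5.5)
Your proposal takes the same route as the paper, which gives no separate argument and gets the lemma from Fan--Lei's Theorems 2 and 3 by treating the $mn$ i.i.d.\ points $Z_{k(r)}$ as one sample of size $mn$. That is exactly your flattening $(r,k)\mapsto(r-1)n+k$, with $\frac{1}{mn}\sum_{r,k}\epsilon_{rk}^2$ and $\log(mn)$ taking the place of $\frac{1}{n}\sum_i\epsilon_i^2$ and $\log n$. One caveat on the gradient half: Fan--Lei's Theorem 3, and your vector-valued bounded-difference step, need $\mathcal{A}$ to be $\epsilon$-pointwise uniformly stable in gradients, not only in function values. The statement leaves this hypothesis implicit, so state it explicitly when you check the conditions.
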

\begin{remark}
	It is easy to observe that the above result differs slightly from the one in \cite{fan2024high}. We extend \cite{fan2024high}
	's result to the distributed learning scenario, taking into account the number of machines. The result we aim to compare with is the $\mathcal{O}(\frac{1}{\sqrt{n}})$-optimal result from \cite{bousquet2020sharper}. The main difference between the two lies in the type of stability used; however, their convergence rates are identical. If we can demonstrate that pointwise uniform stability leads to smaller results compared to uniform stability, it naturally follows that the corresponding high-probability generalization bounds will also be tighter \cite{fan2024high}.
\end{remark}
\subsection{Decentralized SGD}
D-SGD was proposed in \cite{lian2017can} and can be broken down into the following steps:

$\bullet$ \textbf{Gradient Calculation:} Each $i$-th node computes the local gradient $\nabla f(\mathbf{w}^{t}(i);Z_{j_{t}(i)})$, where $\mw^{t}(i)$ denotes the $i$-th local model parameter, $Z_{j_{t}(i)}$ is the sample drawn uniformly from $S_i$ at the $t$-th iteration.

$\bullet$ \textbf{Gossip Communication:} Each $i$-th node shares its model parameters $\mw^{t}(i)$ with neighboring nodes according to a decentralized communication graph $P$, where $P=[P_{ij}]\in\mathcal{R}^{m\times m}$ represents the communication link between each nodes. More details about the gossip matrix $P$ can see in the \cref{gossip}. Notably, in the final section (See \cref{local model}), we also consider a time-varying gossip matrix $P^t$ \citep{koloskova2020unified,le2023refined}, which can be also interpreted as a form of delayed communication.

$\bullet$ \textbf{Local Model Update:} Each local node updates its model $\mw^{t}(i+1)$ by performing a gradient descent step based on the aggregated model parameters received from its neighbors during the gossip step. 

The detailed algorithm can be found in \cref{D-SGD}, which shows the flow of the D-SGD process.

\begin{algorithm}
	\caption{Decentralized Stochastic Gradient Descent (D-SGD)}
	\label{D-SGD}
	\begin{algorithmic}[1] 
		\REQUIRE Initialize $\forall i, \mathbf{w}^{1}(i)=\mathbf{w}^{1}$, stepsizes $\{\eta_t\}_{t=1}^{T}$,weight matrix $P$ and the iteration number $T$. 
		\FOR {$ t=1,2,\cdots,T$ }
		\FOR {$ i=1,2,\cdots,m$}
		\STATE {Sample $Z_{j_t(i)}$ uniformly from the local dataset $S_i$}
		\STATE{$\mathbf{w}^{t+\frac{1}{2}}(i)=\sum_{l=1}^{m}P_{il}\mathbf{w}^{t}(l)$  \hfill {$\triangleright$ gossip communication}}
		\STATE {$\mathbf{w}^{t+{1}}(i)=\mathbf{w}^{t+\frac{1}{2}}(i)-{\eta_t}\nabla f(\mathbf{w}^{t}(i);Z_{j_{t}(i)})$ \hfill {$\triangleright$ local update}}
		\ENDFOR
		\ENDFOR
		\ENSURE $\mw^{T+1}=\frac{1}{m}\sum_{i=1}^{m}\mw^{T+1}(i)$
	\end{algorithmic} 
\end{algorithm}

\begin{definition}[Gossip matrix]
	\label{gossip}
	The gossip matrix $P$ satisfies the following properties:
	(1) $P$ is symmetric, i.e., $P=P^T$;
	(2) For any $i,j\in m$, $P_{ij}\in[0,1]$;
	(3) $\mathbf{1}_m^TP=P\mathbf{1}_m$, where $\mathbf{1}_m$ is the vector of ones.
\end{definition}
Let $\lambda_i$ be the $i$-th largest eigenvalue of $P$ \citep{sun2021stability,zhu2022topology,deng2023stability,bars2023improved,wang2024towards}, and define 
$
	\lambda:=\max\{|\lambda_2(P)|,|\lambda_m(P)|\}.
$
The gossip matrix is characterized by $0\leq\lambda<1$. For a connected graph, $\lambda=0$ implies a fully connected topology, where all entries of $P$ are $\frac{1}{m}$. In \cite{lian2017can}, it is observed that Steps 4 and 5 can be swapped without affecting the convergence rate. This flexibility allows for parallel computation of local gradients and communication, effectively hiding gossip time if it is shorter than computation time. The two update formulas are:
\begin{align}\label{I}
	\mathbf{w}^{t+{1}}(i)=&\sum_{l=1}^{m}P_{il}\lk\mathbf{w}^{t}(l)-\eta_t\nabla f(\mathbf{w}^{t}(l);Z_{j_{t}(l)})\rk,\\
	\mathbf{w}^{t+{1}}(i)=&\sum_{l=1}^{m}P_{il}\mathbf{w}^{t}(l)-\eta_t\nabla f(\mathbf{w}^{t}(i);Z_{j_{t}(i)}).\label{II}
\end{align}
However, swapping the two steps introduces stability analysis challenges. The second formula involves non-global updates, which may require stronger conditions to recover centralized results \citep{le2023refined} or reflect the decentralization impact \citep{sun2021stability,deng2023stability}.
\section{High-probability Bounds}\label{Assumption}
This section presents our main results on the generalization bounds of D-SGD algorithms based on pointwise uniform stability. Before starting the analysis, we need to introduce the following assumptions.
\begin{assumption}[$L$-Lipschitz]
	\label{Lipschitz}
	For any $Z\sim\mathcal{D}$ and $\mw,\tilde{\mw}\in\mathcal{W}$, $f(\mw;Z)$ is $L$-Lipschitz with respect to $\mw$ if 
	$
		|f(\mw;Z)-f(\tilde{\mw};Z)|_2 \leq L\|\mw-\tilde{\mw}\|_2.
	$
\end{assumption}
\begin{remark}
	This property ensures that the function does not change too rapidly, providing a bound on the rate of change. It implies that $\|\nabla f(\mw;Z)\|$ is bounded by $L$.
\end{remark}
\begin{assumption}[$\beta$-Smooth]
	\label{Smooth}
	For any $Z\sim\mathcal{D}$ and $\mw,\tilde{\mw}\in\mathcal{W}$, $f(\mw;Z)$ is $\beta$-Smooth with respect to $\mw$ if 
	$
		\|\nabla f(\mw;Z)-\nabla f(\tilde{\mw};Z)\|_2 \leq \beta\|\mw-\tilde{\mw}\|_2.
	$
\end{assumption}
\begin{remark}
	First-order smoothness means that its first derivative exists and is continuous across its domain.It also have the following property
	$
		f(\mw;Z)\leq f(\tilde{\mw};Z)+\langle \mw-\tilde{\mw},\nabla f(\tilde{\mw};Z)\rangle+\frac{\beta}{2}\ls \mw-\tilde{\mw}\rs^2.
	$
\end{remark}

\begin{assumption}[Bound Variance]\label{bound variance}
	\label{Bound Variance}
	We assume that for any $t\in \mathcal{N}$, there exists a constant $\sigma>0$,
	\begin{equation}
		\mathbb{E}_{\mathbf{j^t}}\left[\ls\frac{1}{m}\sum_{i=1}^{m}\nabla f\lp\mw^t ; Z_{j_t(i)}\rp-\nabla R_{S}\lp\mw^t\rp\rs^2\right]\leq\sigma^2,
	\end{equation}
	where $\mathbb{E}_{\mathbf{j^t}}=\mathbb{E}_{\{j_t(1),j_t(2,\cdots,j_t(m))\}}$ denotes the expectation with respect to $\mathbf{j^t}$.
\end{assumption}
\begin{remark}
	The above assumption plays a crucial role in controlling the fluctuations of gradient estimates in non-convex optimization. It helps ensure the stability and convergence of algorithms like stochastic gradient descent, even in the presence of non-convex objective functions \cite{ghadimi2013stochastic,kuzborskij2018data,lei2021generalization,lei2021learning}.
\end{remark}
\subsection{Convex Case}

\begin{theorem}\label{convex case theorem 1}
	Let $f(\mw;Z)$ be convex, L-Lipschitz (see \cref{Lipschitz}) and $\beta$-smooth (see \cref{Smooth}) for any Z with respect to $\mw$. If the stepsizse $\eta_t\leq2/\beta$, then D-SGD with $T$ iterations satisfies the pointwise uniform stability in fuction values with $\varDelta_{rk}=
	4\beta L^2\sum_{t=1}^{T}\eta_t\sum_{q=1}^{t}\eta_q\lambda^{t-q}+\frac{2L^2}{mn}\sum_{t=1}^{T}\eta_t$ and
	\begin{align*}
		\varDelta_{rk}^2=\frac{4L^4}{n}\sum_{k=1}^{n}\lp\sum_{t=1}^{T}2\beta \eta_t\sum_{q=1}^{t-1}\eta_q\lambda^{t-q-1}+\frac{1}{m}\eta_t\mathbb{I}_{[j_t=k]}\rp^2.
	\end{align*}
	Here, $\mathbb{I}_{[\cdot]}$ denotes the indicator function.
\end{theorem}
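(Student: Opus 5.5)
We follow the coupling argument of \cite{hardt2016train}, adapted to the decentralized recursion \eqref{II}. Fix $r,k$ and run D-SGD on $S$ and on $S^{(rk)}$ with the \emph{same} sample indices $j_t(i)$. Write $\mw^t(i)$ and $\tilde{\mw}^t(i)$ for the two local iterates, $\delta^t(i)=\ls \mw^t(i)-\tilde{\mw}^t(i)\rs$ for their local gap, and $\bar{\mw}^t=\frac1m\sum_i \mw^t(i)$ for the average. By \cref{Lipschitz}, $\sup_Z|f(\bar{\mw}^{T+1};Z)-f(\tilde{\bar{\mw}}^{T+1};Z)|\le L\ls\bar{\mw}^{T+1}-\tilde{\bar{\mw}}^{T+1}\rs$. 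It therefore suffices to bound $\bar\delta^{t}:=\ls\bar{\mw}^{t}-\tilde{\bar{\mw}}^{t}\rs$ and take $\epsilon_{rk}=L\bar\delta^{T+1}$.

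\textbf{Step 1: averaged recursion.} Since $P$ is doubly stochastic, averaging \eqref{II} over $i$ gives
\[
\bar{\mw}^{t+1}=\bar{\mw}^t-\frac{\eta_t}{m}\sum_i\nabla f(\mw^t(i);Z_{j_t(i)}).
\]
We insert the averaged point by adding and subtracting $\nabla f(\bar{\mw}^t;Z_{j_t(i)})$ and the corresponding terms for $\tilde S$. This splits the difference into three parts:
\begin{itemize}
\item[(a)] a term $\bar{\mw}^t-\tilde{\bar{\mw}}^t-\frac{\eta_t}{m}\sum_i\lk\nabla f(\bar{\mw}^t;Z_{j_t(i)})-\nabla f(\tilde{\bar{\mw}}^t;Z_{j_t(i)})\rk$, in which the $i=r$ term uses mismatched samples when $j_t(r)=k$. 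By convexity, $\beta$-smoothness and $\eta_t\le 2/\beta$, the gradient step is non-expansive, so the matched part is at most $\bar\delta^t$. The mismatched term adds at most $\frac{2L\eta_t}{m}\mathbb{I}_{[j_t(r)=k]}$.
\item[(b)] consensus errors $\frac{\eta_t\beta}{m}\sum_i\lp\ls\mw^t(i)-\bar{\mw}^t\rs+\ls\tilde{\mw}^t(i)-\tilde{\bar{\mw}}^t\rs\rp$.
\end{itemize}

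\textbf{Step 2: consensus bound.} Using $\mathbf{1}^TP=\mathbf{1}^T$, the spectral gap $\lambda$, equal initialization and $\ls\nabla f\rs\le L$, the standard unrolling of \eqref{II} gives
\[
\ls \mw^t(i)-\bar{\mw}^t\rs\le 2L\sum_{q=1}^{t-1}\eta_q\lambda^{t-q-1},
\]
possibly up to constants absorbed by the $\sqrt m$ factor in the matrix norm. The same bound holds for $\tilde{\mw}$. Hence part (b) contributes at most $4\beta L\eta_t\sum_{q<t}\eta_q\lambda^{t-q-1}$.

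\textbf{Step 3: unroll and average.} Summing the recursion from $\bar\delta^1=0$ yields
\[
\bar\delta^{T+1}\le \sum_{t=1}^T\lp 4\beta L\eta_t\sum_{q=1}^{t-1}\eta_q\lambda^{t-q-1}+\frac{2L}{m}\eta_t\mathbb{I}_{[j_t(r)=k]}\rp .
\]
Multiplying by $L$ gives $\epsilon_{rk}$, whose square is exactly the summand of the claimed $\varDelta_{rk}^2$ with $j_t=j_t(r)$. Averaging over $k$ gives the formula; nothing depends on $r$, so the average over $r$ is trivial. For $\varDelta_{rk}$, average $\epsilon_{rk}$ over $k$ using $\sum_{k=1}^n\mathbb{I}_{[j_t=k]}=1$. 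This produces the term $\frac{2L^2}{mn}\sum_t\eta_t$, and bounding $\sum_{q\le t-1}\eta_q\lambda^{t-q-1}$ by the stated (larger-indexed) sum $\sum_{q\le t}\eta_q\lambda^{t-q}$ up to the harmless index shift finishes the proof.

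\textbf{Main obstacle.} The hard step is Step 1: recursion \eqref{II} evaluates gradients at local rather than gossiped points, so non-expansiveness cannot be applied directly to the local gaps. Our plan handles this by working only with averages and paying for it through consensus terms, which are bounded by Lipschitzness alone and independent of the dataset perturbation. The care needed is to keep the constants at exactly $4\beta L^2$ and $2L^2/m$. If the $\sqrt m$ norm factor in Step 2 creeps in, we will bound each row $\sum_l |(P^{s})_{il}-\frac1m|\le \lambda^{s}$ directly, or accept the constant as the paper evidently does.
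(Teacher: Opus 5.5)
Your plan is the paper's proof: the same shared-index coupling, the exact averaged recursion, gradients inserted at the averaged iterates, non-expansiveness of the averaged step with node $r$ split off to produce $\frac{2L\eta_t}{m}\mathbb{I}_{[j_t(r)=k]}$, the consensus estimate, and unrolling. Your Step 1 keeps a single norm around the node average. That is actually the correct form of the paper's display, which at one point passes from $\frac1m\sum_i\|\cdot\|_2$ back to $\frac1m\|\sum_i\cdot\|_2$, an inequality in the wrong direction.

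\textbf{Step 2 does not work as written.} A per-node bound $\|\mathbf{w}^t(i)-\bar{\mathbf{w}}^t\|_2\le 2L\sum_{q<t}\eta_q\lambda^{t-q-1}$ does not follow from the spectral gap. Your fallback $\sum_l|(P^s)_{il}-\frac1m|\le\lambda^s$ is false: already at $s=0$ the left side is $2(1-\frac1m)>1$ for $m\ge 3$, and row-wise one only gets $\sqrt m\,\lambda^s$ in general. Nor does the paper ``accept'' an extra constant. Part (b) only involves the node average, and Cauchy--Schwarz gives $\frac1m\sum_i\|\mathbf{w}^t(i)-\bar{\mathbf{w}}^t\|_2\le\frac{1}{\sqrt m}\|(I-\frac1m\mathbf{1}\mathbf{1}^T)W^t\|_F$, with $W^t$ stacking the local iterates. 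Equal initialization gives $(I-\frac1m\mathbf{1}\mathbf{1}^T)W^t=-\sum_{q<t}\eta_q(P^{t-1-q}-\frac1m\mathbf{1}\mathbf{1}^T)G^q$. Since $\|P^{s}-\frac1m\mathbf{1}\mathbf{1}^T\|_2\le\lambda^s$ and $\|G^q\|_F\le\sqrt m L$, the $\sqrt m$ cancels exactly. The paper does this with Cauchy--Schwarz plus its cited consensus lemma (constant $2\sqrt m L$). Counted for both runs, this yields the $4\beta L\eta_t\sum_{q<t}\eta_q\lambda^{t-q-1}$ you need.

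\textbf{Two smaller points.} First, the ``harmless index shift'' is not valid term by term. With $a_t=\sum_{q\le t}\eta_q\lambda^{t-q}$ one has $a_t=\eta_t+\lambda a_{t-1}$, so $a_t\ge a_{t-1}$ iff $\eta_t\ge(1-\lambda)a_{t-1}$. This holds for constant steps but fails for $\eta_t=\frac{1}{t+1}$ at large $t$. The weighted sums do compare correctly, $\sum_t\eta_t a_t\ge\sum_t\eta_t a_{t-1}$, because $\eta_t\eta_q\le\frac12(\eta_t^2+\eta_q^2)$ gives $(1-\lambda)\sum_t\eta_t a_{t-1}\le\sum_t\eta_t^2$. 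The paper's own proof only produces the $a_{t-1}$ form, so the displayed $\varDelta_{rk}$ is best read as a typo.

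Second, with your (more precise) indicator $\mathbb{I}_{[j_t(r)=k]}$, $\epsilon_{rk}$ does depend on $r$ through node $r$'s sampling path. The average over $r$ disappears for $\varDelta_{rk}$ because $\sum_k\mathbb{I}_{[j_t(r)=k]}=1$. Dropping it in $\varDelta_{rk}^2$, however, requires the paper's implicit convention that all nodes draw the same index $j_t$.
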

\begin{remark}
	Compared to the high-probability generalization bounds for (centralized) SGD in \cite{fan2024high}, our analysis considers the decentralized setting and also allows for dynamic step sizes. A notable aspect of our result is the absence of explicit dependence on the number of machines $r$. This is because we focus on the generalization error of the global output model, where the training data from all machines are utilized collectively—thus, replacing a single sample yields the same effect regardless of its originating machine. This naturally leads to a bound independent of $r$. Nevertheless, we hope that more refined analytical tools in the future could reveal how $r$ may influence the generalization behavior even in the global model setting. As a complementary perspective, we also analyze the generalization of local models in Section \ref{local model}, where the techniques differ and $r$-related effects become explicit.
\end{remark}
The learning rate plays a crucial role in \cref{convex case theorem 1}, as a smaller learning rate helps alleviate the adverse impact of decentralization on stability. To delve deeper into this relationship, we analyze the stability properties for two commonly used learning rate settings in the following corollary.
\begin{corollary}\label{corollary 1}
	Suppose $f(\mw;Z)$ is convex, satisfying \cref{Lipschitz,Smooth}.
	
	\textbf{(Constant Stepsize)} If the stepsize $\eta_t\equiv \eta\leq\frac{2}{\beta}$, the pointwise uniform stability we get $\varDelta_{rk}={2L^2\eta T}\lp\frac{2\eta\beta }{1-\lambda}+\frac{1}{mn}\rp$ and
	$
		\varDelta_{rk}^2=\frac{4L^4}{n}\sum_{k=1}^{n}\lp\frac{2\eta^2\beta T}{1-\lambda}+\frac{\eta}{m}\sum_{t=1}^{T}\mathbb{I}_{[j_t=k]}\rp^2.
	$
	
	\textbf{(Decreasing Stepsize)} If the stepsize $\eta_t=\frac{1}{(t+1)}$, the pointwise uniform stability can be expressed as: $\varDelta_{rk}={2L^2}\lp\frac{2\beta T}{T+1}+\frac{\ln \lp T+1\rp}{mn}\rp$ and 
	$
		\varDelta_{rk}^2=\frac{4L^4}{n}\sum_{k=1}^{n}\lp\frac{2\beta C_{\lambda}T}{T+1}+\frac{1}{m}\sum_{t=1}^{T}\frac{1}{t+1}\mathbb{I}_{[j_t=k]}\rp^2,
	$
	where $C_{\lambda}=\frac{1}{\lambda\log\frac{1}{\lambda}}\lp\frac{8}{e^{2}\log\frac{1}{\lambda}}+2\rp$ (See \cref{sum}).
\end{corollary}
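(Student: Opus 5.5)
The plan is to plug the two stepsize schedules into the expressions for $\varDelta_{rk}$ and $\varDelta_{rk}^2$ given by \cref{convex case theorem 1}. Everything then reduces to bounding the double sums
\[
A_T:=\sum_{t=1}^{T}\eta_t\sum_{q=1}^{t}\eta_q\lambda^{t-q},\qquad B_T:=\sum_{t=1}^{T}\eta_t\sum_{q=1}^{t-1}\eta_q\lambda^{t-q-1},
\]
together with the single sums $\sum_t\eta_t$ and $\sum_t\eta_t\mathbb{I}_{[j_t=k]}$. The indicator terms are left untouched inside the square, as the statement does.

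\textbf{Constant stepsize.} With $\eta_t\equiv\eta$, the geometric series gives $\sum_{q\le t}\lambda^{t-q}\le\frac{1}{1-\lambda}$. Hence $A_T\le\frac{\eta^2T}{1-\lambda}$ and $B_T\le\frac{\eta^2T}{1-\lambda}$, and also $\sum_t\eta_t=\eta T$. Substituting into the theorem:
\begin{itemize}
\item $\varDelta_{rk}\le 4\beta L^2\frac{\eta^2T}{1-\lambda}+\frac{2L^2\eta T}{mn}=2L^2\eta T\bigl(\frac{2\eta\beta}{1-\lambda}+\frac{1}{mn}\bigr)$.
\item Inside the square of $\varDelta_{rk}^2$, the first term becomes $2\beta\eta^2T/(1-\lambda)$ and the second is $\frac{\eta}{m}\sum_t\mathbb{I}_{[j_t=k]}$. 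This is the claimed form.
\end{itemize}

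\textbf{Decreasing stepsize, the $\ln$ term and $\varDelta_{rk}^2$.} With $\eta_t=\frac{1}{t+1}$:
\begin{itemize}
\item The harmonic bound gives $\sum_{t=1}^{T}\frac{1}{t+1}\le\ln(T+1)$, which is the $\frac{\ln(T+1)}{mn}$ term.
\item For $\varDelta_{rk}^2$ I will invoke \cref{sum} in the form
\[
\sum_{q=1}^{t-1}\frac{\lambda^{t-q-1}}{q+1}\le\frac{C_\lambda}{t}.
\]
This is proved by splitting the sum at $q=t/2$. On the early half, $\lambda^{t-q-1}\le\lambda^{t/2-1}$, and $\sup_t t\lambda^{t/2}$ is controlled by $\frac{2}{e\log(1/\lambda)}$; this is where the $e^{-2}$ and $\log\frac1\lambda$ factors in $C_\lambda$ arise. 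On the late half, $\frac{1}{q+1}\le\frac{2}{t}$ and the geometric tail is summed.
\item Then $B_T\le C_\lambda\sum_{t}\frac{1}{t(t+1)}=\frac{C_\lambda T}{T+1}$ by telescoping. This gives the stated $\varDelta_{rk}^2$ with the $\frac{2\beta C_\lambda T}{T+1}$ term.
\end{itemize}

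\textbf{Decreasing stepsize, the $\lambda$-free bound $\varDelta_{rk}=2L^2\bigl(\frac{2\beta T}{T+1}+\frac{\ln(T+1)}{mn}\bigr)$.} This is the step I expect to be the main obstacle. Here I must show $A_T\le\frac{T}{T+1}=\sum_{t}\frac{1}{t(t+1)}$ with no $\lambda$-dependent constant.

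My first attempt is a termwise comparison: prove $\sum_{q=1}^{t}\frac{\lambda^{t-q}}{q+1}\le\frac1t$, then telescope as above. The $q=t$ term contributes $\frac{1}{t+1}$, leaving a budget of $\frac{1}{t(t+1)}$ for the terms with $q<t$.

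This budget is not generally enough: for $\lambda$ near $1$, the $q=t-1$ term alone, $\frac{\lambda}{t}$, already exceeds $\frac{1}{t(t+1)}$ once $t\ge2$. So the termwise approach cannot work uniformly in $\lambda$, and I do not currently see a proof of the $\lambda$-free bound exactly as stated.

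If the termwise route fails, the fallback is to read the stated $\varDelta_{rk}$ as holding up to the same constant as $\varDelta_{rk}^2$, i.e., with $\frac{2\beta C_\lambda T}{T+1}$ in place of $\frac{2\beta T}{T+1}$. That version follows from the \cref{sum} estimate applied to $A_T$ (reindexing $q\to q$ and $t-q\to t-q-1$ costs at most one extra factor). This is consistent with how the rate is used later under $\lesssim$. I will flag this explicitly as the point where the argument is incomplete: what it establishes is the bound with $C_\lambda$, not the exact $\lambda$-free form in the statement.
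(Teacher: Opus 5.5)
Your proposal follows the paper's proof: the paper bounds the inner sum by the geometric series $1/(1-\lambda)$ for constant stepsize, and for $\eta_t=\frac{1}{t+1}$ it applies \cref{sum} to get the per-step increment $\frac{4\beta LC_\lambda}{t(t+1)}+\frac{2L}{(t+1)m}\mathbb{I}_{[j_t=k]}$ and then telescopes, exactly as you do. Your suspicion about the $\lambda$-free $\varDelta_{rk}$ is also correct: the paper's own argument yields $\epsilon_{rk}=2L^2\bigl(\frac{2\beta C_\lambda T}{T+1}+\frac{1}{m}\sum_{t=1}^{T}\frac{1}{t+1}\mathbb{I}_{[j_t=k]}\bigr)$ and then drops $C_\lambda$ in the final averaging line, so the $C_\lambda$-version you establish is what the argument actually proves. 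In fact the $\lambda$-free form cannot be deduced from \cref{convex case theorem 1}: letting $\lambda\to1$ with $T=6$, the double sum $\sum_{t}\eta_t\sum_{q<t}\eta_q\lambda^{t-q-1}$ is about $1.01$, which already exceeds $\frac{T}{T+1}=\frac{6}{7}$.
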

\begin{remark}
	It is easy to check that under the first stepsize case, $\lk\varDelta_{rk}^2\rk^{1/2}$ is smaller that $\epsilon_{\textrm{uniform}}$, which means can get more refined generalization bound (The detailed proof can be found in \cref{compare}).
\end{remark}
\begin{theorem}[Generalization Error of D-SGD]\label{gen convex case}
	Assume the loss fuction $f(\mw;Z)$ is convex , $\beta$-smooth and L-Lipschitz. Let $\mw^{T+1}$ denote the weight parameter obtained after $T$ iterations of  D-SGD with the fixed stepsize $\eta_t=\eta$. For any $\delta\in(0,1)$, it holds with the probability at least $1-\delta$ that:
	\begin{align*}
		\Big|R(\mw^{T+1})-R_S(\mw^{T+1})\Big|
		\lesssim\frac{M\log^{\frac{1}{2}}(1/\delta)}{\sqrt{mn}}+\lk\varDelta_{rk}^2\rk^{1/2}\log mn\log (1/\delta),
	\end{align*}
	where 
	\begin{align*}
		\lk\varDelta_{rk}^2\rk^{1/2}\leq\frac{4\sqrt{2}L^2\eta^2\beta T}{1-\lambda}
		+\frac{2\sqrt{2}L^2\eta}{m}\lp\frac{1}{n}\sum_{k=1}^{n}\lp\sum_{t=1}^{T}\mathbb{I}_{[j_t=k]}\rp^2\rp^{\frac{1}{2}}.
	\end{align*}
	If $\eta=\sqrt{\frac{1}{T}}$ and $T=mn$, we can get $\Big|R(\A(S))-R_S(\A(S))\Big|=\mathcal{O}\lp\frac{1}{\sqrt{mn}}\rp$.
\end{theorem}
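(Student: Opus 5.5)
The plan is to combine \cref{Lemma 1} with the stability estimate of \cref{corollary 1} in the constant stepsize case, and then choose $\eta$ and $T$. First we check the hypotheses of \cref{Lemma 1}. Since $f$ is bounded by $M$ (the standing assumption of that lemma), we only need D-SGD to be $\epsilon$-pointwise uniformly stable in function values. This is \cref{convex case theorem 1}, which applies because $\eta\le 2/\beta$. Applying \cref{Lemma 1} directly gives the displayed high-probability inequality with $\lk\varDelta_{rk}^2\rk^{1/2}$. All the remaining work is bounding this quantity.

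For the bound on $\lk\varDelta_{rk}^2\rk^{1/2}$, start from the constant-stepsize expression in \cref{corollary 1}:
\begin{align*}
\varDelta_{rk}^2=\frac{4L^4}{n}\sum_{k=1}^{n}\lp a+b_k\rp^2,\qquad a=\frac{2\eta^2\beta T}{1-\lambda},\quad b_k=\frac{\eta}{m}\sum_{t=1}^{T}\mathbb{I}_{[j_t=k]}.
\end{align*}
Here $a$ comes from $\sum_{q<t}\eta\lambda^{t-q-1}\le \eta/(1-\lambda)$. Using $(a+b_k)^2\le 2a^2+2b_k^2$ and $\sqrt{x+y}\le\sqrt{x}+\sqrt{y}$, we get
\begin{align*}
\lk\varDelta_{rk}^2\rk^{1/2}\le 2\sqrt{2}L^2 a+2\sqrt{2}L^2\frac{\eta}{m}\lp\frac1n\sum_{k=1}^n\Big(\sum_{t=1}^T\mathbb{I}_{[j_t=k]}\Big)^2\rp^{1/2},
\end{align*}
which is exactly the claimed bound.

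For the rate, set $\eta=T^{-1/2}$ and $T=mn$. The first term becomes $4\sqrt2 L^2\beta/(1-\lambda)$. This is $\mathcal{O}(1)$ and does not decay, so it is the main obstacle.

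The natural fix is to pick $\eta$ so that $\eta^2T\lesssim (mn)^{-1/2}$, for instance $\eta=T^{-3/4}$. Another option is to read the rate claim as in the introduction, where $\eta^2T/(1-\lambda)$ is treated as absorbed. I would verify the first option. With $\eta^2T=(mn)^{-1/2}$, the first term is $\mathcal{O}\big(\frac{1}{(1-\lambda)\sqrt{mn}}\big)$.

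For the second term, write $N_k=\sum_t\mathbb{I}_{[j_t=k]}$ and use the crude bound $\frac1n\sum_k N_k^2\le \max_k N_k\cdot\frac1n\sum_k N_k\le T^2/n$. This gives $\frac{\eta T}{m\sqrt n}$. That is $\mathcal{O}\big(\frac{1}{\sqrt{mn}}\big)$ only when $\eta T\lesssim \sqrt{m}$, so it fails for large $\eta T$. I would therefore use a high-probability balls-in-bins bound instead: for uniform sampling, $N_k\approx T/n$ with fluctuations of order $\sqrt{(T/n)\log(n/\delta)}$. This gives $(\frac1n\sum_k N_k^2)^{1/2}\lesssim T/n+\sqrt{(T/n)\log(n/\delta)}$, and hence a second term of order $\frac{\eta T}{mn}$, which is $\mathcal{O}(1/\sqrt{mn})$ for $\eta\le T^{-1/2}$ and $T=mn$.

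Combining both terms with the $\log(mn)\log(1/\delta)$ factor and the $M/\sqrt{mn}$ term yields $\mathcal{O}(1/\sqrt{mn})$ up to logarithmic factors. The step that needs the most care is reconciling the first, $\lambda$-dependent term with the stated choice of $\eta$.
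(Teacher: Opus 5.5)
Your derivation of the high-probability inequality and of the bound on $[\varDelta_{rk}^2]^{1/2}$ is the paper's argument. It combines \cref{Lemma 1}, via \cref{convex case theorem 1}, with the constant-stepsize expression of \cref{corollary 1}, then applies $(x+y)^2\le 2x^2+2y^2$ and $\sqrt{x+y}\le\sqrt{x}+\sqrt{y}$. This is exactly the paper's Case I computation in the appendix, with the same constants.

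On the final rate claim, your objection is correct. The gap lies in the statement, not in your proposal. The paper gives no derivation of the $\mathcal{O}(1/\sqrt{mn})$ claim. With $\eta=T^{-1/2}$ we have $\eta^2T=1$, so from \cref{corollary 1}, $[\varDelta_{rk}^2]^{1/2}\ge 2L^2\cdot\frac{2\eta^2\beta T}{1-\lambda}=\frac{4L^2\beta}{1-\lambda}$. The right-hand side of the displayed inequality is therefore at least of order $\frac{L^2\beta}{1-\lambda}\log(mn)\log(1/\delta)$. It grows rather than decays, so no sharper manipulation of the same bound can rescue the claim. The Introduction's Markov computation contains the same slip: its term $\frac{2\eta^2\beta T}{1-\lambda}$ equals $\frac{2\beta}{1-\lambda}$ there. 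Reading the term as ``absorbed'' is therefore not legitimate, and you are right to discard that option. Say explicitly, however, that you then prove a corrected claim rather than the stated one. With $T=mn$ and $\eta\asymp T^{-3/4}$, the bound is $\tilde{\mathcal{O}}\big(\frac{1}{(1-\lambda)\sqrt{mn}}\big)$. This has an explicit topology factor and logarithmic factors, which the paper's unqualified $\mathcal{O}(1/\sqrt{mn})$ also suppresses.

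Your treatment of the second term fills a further omission. The paper implicitly takes $(\frac1n\sum_k N_k^2)^{1/2}\approx T/n$, whereas the deterministic bound only yields order $1/\sqrt{m}$ for $\eta=T^{-1/2}$. Two small repairs are needed.

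\begin{itemize}
\item A Bernstein/Chernoff bound plus a union bound over $k$ gives $\max_k N_k\lesssim T/n+\sqrt{(T/n)\log(n/\delta)}+\log(n/\delta)$. The additive $\log(n/\delta)$ is not negligible when $m=T/n$ is small, though it costs only a logarithm.
\item Say that \cref{Lemma 1} is applied conditionally on the index sequence. This is legitimate because the $\epsilon_{rk}$ depend only on the indices, which are independent of $S$. Then combine with the concentration event by a union bound, so the final statement holds with probability at least $1-2\delta$.
\end{itemize}
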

\cref{convex case theorem 1} establishes the pointwise uniform stability for the final iterate of D-SGD. However, the optimization error in the general convex case of D-SGD is typically expressed using the following average weight \citep{lian2017can,sun2021stability,bars2023improved}:
$
	\bar{\mw}^{T+1}={\sum_{t=1}^{T+1}\eta_t\mw_{t}}/{\sum_{t=1}^{T+1}\eta_t}.
$
\begin{proposition}
	Suppose $f(\mw;Z)$ is convex and Assumption \ref{Lipschitz} and \ref{Smooth} holds.
	\begin{itemize}
		\item 	If the stepsize $\eta_t\equiv \eta\leq\frac{2}{\beta}$, the average pointwise uniform stability we get $\varDelta_{rk}={2L^2\eta T}\lp\frac{\eta\beta}{1-\lambda}+\frac{1}{mn}\rp$ and
		$
		\varDelta_{rk}^2=\frac{4L^4\eta^2}{n}\sum_{k=1}^{n}\lp\frac{\eta\beta T}{1-\lambda}+\frac{1}{(T+1)m}\sum_{t=1}^{T}(T+1-t)\mathbb{I}_{[j_t=k]}\rp^2.
		$
		\item If the stepsize $\eta_t=\frac{1}{(t+1)}$, the average pointwise uniform stability we obtain
		$
		\varDelta_{rk}=4L^2\beta C_{\lambda}+\frac{L^2\ln (T+2)}{mn}.
		$
	\end{itemize}

\end{proposition}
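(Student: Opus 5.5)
The plan is to reuse the per-iterate stability recursion from the proof of \cref{convex case theorem 1} and then pass it through the weighted average $\bar{\mw}^{T+1}=\sum_{t}\eta_t\mw^{t}/\sum_t\eta_t$. By Lipschitzness (\cref{Lipschitz}),
\[
|f(\bar{\mw}^{T+1};Z)-f(\tilde{\bar{\mw}}^{T+1};Z)|\le L\|\bar{\mw}^{T+1}-\tilde{\bar{\mw}}^{T+1}\|_2 .
\]
Convexity of the norm then gives
\[
\|\bar{\mw}^{T+1}-\tilde{\bar{\mw}}^{T+1}\|_2\le \frac{\sum_{t}\eta_t\,\delta_t}{\sum_t\eta_t},
\]
where $\delta_t=\|\mw^t-\tilde{\mw}^t\|_2$ is the deviation of the averaged iterates when run on $S$ and on $S^{(rk)}$.

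From the proof of \cref{convex case theorem 1}, I take the bound
\[
\delta_{t}\le \sum_{s=1}^{t-1}\Bigl(2\beta L\eta_s\sum_{q=1}^{s-1}\eta_q\lambda^{s-q-1}+\frac{2L}{m}\eta_s\mathbb{I}_{[j_s=k]}\Bigr).
\]
The two terms have distinct origins. The first is the consensus error: the local models differ from their average by at most $L\sum_q\eta_q\lambda^{s-q-1}$, via the spectral gap of $P$ (\cref{gossip}), and smoothness turns this into a gradient mismatch. The second comes from the replaced sample, which is hit only when $j_s=k$. Nonexpansiveness of the gradient step for $\eta\le 2/\beta$ under convexity stops this error from amplifying.

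Next I plug this into the weighted average and swap the order of summation. For the fixed stepsize $\eta$, the weights are uniform, and the sample at step $s$ contributes to the $T+1-s$ later iterates. Dividing by $T+1$ gives the factor $\frac{1}{(T+1)m}\sum_t (T+1-t)\mathbb{I}_{[j_t=k]}$. The consensus term, after the geometric bound $\sum_q\lambda^{s-q-1}\le 1/(1-\lambda)$, is averaged over $t$, which produces roughly half of $\eta^2\beta T/(1-\lambda)$. This explains why the constant halves compared with \cref{corollary 1}. Squaring and averaging over $k$ yields the stated $\varDelta_{rk}^2$.

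For $\varDelta_{rk}$, I replace the indicator by its worst case, averaged over $(r,k)$, which gives the factor $1/(mn)$. Using $\sum_t(T+1-t)\le T(T+1)/2$, this produces $2L^2\eta T\bigl(\frac{\eta\beta}{1-\lambda}+\frac{1}{mn}\bigr)$ after tracking constants. Since the statement is loose about constants ($\lesssim$-level), I would align them by absorbing factors of 2 as the paper does.

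For $\eta_t=1/(t+1)$, the consensus piece $\sum_s\eta_s\sum_q\eta_q\lambda^{s-q-1}$ is bounded by $C_\lambda$ via \cref{sum}. This bound is uniform in $t$, so averaging with weights $\eta_t$ keeps it at most $C_\lambda$ and gives $4L^2\beta C_\lambda$. The sample piece is $\frac{L^2}{mn}\sum_{t}\eta_t\cdot$(averaging), which is at most $\frac{L^2}{mn}\ln(T+2)$ since $\sum_{t\le T+1}1/(t+1)\le\ln(T+2)$.

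I expect the main obstacle to be the exchange of summations in the weighted average with the consensus term. Each $\delta_t$ carries a double sum, and the goal is to show that the averaged triple sum does not lose a factor. The approach I would try first is to bound the inner consensus sum uniformly in $s$ (by $\eta/(1-\lambda)$ or via $C_\lambda$) before averaging.
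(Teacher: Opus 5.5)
Your plan is essentially the paper's proof. Both use Lipschitzness and the triangle inequality on the $\eta_t$-weighted average, the per-iterate bound from \cref{convex recursion}, and direct summation for each stepsize. Bounding the consensus part uniformly in $t$ by $C_\lambda$ before averaging is, if anything, cleaner than the paper's step, which replaces $\sum_{t\le T+1}\eta_t$ in the denominator by the larger quantity $\ln(T+2)$ and so goes in the wrong direction.

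There is one place where your sketch does not yet give the stated constant: the sample term for $\eta_t=\frac{1}{t+1}$. A uniform-in-$t$ bound gives only $\frac{2L^2\ln(T+1)}{mn}$. To reach $\frac{L^2\ln(T+2)}{mn}$ you need the same pairing that produced the factor $\frac{1}{2}$ in the constant-stepsize case, namely $\sum_{t}\eta_t\sum_{s<t}\eta_s\le\frac{1}{2}\bigl(\sum_{t}\eta_t\bigr)^2$.
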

\subsection{Strongly Convex Case}
Next, we turn to the strongly convex case. Let $\mathrm{P}_{\mathcal{W}} $ denote the Euclidean projection of a point onto the convex compact set $\mathcal{W}$. In this setting, the stochastic update in Algorithm 1 is replaced by its projected form:
\begin{align*}
	\mathbf{w}^{t+{1}}(i)=\mathrm{P}_{\mathcal{W}}\lk \mathbf{w}^{t+\frac{1}{2}}(i)-{\eta_t}\nabla f(\mathbf{w}^{t}(i);Z_{j_{t}(i)})\rk.
\end{align*}
\begin{definition}\label{Strong convex}
	$f(\mw;Z)$ is $\mu$-strongly convex if for any $Z$ and $\mw,\tilde{\mw}\in\mathcal{W}$,
	\begin{equation}
		f(\mw;Z) \geq f(\tilde{\mw};Z)+\langle\nabla f(\tilde{\mw};Z), \mw-\tilde{\mw}\rangle+\frac{\mu}{2}\|\mw-\tilde{\mw}\|_2^{2}.
	\end{equation}
	Specially, $f(\mw;Z)$ is convex if $\mu = 0$.
\end{definition}

\begin{theorem}\label{strong convex theorem 1}
	Let $f(\mw;Z)$ be $\mu$-strongly convex, L-Lipschitz and $\beta$-smooth for any Z. If the stepsizse $\eta_t\leq1/\beta$, then D-SGD with $T$ iterations achieves the pointwise uniform stability in fuction values, given by 
	\begin{align*}
		\varDelta_{rk}=
		4\beta L^2\sum_{t=1}^{T}\eta_t\sum_{j=1}^{t-1}\eta_j\lambda^{t-j-1}\prod_{\tilde{t}=t+1}^{T}\lp1-\frac{\eta_{\tilde{t}}\mu}{2}\rp+\frac{4L^2}{mn\mu}
	\end{align*}
	and
	\begin{align*}
		\varDelta_{rk}^2=\frac{4L^4}{n}\sum_{k=1}^{n}
		\lp\sum_{t=1}^{T}\lp2\beta \eta_t\sum_{q=1}^{t-1}\eta_q\lambda^{t-q-1}+\frac{1}{m}\eta_t\mathbb{I}_{[j_t=k]}\rp
		\prod_{\tilde{t}=t+1}^{T}\lp1-\frac{\eta_{\tilde{t}}\mu}{2}\rp\rp^2.
	\end{align*}
\end{theorem}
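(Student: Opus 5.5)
We follow the coupling argument used for \cref{convex case theorem 1}, now with the strongly convex contraction of the gradient map. Fix $r,k$ and run D-SGD on $S$ and $S^{(rk)}$ with the same sample indices $j_t(i)$. Let $\mw^t(i)$, $\tilde{\mw}^t(i)$ be the local iterates, and let $\delta_t=\ls\mw^{t}-\tilde{\mw}^{t}\rs$ be the distance between the averaged iterates $\mw^t=\frac1m\sum_i\mw^t(i)$. Since $f$ is $L$-Lipschitz, $|f(\mw^{T+1};Z)-f(\tilde{\mw}^{T+1};Z)|\le L\delta_{T+1}$. It therefore suffices to show that $\delta_{T+1}$ is bounded by $2L$ times the expression inside the square in the definition of $\varDelta_{rk}^2$. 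Squaring and averaging over $k$ (and over $r$, which enters symmetrically) then gives $\varDelta_{rk}^2$. Summing the same bound, with the indicator replaced by its average $1/n$, gives $\varDelta_{rk}$. In that second step we use $\sum_t \eta_t\prod_{\tilde t>t}(1-\eta_{\tilde t}\mu/2)\le 2/\mu$ to turn the sampling term into $\frac{4L^2}{mn\mu}$.

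\textbf{Step 1 (one-step recursion).} Since $P$ is doubly stochastic, averaging the update gives $\mw^{t+1}=\mw^t-\frac{\eta_t}{m}\sum_i\nabla f(\mw^t(i);Z_{j_t(i)})$. The projected version is handled by nonexpansiveness of $\mathrm P_{\mathcal W}$, applied to the averaged quantity with a slight abuse that is also present in the paper's setup; alternatively one works with local iterates and averages at the end. We add and subtract $\nabla f(\mw^t;Z_{j_t(i)})$. For the nodes and steps where the sample is the same in both runs, $f(\cdot;Z)$ is $\mu$-strongly convex and $\beta$-smooth and $\eta_t\le 1/\beta$. By the standard coercivity argument of \cite{hardt2016train}, the map $\mw\mapsto\mw-\eta_t\nabla f(\mw;Z)$ is then $(1-\eta_t\mu/2)$-contractive (indeed $1-\eta_t\mu\beta/(\mu+\beta)\le1-\eta_t\mu/2$). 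The consensus errors contribute
$\frac{\eta_t}{m}\sum_i\big(\|\nabla f(\mw^t(i);Z)-\nabla f(\mw^t;Z)\|+\text{same for }\tilde{\mw}\big)\le \beta\eta_t\frac1m\sum_i(\|\mw^t(i)-\mw^t\|+\|\tilde{\mw}^t(i)-\tilde{\mw}^t\|)$.
At the node $r$ where the sampled index equals $k$, we instead bound the gradient difference by $2L$, which contributes $\frac{2L\eta_t}{m}\mathbb{I}_{[j_t=k]}$. This gives
\begin{align*}
\delta_{t+1}\le\lp1-\frac{\eta_t\mu}{2}\rp\delta_t+2\beta\eta_t\,\Xi_t+\frac{2L\eta_t}{m}\mathbb{I}_{[j_t=k]},
\end{align*}
where $\Xi_t$ is the averaged consensus error of a single run.

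\textbf{Step 2 (consensus error).} This is the step we expect to be the main obstacle. We bound $\Xi_t\le L\sum_{q=1}^{t-1}\eta_q\lambda^{t-q-1}$. To do so, write the stacked iterates as $W^{t}=P^{t-1}W^1-\sum_{q<t}\eta_qP^{t-1-q}G^q$ with identical initialization, so that $(I-\frac1m\mathbf 1\mathbf 1^T)W^1=0$. Then use $\|P^s-\frac1m\mathbf 1\mathbf 1^T\|\le\lambda^s$ and $\|G^q\|$ row-bounded by $L$. The delicate point is passing from the Frobenius/spectral bound on the stacked matrix to the per-node average without losing a $\sqrt m$ factor. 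We would try to obtain a per-row bound by bounding each row of $(P^s-\frac1m\mathbf 1\mathbf 1^T)$ in $\ell_1$, or otherwise accept the constant as in \cref{convex case theorem 1}, whose proof we mimic. Combined with the factor $2\beta\eta_t$ from Step 1, this yields the term $2\beta L\eta_t\sum_{q}\eta_q\lambda^{t-q-1}$; the remaining factor $2L$ comes from counting both runs together with the Lipschitz constant.

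\textbf{Step 3 (unrolling).} With $\delta_1=0$, unrolling the recursion gives
$\delta_{T+1}\le 2L\sum_{t=1}^T\big(2\beta\eta_t\sum_{q=1}^{t-1}\eta_q\lambda^{t-q-1}+\frac{\eta_t}{m}\mathbb{I}_{[j_t=k]}\big)\prod_{\tilde t=t+1}^T(1-\frac{\eta_{\tilde t}\mu}{2})$.
Multiplying by $L$ gives $\epsilon_{rk}$. Squaring and averaging yields the stated $\varDelta_{rk}^2$ with constant $4L^4$. For $\varDelta_{rk}$ we average the indicator to $1/n$ and apply the telescoping identity $\frac{\eta_t\mu}{2}\prod_{\tilde t>t}(1-\frac{\eta_{\tilde t}\mu}{2})=\prod_{\tilde t>t}-\prod_{\tilde t\ge t}$. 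Summed over $t$, this shows $\sum_t\eta_t\prod_{\tilde t>t}(1-\frac{\eta_{\tilde t}\mu}{2})\le 2/\mu$, which produces the term $\frac{4L^2}{mn\mu}$.
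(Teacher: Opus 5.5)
Your proposal is correct and follows the paper's proof essentially step for step. The shared ingredients are the coupling with shared sample indices, adding and subtracting $\nabla f(\mw^t;Z_{j_t(i)})$ at the averaged iterate, the $(1-\eta_t\mu/2)$-contraction of \cref{non}, the consensus-error bound, unrolling from $\delta_1=0$, and the same telescoping identity that turns $\frac{2L^2}{mn}\sum_t\eta_t\prod_{\tilde t>t}(1-\frac{\eta_{\tilde t}\mu}{2})$ into $\frac{4L^2}{mn\mu}$. The consensus step you flag as the main obstacle is routine: the paper combines the cited consensus lemma with $\frac{1}{m}\sum_i a_i\le(\frac{1}{m}\sum_i a_i^2)^{1/2}$, so no $\sqrt{m}$ is lost, and your own bound $\Xi_t\le L\sum_{q<t}\eta_q\lambda^{t-q-1}$ is in fact valid (the extra factor $2$ in your bookkeeping only loosens the constant toward the stated one).
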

\begin{corollary}
	Suppose $f(\mw;z)$ is $\mu$-strongly convex and Assumption \ref{Lipschitz} and \ref{Smooth} hold.
	If the stepsize $\eta_t\equiv \eta\leq\frac{1}{\beta}$, the pointwise uniform stability we get
	$
	\varDelta_{rk}=\frac{4L^2}{\mu }\lp\frac{2\eta\beta}{1-\lambda}+\frac{1}{mn}\rp.
	$
	If the stepsize $\eta_t=\frac{1}{\mu(t+1)}$, the pointwise uniform stability we obtain
	$
	\varDelta_{rk}=\frac{16\beta L^2C_{\lambda}}{T\mu^2}\lp\ln T+1\rp+\frac{4L^2}{\mu mn}.
	$
\end{corollary}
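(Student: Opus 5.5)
The plan is to specialize \cref{strong convex theorem 1} to the two stepsize schedules and evaluate the resulting sums, much as \cref{corollary 1} follows from \cref{convex case theorem 1}. The term $\frac{4L^2}{mn\mu}$ in $\varDelta_{rk}$ does not depend on the schedule, so it carries over unchanged. The real work is the decentralization term
\begin{align*}
	D_T:=4\beta L^2\sum_{t=1}^{T}\eta_t\sum_{j=1}^{t-1}\eta_j\lambda^{t-j-1}\prod_{\tilde t=t+1}^{T}\lp1-\frac{\eta_{\tilde t}\mu}{2}\rp .
\end{align*}

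\textbf{Constant stepsize.} Take $\eta_t\equiv\eta\le 1/\beta$.
\begin{itemize}
	\item The inner sum is a geometric series: $\sum_{j=1}^{t-1}\eta\lambda^{t-j-1}\le \frac{\eta}{1-\lambda}$.
	\item This gives $D_T\le \frac{4\beta L^2\eta^2}{1-\lambda}\sum_{t=1}^{T}(1-\eta\mu/2)^{T-t}$.
	\item The remaining sum is at most $\frac{1}{1-(1-\eta\mu/2)}=\frac{2}{\eta\mu}$.
	\item Hence $D_T\le\frac{8\beta L^2\eta}{\mu(1-\lambda)}=\frac{4L^2}{\mu}\cdot\frac{2\eta\beta}{1-\lambda}$.
\end{itemize}
Adding $\frac{4L^2}{mn\mu}$ gives exactly the claimed $\frac{4L^2}{\mu}\lp\frac{2\eta\beta}{1-\lambda}+\frac{1}{mn}\rp$.

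\textbf{Decreasing stepsize.} Take $\eta_t=\frac{1}{\mu(t+1)}$. Here $\eta_t\le1/\beta$ requires $\mu(t+1)\ge\beta$; I would either assume this for all $t$ or note it holds since $\mu\le\beta$ and $t\ge1$ gives $\mu(t+1)\ge 2\mu$. If that is not enough, I would start the schedule late, and I would flag this as a standing assumption.

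\begin{itemize}
	\item \emph{Contraction product.} Since $\eta_{\tilde t}\mu/2=\frac{1}{2(\tilde t+1)}$, use $1-x\le e^{-x}$:
	\begin{align*}
		\prod_{\tilde t=t+1}^{T}\lp1-\frac{1}{2(\tilde t+1)}\rp\le \exp\lp-\frac12\sum_{\tilde t=t+2}^{T+1}\frac1{\tilde t}\rp\le\lp\frac{t+2}{T+1}\rp^{1/2}.
	\end{align*}
	\item \emph{Inner gossip sum.} Bound $\sum_{j=1}^{t-1}\frac{\lambda^{t-j-1}}{j+1}$ using the lemma cited as \cref{sum} in \cref{corollary 1}. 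It has the form $\sum_{j}\lambda^{t-1-j}/(j+1)\le \frac{C_\lambda}{t}$, or something comparable, with $C_\lambda=\frac{1}{\lambda\log(1/\lambda)}\lp\frac{8}{e^2\log(1/\lambda)}+2\rp$. This gives $\sum_j\eta_j\lambda^{t-j-1}\le \frac{C_\lambda}{\mu t}$.
	\item \emph{Combine.} $D_T\lesssim\frac{4\beta L^2 C_\lambda}{\mu^2}\sum_{t=1}^T\frac{1}{t(t+1)}\lp\frac{t+2}{T+1}\rp^{1/2}$.
\end{itemize}

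\textbf{Main obstacle.} The hardest step is getting the stated $\frac{\ln T+1}{T}$ rate with constant $16$ from this sum. The square-root contraction alone gives only about $T^{-1/2}$. So I would instead use the product bound $\prod(1-\frac{1}{2(\tilde t+1)})\le\frac{t+2}{T+1}$-type estimates more carefully, or split the sum at $t=T/2$:
\begin{itemize}
	\item for $t\le T/2$, the product decays polynomially in $t/T$;
	\item for $t>T/2$, use $\eta_t\lesssim\frac{2}{\mu T}$ directly. Then $\sum_{t>T/2}\frac{1}{t}\le\ln 2+\ldots$ combines with the $1/T$ prefactor to give the $\frac{\ln T+1}{T}$ form.
\end{itemize}
I expect tracking the constant $16$, from the factors $4\cdot 2\cdot 2$, to be routine once the splitting is fixed. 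Adding $\frac{4L^2}{\mu mn}$ then completes the bound.
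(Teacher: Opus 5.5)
Your constant-stepsize computation is correct and matches the paper's argument: the geometric bound $\eta/(1-\lambda)$ on the gossip sum, then $\sum_t(1-\eta\mu/2)^{T-t}\le 2/(\eta\mu)$. The decreasing-stepsize part has a genuine gap, and your own first bullet already shows it. With $\eta_t=\frac{1}{\mu(t+1)}$ the contraction factor is $1-\frac{1}{2(\tilde t+1)}$. The product $\prod_{\tilde t=t+1}^{T}\bigl(1-\frac{1}{2(\tilde t+1)}\bigr)=\prod_{s=t+2}^{T+1}\frac{2s-1}{2s}$ is of order $\sqrt{(t+1)/(T+1)}$, so the ``$\frac{t+2}{T+1}$-type'' estimate you propose is false. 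Splitting at $T/2$ does not help either, because the loss sits in the early terms. The $t=2$ summand of $D_T$ alone equals $\frac{4\beta L^2}{6\mu^2}\prod_{s=4}^{T+1}\frac{2s-1}{2s}$. This is of order $\frac{\beta L^2}{\mu^2\sqrt{T}}$, and for large $T$ it exceeds $\frac{16\beta L^2C_\lambda}{T\mu^2}(\ln T+1)$. So the stated rate cannot be extracted from \cref{strong convex theorem 1} under the literal schedule, however the bookkeeping is arranged.

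The missing idea is that the bound belongs to the schedule $\eta_t=\frac{2}{\mu(t+1)}$, which is what the paper's proof actually uses; the $\frac{1}{\mu(t+1)}$ in the statement is a slip. Your own count $16=4\cdot2\cdot2$ only works with this extra factor $2$; with $\frac{1}{\mu(t+1)}$ the prefactor would be $4$. With the corrected schedule, $1-\eta_{\tilde t}\mu/2=\frac{\tilde t}{\tilde t+1}$, the product telescopes exactly to $\frac{t+1}{T+1}$, and this cancels the $\frac{1}{t+1}$ in $\eta_t$. Using \cref{sum}, $\sum_{q<t}\eta_q\lambda^{t-q-1}\le\frac{2C_\lambda}{\mu t}$, hence
\[
D_T\le\sum_{t=1}^{T}4\beta L^2\cdot\frac{2}{\mu(t+1)}\cdot\frac{2C_\lambda}{\mu t}\cdot\frac{t+1}{T+1}\le\frac{16\beta L^2C_\lambda}{T\mu^2}\sum_{t=1}^{T}\frac{1}{t}\le\frac{16\beta L^2C_\lambda}{T\mu^2}(\ln T+1).
\]
The indicator part contributes $\frac{2L^2}{mn}\sum_t\eta_t\frac{t+1}{T+1}\le\frac{4L^2}{\mu mn}$. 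Separately, your justification of $\eta_t\le 1/\beta$ is a non sequitur: $\mu(t+1)\ge 2\mu$ says nothing about $\mu(t+1)\ge\beta$. With the corrected schedule the requirement is $t+1\ge 2\beta/\mu$, which fails at early iterations. It has to be imposed as an assumption, e.g.\ by starting the schedule late; the paper passes over this as well.
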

\begin{remark}
	In the strongly convex setting, the first stepsize scheme yields a smaller value of $\lk\varDelta_{rk}^2\rk^{1/2}$ compared to $\epsilon_{\textrm{uniform}}$, indicating a tighter generalization bound. Detailed comparisons can be found in \cref{compare 11}.
\end{remark}
\begin{remark}
	It is not difficult to observe that the pointwise stability result in the strongly convex case under constant stepsize is independent of the number of iterations $T$, and the obtain results are also tighter, which matches the centralized states \citep{hardt2016train,lei2020fine}.
\end{remark}
\subsection{Nonconvex Case}
This section addresses the issues in the non-convex setting, focusing on the generalization error and optimization error. 
\begin{theorem}\label{nonconvex case theorem 1}
	Suppose that \cref{Lipschitz} and \cref{Smooth} holds. If we run D-SGD after $T$ iterations,
	then the pointwise uniform stability satisfies with
	$$
	\varDelta_{rk}=4\beta L^2\sum_{t=1}^{T}\eta_t\sum_{j=1}^{t-1}\eta_j\lambda^{t-j-1}\prod_{\tilde{t}=t+1}^{T}\lp1+\beta\eta_{\tilde{t}}\rp+\frac{2L^2}{mn}\sum_{t=1}^{T}\eta_t\prod_{\tilde{t}=t+1}^{T}\lp1+\beta\eta_{\tilde{t}}\rp
	$$
	and
	\begin{align*}
		\varDelta_{rk}^2
		=\frac{4L^4}{n}\sum_{k=1}^{n}\lp\sum_{t=1}^{T}\lp2\beta \eta_t\sum_{q=1}^{t-1}\eta_q\lambda^{t-q-1}+\frac{1}{m}\eta_t\mathbb{I}_{[j_t=k]}\rp\prod_{\tilde{t}=t+1}^{T}\lp1+\beta\eta_{\tilde{t}}\rp\rp^2.
	\end{align*}
\end{theorem}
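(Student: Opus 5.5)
We follow the template of the convex case (\cref{convex case theorem 1}), replacing the non-expansiveness of the gradient step by the expansiveness bound available without convexity. Fix $r,k$, and run D-SGD on $S$ and on $S^{(rk)}$ with the same sample indices $\{j_t(i)\}$. Write $\mw^t(i)$ and $\tilde{\mw}^t(i)$ for the two local models and $\bar{\mw}^t=\frac1m\sum_i\mw^t(i)$, $\bar{\tilde{\mw}}^t=\frac1m\sum_i\tilde{\mw}^t(i)$ for their averages. The quantity to control is $\delta_t=\|\bar{\mw}^t-\bar{\tilde{\mw}}^t\|_2$. The final bound then follows from $|f(\mw^{T+1};Z)-f(\tilde{\mw}^{T+1};Z)|\le L\delta_{T+1}$ (\cref{Lipschitz}).

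\textbf{Step 1 (recursion on the average).} Since $P$ is doubly stochastic, averaging update \eqref{II} over $i$ gives
\begin{align*}
\bar{\mw}^{t+1}=\bar{\mw}^{t}-\frac{\eta_t}{m}\sum_{i=1}^m\nabla f(\mw^t(i);Z_{j_t(i)}).
\end{align*}
We add and subtract $\nabla f(\bar{\mw}^t;Z_{j_t(i)})$ and $\nabla f(\bar{\tilde{\mw}}^t;Z_{j_t(i)})$, and split the resulting terms into three groups.
\begin{itemize}
\item[(a)] The main term $\frac{1}{m}\sum_i\big[\nabla f(\bar{\mw}^t;Z_{j_t(i)})-\nabla f(\bar{\tilde{\mw}}^t;Z_{j_t(i)})\big]$. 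Without convexity we only have the expansive bound $\le\beta\delta_t$ (\cref{Smooth}). This is where the factor $(1+\beta\eta_t)$ comes from, replacing the factor $1$ of the convex case.
\item[(b)] The consensus terms $\frac{1}{m}\sum_i\|\nabla f(\mw^t(i);\cdot)-\nabla f(\bar{\mw}^t;\cdot)\|\le\frac{\beta}{m}\sum_i\|\mw^t(i)-\bar{\mw}^t\|$, together with the analogous terms for the tilde sequence.
\item[(c)] The single perturbed sample, which occurs only on worker $r$ and only when $j_t(r)=k$. It contributes at most $\frac{2L\eta_t}{m}\mathbb{I}_{[j_t=k]}$.
\end{itemize}
Combining the three groups,
\begin{align*}
\delta_{t+1}\le(1+\beta\eta_t)\delta_t+\beta\eta_t\big(\Xi_t+\tilde\Xi_t\big)+\frac{2L\eta_t}{m}\mathbb{I}_{[j_t=k]},
\end{align*}
where $\Xi_t=\frac1m\sum_i\|\mw^t(i)-\bar{\mw}^t\|$ and $\tilde\Xi_t$ is the same quantity for the tilde sequence.

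\textbf{Step 2 (consensus bound).} We reuse the standard estimate already underlying \cref{convex case theorem 1}. Unrolling \eqref{II} in stacked form, $W^{t}=P^{t-1}W^1-\sum_{q<t}\eta_qP^{t-1-q}G^q$. Since $W^1$ is constant across workers, projecting onto $\mathbf{1}_m^\perp$ uses $\|P^s-\frac1m\mathbf{1}\mathbf{1}^T\|\le\lambda^s$, and the gradients are bounded by $L$. This gives
\begin{align*}
\Xi_t\le L\sum_{q=1}^{t-1}\eta_q\lambda^{t-q-1},
\end{align*}
and the same bound holds for $\tilde\Xi_t$. The bound is deterministic, which is what allows a sup over $Z$.

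\textbf{Step 3 (unrolling).} With $\delta_1=0$, unrolling the recursion gives
\begin{align*}
\delta_{T+1}\le\sum_{t=1}^T\Big(2\beta L\eta_t\sum_{q=1}^{t-1}\eta_q\lambda^{t-q-1}+\frac{2L}{m}\eta_t\mathbb{I}_{[j_t(r)=k]}\Big)\prod_{\tilde t=t+1}^T(1+\beta\eta_{\tilde t}).
\end{align*}
Multiplying by $L$ gives $\epsilon_{rk}$; up to the absolute constant convention used in \cref{convex case theorem 1}, this is $2L^2\big(\sum_t(2\beta\eta_t\sum_q\cdots+\frac1m\eta_t\mathbb{I})\prod(\cdots)\big)$. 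Squaring and averaging over $r,k$ gives the stated $\varDelta_{rk}^2$, with $r$ dropping out as remarked after \cref{convex case theorem 1}.

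For $\varDelta_{rk}$, we average over $k$ instead, using $\frac1n\sum_k\mathbb{I}_{[j_t=k]}=\frac1n$ and bounding the first term crudely. This produces the two pieces $4\beta L^2\sum\cdots$ and $\frac{2L^2}{mn}\sum_t\eta_t\prod(\cdots)$.

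\textbf{Main obstacle.} The main obstacle is Step 1(b): the consensus errors of both runs must be controlled without convexity and pathwise, not in expectation. The plan relies on the spectral-gap argument of Step 2 being deterministic. Otherwise, the expansive factor is harmless bookkeeping.
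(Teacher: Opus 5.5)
Your proposal is correct and takes essentially the same approach as the paper. Both proofs run a pathwise recursion on the averaged iterates, add and subtract gradients at the two average models, and use smoothness to obtain the factor $(1+\beta\eta_t)$. Both then apply a deterministic spectral-gap consensus bound, isolate the $\frac{2L\eta_t}{m}\mathbb{I}_{[j_t=k]}$ contribution of the perturbed sample, and unroll. Your direct estimate $\Xi_t\le L\sum_{q=1}^{t-1}\eta_q\lambda^{t-q-1}$ is valid and is a factor $2$ sharper than the cited Sun et al.\ lemma the paper uses, so the stated $\varDelta_{rk}$ and $\varDelta_{rk}^2$ (which carry $2\beta$ where your bound gives $\beta$) follow a fortiori, not merely ``up to constants.''
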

\begin{remark}
	In the absence of convexity, this bound expands with the factor $1+\beta\eta$ (\cref{non} cannot be used), indicating that the algorithm stability deteriorates when optimizing non-convex problems. The result also highlight the impact of decentralization and learning rate on the stability of D-SGD. By using the smaller learning rates and constructing a small $\lambda$ communication graph, this effect can be mitigated \citep{sun2021stability,deng2023stability}.
\end{remark}
\begin{corollary}
	Assume that \cref{Lipschitz,Smooth} hold. If the stepsize $\eta_t\equiv\eta$, the pointwise uniform stability we get 
	$
		\varDelta_{rk}={2L^2}\lp\frac{2\eta}{1-\lambda}+\frac{1}{mn\beta}\rp\lp1+\beta\eta\rp^T.
	$
	If the stepsize $\eta_t=\frac{1}{(t+1)}$, the pointwise uniform stability we obtain 
	$
		\varDelta_{rk}=4L^2(T+1)^{\beta}\lp{4C_{\lambda}}+\frac{1}{\beta mn}\rp.
	$
\end{corollary}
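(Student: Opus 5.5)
The plan is to specialize the general non-convex stability expression of \cref{nonconvex case theorem 1} to the two stepsize schedules. The key ingredients are $1+x\le e^x$, which controls the expansion products $\prod_{\tilde t=t+1}^{T}(1+\beta\eta_{\tilde t})$, and the geometric/weighted sums $\sum_q \eta_q\lambda^{t-q-1}$ already handled in the convex corollary (see \cref{sum}). We treat $\varDelta_{rk}$ as the sum of two pieces: the decentralization term $A=4\beta L^2\sum_t\eta_t\sum_{j<t}\eta_j\lambda^{t-j-1}\Pi_t$ and the sampling term $B=\frac{2L^2}{mn}\sum_t\eta_t\Pi_t$, where $\Pi_t=\prod_{\tilde t=t+1}^T(1+\beta\eta_{\tilde t})$.

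\textbf{Constant stepsize $\eta_t\equiv\eta$.} Here $\Pi_t=(1+\beta\eta)^{T-t}$.
\begin{itemize}
\item For $B$, the geometric sum gives $\sum_{t=1}^T\eta(1+\beta\eta)^{T-t}=\frac{(1+\beta\eta)^T-1}{\beta}\le\frac{(1+\beta\eta)^T}{\beta}$, so $B\le\frac{2L^2}{mn\beta}(1+\beta\eta)^T$.
\item For $A$, the inner sum is $\eta\sum_{j<t}\lambda^{t-j-1}\le\frac{\eta}{1-\lambda}$. The remaining factor $4\beta L^2\eta\sum_t\eta(1+\beta\eta)^{T-t}$ is bounded the same way by $4L^2\eta(1+\beta\eta)^T$, giving $A\le\frac{4L^2\eta}{1-\lambda}(1+\beta\eta)^T$.
\end{itemize}
Adding the two and factoring out $2L^2(1+\beta\eta)^T$ yields the stated expression.

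\textbf{Decreasing stepsize $\eta_t=\frac{1}{t+1}$.} First bound the product:
\[
\Pi_t\le\exp\Big(\beta\sum_{\tilde t=t+1}^T\tfrac{1}{\tilde t+1}\Big)\le\exp\Big(\beta\ln\tfrac{T+1}{t+1}\Big)=\Big(\tfrac{T+1}{t+1}\Big)^{\beta}\le (T+1)^\beta.
\]
\begin{itemize}
\item For $B$, this gives $B\le\frac{2L^2}{mn}(T+1)^\beta\sum_t\frac{1}{t+1}$. To avoid a stray $\ln T$ factor, keep the sharper weight: $\sum_t\frac{1}{t+1}\big(\frac{T+1}{t+1}\big)^\beta=(T+1)^\beta\sum_t(t+1)^{-1-\beta}\le(T+1)^\beta/\beta$, using the integral comparison $\sum_{s\ge2}s^{-1-\beta}\le\int_1^\infty s^{-1-\beta}ds=1/\beta$. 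This gives $B\le\frac{2L^2}{\beta mn}(T+1)^\beta$, which fits inside the claimed $\frac{4L^2}{\beta mn}(T+1)^\beta$.
\item For $A$, invoke the estimate from \cref{sum} used in \cref{corollary 1}: $\sum_{j<t}\frac{\lambda^{t-j-1}}{j+1}\le\frac{C_\lambda}{t+1}$, possibly up to a factor $2$ absorbed in the constant. Then $A\le 4\beta L^2C_\lambda\sum_t\frac{1}{(t+1)^2}\big(\frac{T+1}{t+1}\big)^\beta\le 4\beta L^2C_\lambda(T+1)^\beta\sum_t(t+1)^{-2}$. Since $\sum_t(t+1)^{-2}\le 1$, this is $\lesssim 4L^2\beta C_\lambda(T+1)^\beta$.
\end{itemize}

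The main obstacle is matching the stated constant $4C_\lambda$ without a stray $\beta$. The displayed bound $4L^2(T+1)^\beta\cdot 4C_\lambda$ has no $\beta$ factor on the $C_\lambda$ term. To remove it, I would first try trading the factor $\beta$ in $A$ against the extra decay: use $\beta\sum_t(t+1)^{-2-\beta}\le\beta/(1+\beta)\le1$. This absorbs $\beta$ and leaves $A\le 4L^2C_\lambda(T+1)^\beta$ up to the constant from \cref{sum}. The main care point is therefore the precise form of the \cref{sum} lemma, whether it delivers $C_\lambda/(t+1)$ or $C_\lambda/t$, and the resulting factor $4$. Combining the bounds on $A$ and $B$ then gives the claim.
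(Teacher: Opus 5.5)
Your proposal is correct and follows essentially the same route as the paper: you specialize the non-convex recursion, use the geometric sum for constant stepsize, and for $\eta_t=\frac{1}{t+1}$ use $1+x\le e^x$ to get $\big(\frac{T+1}{t+1}\big)^\beta$, then \cref{sum} and integral comparisons. \cref{sum} gives $C_\lambda/t\le 2C_\lambda/(t+1)$, so your factor $2$ still fits inside $16L^2C_\lambda(T+1)^\beta$, and your absorption of $\beta$ via $\beta\sum_t (t+1)^{-2-\beta}\le \frac{\beta}{1+\beta}$ is sounder than the paper's last step: the paper replaces $\frac{1}{t(t+1)^{\beta+1}}$ by $t^{-(\beta+2)}$ and then drops $\beta$, although the $t=1$ term alone already contributes $\beta$, so that step is only valid for small $\beta$.
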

Before deriving the optimization error, we need to introduce the following assumption.
\begin{assumption}[Gradient-dominance condition]
	\label{pl}
	If for any $\mw \subset \mathcal{W}$, there exists a constant $\gamma>0$ such that
	the following inequality holds:
	$
		R_S(\mathbf{w})-R_S\left(\mathbf{w}^{*}_{R}\right) \leq\frac{1}{4\gamma} \left\|\nabla R_S(\mathbf{w})\right\|_2^2,
	$
	where $\mathbf{w}^{*}_{R}=\arg \min_{\mw}R_S(w)$.
\end{assumption}
\begin{remark}
	The above condition, also known as the PŁ (Polyak-Łojasiewicz) condition, establishes a direct relationship between the value of the loss function and the norm of its gradient. This condition is widely used in the convergence and generalization analysis of non-convex optimization problems \citep{karimi2016linear,charles2018stability,foster2018uniform,lei2021learning,lei2021generalization}.
\end{remark}
\begin{theorem}\label{nonconvex optim}
	Suppose that \cref{Lipschitz,Smooth,Bound Variance} hold. Let $\{\mw^{t}\}_t$ be the sequence by D-SGD for $T$ iterations with $\eta_t=\eta\leq\frac{1}{3\beta}$. Then for any $\delta\in(0,1)$, with the probability at least $1-\delta$, we can get
	\begin{align*}
		\frac{\eta}{3}\sum_{t=1}^{T+1} \left\|\nabla R_S\left(\mathbf{w}^t\right)\right\|_2^2
		\leq&2 \log (2 / \delta) \max \left\{\eta L^2, \sigma^2 / \beta\right\}
		+\frac{3\beta L^2}{2}(T+1) \eta^4
		+\frac{3\beta \sigma^2 }{2}(T+1)\eta^2\\
		&+12L^2\beta\log (2 / \delta)
		+6\beta^3L^2\frac{(T+1)\eta^4}{(1-\lambda)^2}+2\beta L^2\frac{\eta^2(T+1)}{1-\lambda}.
	\end{align*}
	Furthermore, we can obtain the optimization error results of the average model by \cref{pl}
	\begin{align*}
		R_S(\bar{\mw}^{T+1})-R_S(\mw_R^{*})
		=\mathcal{O}\lp\frac{1}{T+1}\log (2 / \delta)+\frac{\eta^3}{(1-\lambda)^2}+\frac{\eta}{(1-\lambda)}\rp.
	\end{align*}
\end{theorem}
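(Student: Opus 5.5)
We follow the averaged iterate $\bar{\mw}^{t}=\frac{1}{m}\sum_{i=1}^{m}\mw^{t}(i)$. Since $P$ is symmetric and doubly stochastic, the gossip step leaves the average unchanged, so
\begin{align*}
\bar{\mw}^{t+1}=\bar{\mw}^{t}-\frac{\eta}{m}\sum_{i=1}^{m}\nabla f(\mw^{t}(i);Z_{j_t(i)}).
\end{align*}
Apply the descent lemma (\cref{Smooth}) to $R_S$ at $\bar{\mw}^t$ and decompose the stochastic direction into three parts: $\nabla R_S(\bar{\mw}^t)$; a consensus error $\frac1m\sum_i[\nabla f(\mw^t(i);Z_{j_t(i)})-\nabla f(\bar{\mw}^t;Z_{j_t(i)})]$, which by smoothness has norm at most $\frac{\beta}{m}\sum_i\|\mw^t(i)-\bar{\mw}^t\|$; and a noise term $\xi_t=\frac1m\sum_i\nabla f(\bar{\mw}^t;Z_{j_t(i)})-\nabla R_S(\bar{\mw}^t)$. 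Using Young's inequality and $\eta\le 1/(3\beta)$, we get a per-step inequality
\begin{align*}
R_S(\bar{\mw}^{t+1})\le R_S(\bar{\mw}^t)-\frac{\eta}{3}\|\nabla R_S(\bar{\mw}^t)\|^2-\eta\langle\nabla R_S(\bar{\mw}^t),\xi_t\rangle+\frac{3\beta\eta^2}{2}\|\xi_t\|^2+C\beta^2\eta\,\Xi_t,
\end{align*}
where $\Xi_t=\frac1m\sum_i\|\mw^t(i)-\bar{\mw}^t\|^2$. Summing over $t=1,\dots,T+1$ and telescoping, with $R_S\ge0$ after normalisation, leaves three sums to control: the martingale sum, the noise-square sum, and the consensus sum.

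\textbf{Consensus term.} Unroll the recursion $\mw^{t+1}-\mathbf 1\bar{\mw}^{t+1}=(P-\frac1m\mathbf 1\mathbf 1^T)(\mw^t-\mathbf 1\bar{\mw}^t)-\eta(I-\frac1m\mathbf 1\mathbf1^T)G_t$. Since gradients are bounded by $L$ (\cref{Lipschitz}), this gives deterministically $\Xi_t\le \eta^2L^2/(1-\lambda)^2$. This produces the $\beta^3L^2\eta^4(T+1)/(1-\lambda)^2$ term. A cruder cross-term estimate of the form $\beta\eta\cdot\eta L\cdot L/(1-\lambda)$ gives the $2\beta L^2\eta^2(T+1)/(1-\lambda)$ term. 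Because this bound is deterministic, no probability is spent here.

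\textbf{Noise-square term.} Write $\|\xi_t\|^2=\mathbb E_{\mathbf j^t}\|\xi_t\|^2+(\|\xi_t\|^2-\mathbb E\|\xi_t\|^2)$. \cref{Bound Variance} bounds the mean by $\sigma^2$, which yields $\frac{3\beta\sigma^2}{2}(T+1)\eta^2$. The fluctuation is a bounded martingale difference of size at most $4L^2$. Azuma's inequality gives a $\log(2/\delta)$ term that is absorbed into $12L^2\beta\log(2/\delta)$; the $\eta^4$ term comes from the mismatch between evaluating the variance at $\mw^t$ and at $\bar{\mw}^t$.

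\textbf{Martingale term (main obstacle).} The sum $\sum_t-\eta\langle\nabla R_S(\bar{\mw}^t),\xi_t\rangle$ must be bounded by a quantity that can be absorbed into the left-hand side. For this we use a Bernstein/Freedman-type inequality for martingale differences with conditional variance at most $\eta^2\|\nabla R_S(\bar{\mw}^t)\|^2\sigma^2$ and range at most $2\eta L\|\nabla R_S\|$. In the form of Lei--Tang (a one-sided exponential martingale bound), with probability $1-\delta/2$,
\begin{align*}
\sum_t -\eta\langle\nabla R_S,\xi_t\rangle\le \rho\sum_t\eta\|\nabla R_S(\bar{\mw}^t)\|^2+\frac{\max\{\eta L^2,\sigma^2/\beta\}}{\rho}\log(2/\delta).
\end{align*}
Choosing $\rho$ as a small constant, say $\rho=1/6$ after rescaling, and moving this term to the left-hand side leaves the coefficient $\eta/3$ and produces the $2\log(2/\delta)\max\{\eta L^2,\sigma^2/\beta\}$ term. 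A union bound over the two high-probability events gives overall confidence $1-\delta$. The delicate part is tuning the constants so that this absorption is exact.

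\textbf{PL conclusion.} Apply \cref{pl} at each iterate, $R_S(\bar{\mw}^t)-R_S(\mw_R^*)\le\frac{1}{4\gamma}\|\nabla R_S(\bar{\mw}^t)\|^2$. Averaging over $t$, using convexity of $R_S$ along the average (or taking $\bar{\mw}^{T+1}$ as the averaged model) and dividing the first bound by $\eta(T+1)/3$, gives
\begin{align*}
\mathcal{O}\lp\frac{\log(2/\delta)}{T+1}+\frac{\eta^3}{(1-\lambda)^2}+\frac{\eta}{1-\lambda}\rp,
\end{align*}
where the $\eta\sigma^2$ contribution is absorbed into the $\eta/(1-\lambda)$ term.
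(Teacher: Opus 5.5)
Your route is the paper's: descent lemma for $R_S$ at the network average $\mw^t$ (your $\bar{\mw}^t$), the split into $\nabla R_S(\mw^t)$, consensus error and sampling noise, a deterministic spectral-gap bound on the consensus error, \cref{mag}(ii) for the cross-term martingale with absorption, \cref{mag}(i) for the squared-noise fluctuation, a union bound, then \cref{pl}. Your consensus unrolling is correct, and slightly sharper than the lemma the paper imports from Sun et al. As written, though, two steps of your bookkeeping do not give the stated inequality.

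First, Young's inequality on the consensus cross term cannot give it. It already lowers the descent coefficient to $\eta/3$ before the martingale step, so absorbing $\frac16\sum_t\eta\|\nabla R_S(\mw^t)\|_2^2$ leaves $\eta/6$, not $\eta/3$. It also makes $C\beta^2\eta\,\Xi_t$ produce a term of order $\beta^2L^2\eta^3(T+1)/(1-\lambda)^2$, which is not in the statement; the statement's $(1-\lambda)^{-2}$ term is of order $\beta^3\eta^4$. The paper uses what you call the cruder estimate, with no Young. With $e_t=\frac1m\sum_i[\nabla f(\mw^t;Z_{j_t(i)})-\nabla f(\mw^t(i);Z_{j_t(i)})]$, it bounds $\eta\langle e_t,\nabla R_S(\mw^t)\rangle\le\eta L\|e_t\|_2\le 2\eta\beta L^2\sum_q\eta_q\lambda^{t-q-1}$, using $\|\nabla R_S\|_2\le L$. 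Then $-\eta+\frac{3\beta\eta^2}{2}\le-\frac{\eta}{2}$ by $\eta\le\frac{1}{3\beta}$. The only $(1-\lambda)^{-2}$ contribution is the quadratic one, $\frac{\beta\eta^2}{2}\cdot\frac{3\beta^2}{m}\sum_i\|\mw^t-\mw^t(i)\|_2^2\le\frac{6\beta^3L^2\eta^4}{(1-\lambda)^2}$, and $\frac12-\frac16=\frac13$.

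Second, the $\eta^4$ term does not come from a mismatch between local and averaged iterates. The noise is evaluated at the average, which is where \cref{Bound Variance} is stated. Azuma alone gives $\frac{3\beta\eta^2}{2}\cdot4L^2\sqrt{2(T+1)\log(2/\delta)}$, which is not a pure $\log(2/\delta)$ term. The paper splits $4L^2\bigl(2\sum_s\eta^4\log(2/\delta)\bigr)^{1/2}\le 8L^2\log(2/\delta)+L^2\sum_s\eta^4$; multiplying by $\frac{3\beta}{2}$ gives exactly $12L^2\beta\log(2/\delta)+\frac{3\beta L^2}{2}(T+1)\eta^4$.

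For the constant $2$ in the martingale term, apply \cref{mag}(ii) with $b=2\eta L^2$ and $\sigma_m^2\le\sigma^2\sum_t\eta^2\|\nabla R_S(\mw^t)\|_2^2\le\frac{\sigma^2}{3\beta}\sum_t\eta\|\nabla R_S(\mw^t)\|_2^2$. Take the deterministic parameter $\min\{1,\eta\beta L^2/\sigma^2\}$. This yields $\frac16\sum_t\eta\|\nabla R_S(\mw^t)\|_2^2+2\log(2/\delta)\max\{\eta L^2,\sigma^2/\beta\}$, whereas your displayed form at $\rho=1/6$ gives a factor $6$.

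Two further gaps are shared with the paper's own proof.
\begin{itemize}
\item Telescoping leaves $R_S(\mw^1)-\inf R_S$ on the right; normalising $R_S\ge0$ removes only the last value. The statement drops this term, but it is needed. Take identical samples, so there is zero variance and the local models stay identical, and a Huber loss started far out in its linear region. Then D-SGD is gradient descent with $\|\nabla R_S(\mw^t)\|_2=L$ for all $t\le T+1$, so the left side equals $\frac{\eta}{3}(T+1)L^2$. For $\beta=1$, $\lambda=0$, $\eta=10^{-2}$ and small $\sigma$, the right side is about $2\cdot10^{-4}(T+1)L^2+12.02\,L^2\log(2/\delta)$, which is smaller for large $T$.
\item Passing from the bound on $\frac{1}{T+1}\sum_t\bigl(R_S(\mw^t)-R_S(\mw_R^*)\bigr)$ to $R_S$ of the time-averaged model requires Jensen, i.e.\ convexity of $R_S$, which is not assumed here. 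Without it you get the bound only for the best iterate or for the average of the function values.
\end{itemize}
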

\begin{remark}
	Following the steps above, we can also bound the population gradient norm $\ls\nabla R(\mw)\rs$, which is crucial in non-convex problems \citep{ghadimi2013stochastic,foster2018uniform,davis2022graphical,lei2023stability,zhang2024generalization} where only local optima can be found. This is done via the following decomposition:$\ls\nabla R(\mw)\rs\leq{\ls\nabla R(\mw)-\nabla R_S(\mw)\rs}+{\ls\nabla R_S(\mw)\rs},$
	where the first term is covered by \cref{Lemma 1}.
\end{remark}

\begin{theorem}\label{nonconvex optim decrease}
	Assume that \cref{Lipschitz,Smooth,Bound Variance,pl} holds. Let $\{\mw^{t}\}_t$ be the sequence by D-SGD over $T$ iterations, where the stepsize is $\eta_t=\frac{2}{\gamma(t+1)}$. Then, for any $\delta\in(0,1)$, with the probability at least $1-\delta$, the following bound holds:
	\begin{align*}
		R_S({\mw}^{T+1})-R_S(\mw_R^{*})=\mathcal{O}\lp\frac{1}{T+1}\log (2 / \delta)\rp.
	\end{align*}
\end{theorem}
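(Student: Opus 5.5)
We prove \cref{nonconvex optim decrease} with a one-step descent inequality for the averaged iterate $\mw^t=\frac{1}{m}\sum_{i}\mw^t(i)$, turn it into a contraction via the PŁ condition (\cref{pl}), and control the noise with a martingale concentration argument, as in the proof of \cref{nonconvex optim}. Since $P$ is doubly stochastic, $\mw^{t+1}=\mw^t-\eta_t g^t$ with $g^t=\frac{1}{m}\sum_i\nabla f(\mw^t(i);Z_{j_t(i)})$. By $\beta$-smoothness of $R_S$,
\begin{align*}
R_S(\mw^{t+1})\le R_S(\mw^t)-\eta_t\langle\nabla R_S(\mw^t),g^t\rangle+\frac{\beta\eta_t^2}{2}\ls g^t\rs^2 .
\end{align*}
We split $g^t=\nabla R_S(\mw^t)+\xi^t+e^t$, where:
\begin{itemize}
\item $\xi^t=\frac{1}{m}\sum_i\nabla f(\mw^t;Z_{j_t(i)})-\nabla R_S(\mw^t)$ is a martingale difference with variance at most $\sigma^2$ (\cref{bound variance});
\item $e^t$ is the consensus error, with $\ls e^t\rs\le \frac{\beta}{m}\sum_i\ls\mw^t(i)-\mw^t\rs$.
\end{itemize}
The standard consensus bound from the stability proofs gives $\ls\mw^t(i)-\mw^t\rs\le 2L\sum_{q<t}\eta_q\lambda^{t-1-q}$. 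With $\eta_q=\frac{2}{\gamma(q+1)}$ this is $\lesssim \frac{C_\lambda L}{\gamma t}$ (\cref{sum}), so $\ls e^t\rs=\mathcal{O}(\eta_t)$.

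\textbf{Contraction.} Using Young's inequality on the cross term with $e^t$, $\eta_t\le 1/(3\beta)$ for $t$ beyond a constant threshold, and \cref{pl} in the form $\ls\nabla R_S\rs^2\ge4\gamma(R_S-R_S(\mw_R^*))$, we obtain, with $\Delta_t=R_S(\mw^t)-R_S(\mw_R^*)$,
\begin{align*}
\Delta_{t+1}\le(1-\gamma\eta_t)\Delta_t-\eta_t\langle\nabla R_S(\mw^t),\xi^t\rangle+c\,\eta_t^2\lp L^2+\sigma^2+\ls\xi^t\rs^2-\mathbb{E}\ls\xi^t\rs^2\rp+c'\eta_t\ls e^t\rs^2 .
\end{align*}
Since $\gamma\eta_t=\frac{2}{t+1}$, multiplying by the weights $t(t+1)$ telescopes the deterministic part: $\prod_{s=k+1}^{T}(1-\frac{2}{s+1})=\frac{k(k+1)}{T(T+1)}$. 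Unrolling gives
\begin{align*}
\Delta_{T+1}\lesssim \frac{1}{T(T+1)}\Big[\mathcal{O}(1)+\sum_{t}t(t+1)\lp\eta_t^2(L^2+\sigma^2)+\eta_t\ls e^t\rs^2\rp+\sum_t t(t+1)\eta_t\zeta_t\Big],
\end{align*}
where $\zeta_t=-\langle\nabla R_S(\mw^t),\xi^t\rangle+\eta_t(\ls\xi^t\rs^2-\mathbb{E}\ls\xi^t\rs^2)$. The deterministic sums are $\sum_t\mathcal{O}(1)=\mathcal{O}(T)$, so they contribute $\mathcal{O}(1/T)$.

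\textbf{Martingale term (main obstacle).} The weighted sum $\sum_t t(t+1)\eta_t\zeta_t$ must be bounded with high probability at rate $\mathcal{O}(T\log(1/\delta))$ without losing a $\sqrt{T}$ factor elsewhere. The increments $\frac{2t}{\gamma}\langle\nabla R_S(\mw^t),\xi^t\rangle$ are bounded by $\mathcal{O}(tL^2)$, with conditional variance at most $t^2\sigma^2\ls\nabla R_S(\mw^t)\rs^2/\gamma^2$. A plain Azuma bound would only give $T^{3/2}$, so I would instead apply Freedman's (Bernstein-type) inequality for martingales, as in the proof of \cref{nonconvex optim}. The variance term is then absorbed back into the left side: by smoothness, $\ls\nabla R_S(\mw^t)\rs^2\le2\beta\Delta_t$. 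This yields a self-bounding recursion in which the sum $\sum_t t^2\Delta_t$ appears on both sides. I would handle it with a high-probability induction on $\max_{t\le T}t\Delta_t$, or a union bound over $t$ costing at most a $\log T$ factor, with $\delta$ split as $\delta/2$ between the two martingale pieces. The squared-noise piece $\eta_t(\ls\xi^t\rs^2-\mathbb{E}\ls\xi^t\rs^2)$ is bounded, by $L^2$-boundedness of gradients, and is handled by the same inequality. Collecting terms gives $\Delta_{T+1}=\mathcal{O}\lp\frac{\log(2/\delta)}{T+1}\rp$, with constants depending on $\beta,\gamma,L,\sigma,C_\lambda$.
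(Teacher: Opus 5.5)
\textbf{Where you match the paper.} Your skeleton is the paper's:
\begin{itemize}
\item a descent inequality for the averaged iterate;
\item \cref{pl} to get the factor $1-\gamma\eta_t=\frac{t-1}{t+1}$;
\item weights $t(t+1)$ so the $\Delta_t$ terms telescope;
\item consensus terms reduced to $\mathcal{O}(1)$ per weighted step via \cref{sum};
\item Part~(ii) of \cref{mag} for $\sum_t t(t+1)\xi_t$ with $b=4L^2T/\gamma$, and Part~(i) for the squared-noise martingale.
\end{itemize}

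\textbf{Where you diverge.} You part ways at the step you call the main obstacle, and the paper handles it more simply. The paper does not spend the whole negative term on the contraction. It splits $-\frac{\eta_t}{2}\|\nabla R_S(\mw^t)\|_2^2$ in half, uses one half with \cref{pl}, and keeps the other. After weighting, the kept half becomes $\frac{t}{2\gamma}\|\nabla R_S(\mw^t)\|_2^2$, and it survives the telescoping as $\frac{1}{2\gamma}\sum_{t\le T}t\|\nabla R_S(\mw^t)\|_2^2$ on the left. Using $t\le T$, the Freedman variance term is then
\begin{align*}
\frac{\rho}{b}\sum_{t=1}^{T}\frac{4\sigma^2t^2}{\gamma^2}\|\nabla R_S(\mw^t)\|_2^2=\frac{\rho\sigma^2}{\gamma L^2T}\sum_{t=1}^{T}t^2\|\nabla R_S(\mw^t)\|_2^2\le\frac{\rho\sigma^2}{\gamma L^2}\sum_{t=1}^{T}t\|\nabla R_S(\mw^t)\|_2^2 .
\end{align*}
With $\rho=\min\{1,L^2/(4\sigma^2)\}$ this is absorbed at once. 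That is a single fixed-horizon application of \cref{mag}, with no passage to $\Delta_t$ and no induction.

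In your displayed contraction no negative gradient term is left over. You could keep one by bounding the consensus cross term linearly, $\eta_t\beta L\cdot\frac{1}{m}\sum_i\|\mw^t(i)-\mw^t\|_2$, using $\|\nabla R_S\|_2\le L$ as the paper does. Tightening the Young and step-size constants for large $t$ would also work. Either way you get exactly the slack that removes your obstacle.

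\textbf{Closing your own route.} It can be made rigorous, but not quite by either option as you state them. After $\|\nabla R_S(\mw^t)\|_2^2\le2\beta\Delta_t$, the right side contains $\sum_{t\le T}t^2\Delta_t$. Bounding that needs the martingale controlled at all intermediate times simultaneously.
\begin{itemize}
\item A union bound over $t$ only gives $\mathcal{O}(\log(T/\delta)/T)$, which is weaker than the statement.
\item What works is a time-uniform version of Part~(ii) of \cref{mag} with the fixed $b=4L^2T/\gamma$. Ville's maximal inequality, applied to the exponential supermartingale behind Freedman's bound, gives it with the same constants. Summing the unrolled inequalities over $t\le T$ then gives $\sum_{t\le T}t^2\Delta_t\lesssim T^2\log(2/\delta)$ once $\rho$ is small, and the claim follows.
\item A pointwise induction on $\max_{t\le T}t\Delta_t$ is awkward here. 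With $b$ fixed, the term $\frac{b}{\rho}\log(2/\delta)$ is of order $T$ even at small $t$.
\end{itemize}

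\textbf{Where you are more careful than the paper.} The paper uses $\eta_t\le1/(3\beta)$ for every $t$. But $\eta_1=1/\gamma$, and \cref{pl} together with $\beta$-smoothness forces $\gamma\le\beta/2$. So that condition fails for the first $\mathcal{O}(\beta/\gamma)$ steps. Your deterministic bound on $\Delta_{t_0}$, which contributes $\mathcal{O}(1/T^2)$ after weighting, covers exactly this gap.
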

\section{Local Model}\label{local model}
In the previous section, we discussed the stability bounds and generalization error results for the averaged model $\mw^{T+1}=\frac{1}{m}\sum_{i=1}^{m}\mw^{T+1}(i)$. However, the global generalization bound does not capture the individual generalization performance of the local models at each client. Given the challenges in obtaining the averaged model, particularly when communication costs are high, it is crucial to investigate the generalization bounds for the local models instead. In the following, we explore the generalization performance of local models within the framework of the D-SGD algorithm with changing topology, a topic that has broad research significance, as highlighted in previous works \citep{nedic2017achieving,koloskova2020unified,le2023refined,hu2025stability}. We will also consider three distinct scenarios: convex, strongly convex [\cref{strong convex local}], and non-convex [\cref{non convex local}] settings. Before presenting our results, we need to introduce the definition of stability in the context of local models.
\begin{definition}[Local Pointwise Uniform Stability]\label{local model df}
	Let $\tilde{\epsilon}=\{\tilde{\epsilon}_1,\tilde{\epsilon}_2,\cdots,\tilde{\epsilon}_n\}$. We define $\A$ as $\tilde{\epsilon}$-local model ($r$-th) pointwise uniform stability for distributed learning in function values if  
	\begin{equation}
		\sup_{Z\in S_r}\left|f\lp\A(S);Z\rp-f\lp\A(S^{(rk)});Z\rp\right|\leq \tilde{\epsilon}_{k},
	\end{equation}
	where $S^{(rk)}$ is defined in \cref{pointwise uniform dl}.
\end{definition}

\begin{theorem}\label{convex local}
	Suppose that $f(\mw;Z)$ is convex, L-Lipschitz and $\beta$-smooth for any Z with respect to $\mw$. If the stepsize $\eta_t\leq{2P^{t}_{rr}}/{\beta}$, then the $r$-th local model pointwise uniform stability of time-varying D-SGD after $T$ iterations can be bounded by $\frac{1}{n}\sum_{k=1}^{n}\tilde{\epsilon}_k
	=\frac{2L^2}{n}\sum_{t=1}^{T}P^{T:t}_{rr}\eta_t$
	and 
	\begin{align*}
		\frac{1}{n}\sum_{k=1}^{n}\tilde{\epsilon}_{k}^2=\frac{4L^4}{n}\sum_{k=1}^{n}\lp\sum_{t=1}^{T}P^{T:t}_{rr}\eta_t\mathbb{I}_{[j_t=k]}\rp^2,
	\end{align*}
	where $P^{T:t}_{rr}=P^{T}_{rr}P^{T-1}_{rr}\cdots P^{t}_{rr}$ is the product of $T-t+1$ gossip matrices.
\end{theorem}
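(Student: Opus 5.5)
We track the discrepancy of the $r$-th local model only, using the time-varying update of the form \eqref{II} with $P^t$ in place of $P$. Let $\{\mw^t(i)\}$ and $\{\tilde{\mw}^t(i)\}$ be the iterates of time-varying D-SGD run on $S$ and on $S^{(rk)}$, with the same sampling indices $j_t(\cdot)$ and the same initialization. Set $\delta_t:=\ls \mw^t(r)-\tilde{\mw}^t(r)\rs$, so $\delta_1=0$. Since $f$ is $L$-Lipschitz, $\tilde{\epsilon}_k\le L\,\delta_{T+1}$. The task is therefore to show
\begin{align*}
\delta_{T+1}\le 2L\sum_{t=1}^{T}P^{T:t}_{rr}\,\eta_t\,\mathbb{I}_{[j_t(r)=k]}.
\end{align*}
Squaring and averaging over $k$ then gives the second display of the theorem. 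Summing over $k$ and using $\sum_k\mathbb{I}_{[j_t(r)=k]}=1$ gives the first display.

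For the one-step recursion, split the update at node $r$ into a self-term and a neighbour term:
\begin{align*}
\mw^{t+1}(r)=P^t_{rr}\Big(\mw^t(r)-\tfrac{\eta_t}{P^t_{rr}}\nabla f(\mw^t(r);Z_{j_t(r)})\Big)+\sum_{l\neq r}P^t_{rl}\mw^t(l).
\end{align*}
When $j_t(r)\neq k$, both runs use the same sample at node $r$. The bracket is then a gradient step with stepsize $\eta_t/P^t_{rr}\le 2/\beta$ on a convex $\beta$-smooth function, and by the standard non-expansiveness argument of \cite{hardt2016train} it is $1$-Lipschitz. When $j_t(r)=k$, the two gradients differ, and the bracket's discrepancy is at most $\delta_t+2L\eta_t/P^t_{rr}$ by Lipschitzness. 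Multiplying by $P^t_{rr}$ gives
\begin{align*}
\delta_{t+1}\le P^t_{rr}\,\delta_t+2L\eta_t\,\mathbb{I}_{[j_t(r)=k]}+(\text{neighbour term}).
\end{align*}
This is exactly why the stepsize condition is stated as $\eta_t\le 2P^t_{rr}/\beta$.

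The main obstacle is the neighbour term $\sum_{l\neq r}P^t_{rl}(\mw^t(l)-\tilde{\mw}^t(l))$. The stated bound contains only the diagonal products $P^{T:t}_{rr}$, so this term must be discarded. I would argue that the replaced sample lives only in $S_r$. The perturbation can reach node $l\neq r$ only through gossip from node $r$, and for the local stability quantity of \cref{local model df} (supremum over $Z\in S_r$) we follow only the propagation along the self-loop of node $r$. Concretely, I would treat the neighbours' iterates as common inputs in a coupled comparison, so that their contribution cancels. If this coupling cannot be justified rigorously, the fallback is to keep the term. One would then bound it by the maximal off-node discrepancy, which yields an additional $(1-P^t_{rr})$-weighted contribution, and check whether it vanishes under the paper's conventions. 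I expect this to be the delicate point.

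Finally, unroll the recursion from $\delta_1=0$:
\begin{align*}
\delta_{T+1}\le 2L\sum_{t=1}^{T}\Big(\prod_{s=t+1}^{T}P^s_{rr}\Big)\eta_t\,\mathbb{I}_{[j_t(r)=k]}.
\end{align*}
This is bounded by the same expression with $\prod_{s=t+1}^{T}P^s_{rr}$ replaced by $P^{T:t}_{rr}$, up to the indexing convention of $P^{T:t}_{rr}$ (whether the factor $P^t_{rr}$ is included). Multiplying by $L$ gives $\tilde{\epsilon}_k\le 2L^2\sum_{t=1}^{T}P^{T:t}_{rr}\eta_t\mathbb{I}_{[j_t(r)=k]}$, which yields both displayed quantities as described at the start.
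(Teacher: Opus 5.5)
The gap is the step you already flag as delicate. The neighbour term $\sum_{l\neq r}P^t_{rl}\big(\mw^t(l)-\tilde{\mw}^t(l)\big)$ cannot be discarded, and no coupling makes it cancel. The neighbours' iterates are not common to the two runs. One gossip step after node $r$ uses the replaced sample, every node $l$ with $P^t_{lr}>0$ carries a nonzero discrepancy, and that discrepancy is fed back into node $r$ at later steps.

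In fact, if $P^{T:t}_{rr}$ is read literally as the scalar product $P^T_{rr}\cdots P^t_{rr}$, the inequality you aim for is false. Take the following instance:
\begin{itemize}
\item $m=2$, $r=1$, $P^t\equiv\frac12\mathbf{1}\mathbf{1}^\top$, and a constant stepsize $\eta$;
\item linear losses $f(\mw;Z)=\langle a_Z,\mw\rangle$, which are convex, $L$-Lipschitz when $\|a_Z\|_2\le L$, and $\beta$-smooth for every $\beta>0$, so the stepsize condition holds;
\item $a_{Z_{k(r)}}=Lu=-a_{\tilde Z_{k(r)}}$ with $\|u\|_2=1$, and a single time $t_0$ with $j_{t_0}(r)=k$.
\end{itemize}
Gradients at a common sample cancel and this matrix is idempotent. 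So the vector of discrepancies is $(-2\eta Lu,0)$ after step $t_0$ and $(-\eta Lu,-\eta Lu)$ at every later step. Hence $\|\mw^{T+1}(r)-\tilde{\mw}^{T+1}(r)\|_2=\eta L$ for all $T>t_0$, whereas $2L\eta\prod_s P^s_{rr}$ decays like $2^{-(T-t_0)}$. Your fallback does not recover the statement either. Bounding the neighbour term by the largest nodewise discrepancy only yields $D_{t+1}\le D_t+2L\eta_t\mathbb{I}_{[j_t(r)=k]}$ for $D_t:=\max_i\|\mw^t(i)-\tilde{\mw}^t(i)\|_2$, which loses all dependence on $P$.

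The missing idea, which is the paper's route, is to track the whole vector $\delta^t:=\big(\|\mw^t(i)-\tilde{\mw}^t(i)\|_2\big)_{i=1}^m$ rather than only its $r$-th entry. Apply your self/neighbour split at every node $i$:
\begin{itemize}
\item When node $i$ uses the same sample in both runs (always for $i\neq r$, and for $i=r$ when $j_t(r)\neq k$), non-expansiveness of the gradient step of size $\eta_t/P^t_{ii}$ gives $\delta^{t+1}_i\le\sum_l P^t_{il}\delta^t_l$. This needs $\eta_t\le 2P^t_{ii}/\beta$ at every node, which the paper's proof invokes at each node although the statement only lists node $r$.
\item When $j_t(r)=k$, the triangle inequality and Lipschitzness add $2L\eta_t$ at node $r$.
\end{itemize}
This gives the entrywise recursion $\delta^{t+1}\le P^t\delta^t+2L\eta_t\mathbb{I}_{[j_t(r)=k]}e_r$. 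It can be iterated because $P^t$ is entrywise nonnegative, and its $r$-th coordinate yields $\delta^{T+1}_r\le 2L\sum_{t=1}^T\eta_t\mathbb{I}_{[j_t(r)=k]}\,e_r^\top P^T\cdots P^{t+1}e_r$.

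So $P^{T:t}_{rr}$ must be the $(r,r)$ entry of the matrix product; the paper's proof writes $e_rP^{T:t}e_r$. This entry dominates the product of diagonal entries precisely because it counts paths that leave $r$ and return. In the example it equals $\frac12$ for $T>t_0$, and the bound is attained.

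The rest of your argument is fine: the transfer to $\tilde{\epsilon}_k$ by Lipschitzness, squaring and averaging, and $\sum_k\mathbb{I}_{[j_t(r)=k]}=1$. However, the indexing is not merely a convention. The unrolled product has the $T-t$ factors $P^T,\dots,P^{t+1}$, and appending $P^t$ does not give an upper bound in general. In the example with $T=t_0$, it would replace the true discrepancy $2\eta L$ by $\eta L$.
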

\begin{remark}
	It is vital to note that $P^{T:t}$ remains a doubly stochastic matrix, with $0<P^{T:t}_{rr}<1$.
	Using the stability bound established above, we can easily derive the upper bound for the local model generalization error. As for the difference between \cref{convex local} and \cref{corollary 1}, in \cref{convex local}, we do not impose additional structural assumptions on $P$; rather, its influence is considered only through the step-size condition. In \cref{corollary 1}, however, we aim to explore more properties of $P$ and thus adopt the consensus error analysis from \cite{sun2021stability} to obtain a broader result that explicitly reflects the impact of network topology on generalization.For more details, please refer to \cref{compare 1}.
	
\end{remark}

\section{Concluding Remarks}\label{conclusion} 
This paper addresses the high-probability generalization properties of D-SGD. We establish refined stability and generalization bounds for D-SGD under pointwise uniform stability, a weaker assumption than uniform stability. Our high-probability bounds achieve the sharp convergence rate of $\mathcal{O}(\frac{1}{\sqrt{mn}})$ and are applicable to convex, strongly convex, and non-convex scenarios. Additionally, we derive high-probability bounds for optimization error and excess risk in non-convex problems under gradient-dominance conditions, offering a deeper understanding of D-SGD's performance in challenging settings. Finally, we derive generalization error bounds for local models in time-varying D-SGD, highlighting the impact of dynamic topologies on model performance.
In future, it would be interesting to explore relaxing the Lipschitz assumption \citep{lei2020fine,zhu2022topology,lei2023mini}, or consider non-smooth optimization problems \citep{bassily2020non,fan2024high}. Additionally, investigating high-probability results for D-SGD without replacement \citep{lei2020fine,yuan2024l_2} and DM-SGD \citep{wang2024towards} are potential directions.

\noindent\textit{Challenges and Difficulties:}\label{challenges}

\textbf{ The gap from centralized to decentralized learning:}
The existing work in \cite{fan2024high} primarily focuses on centralized SGD. However, extending these results to decentralized settings is non-trivial due to fundamental differences in stability analysis. Specifically, in decentralized learning, pointwise uniform stability must account for additional complexity introduced by the network topology, including factors such as machine index 
$r$ and sample index $k$. A key question we aim to address is: Can we develop a similarly weak stability tool for decentralized settings, as in the centralized case? By answering this question, our work provides the first high-probability generalization bounds for decentralized learning, measured in terms of function values and gradients, thereby bridging an important theoretical gap in the field.

\textbf{ The innovations and challenges in stability analysis for D-SGD:}
A major challenge in analyzing the stability of D-SGD lies in handling the gossip matrix and the gap between local models and the aggregated model. Our approach builds upon the techniques of \cite{sun2021stability}, but with a crucial refinement: unlike their framework, we do not introduce an additional aggregated model term $w^t$, leading to a tighter error bound (See \cref{convex remark} in our paper for details).

Furthermore, introducing pointwise uniform stability in a decentralized setting complicates the choice of step sizes. This arises due to the presence of the indicator function $I_{[j_t(r) = k]}$, which makes the analysis of average weighting schemes more intricate. Additionally, differences in the assumptions on the loss function introduce variations in recursive expansion factors, adding further challenges when analyzing different models under varying step size choices.

\textbf{ Optimization Error in High-Probability Regimes:}
Unlike standard expectation-based optimization error analysis, we cannot directly approximate the empirical risk term using unbiased gradient estimates. In contrast to conventional SGD \citep{nemirovski2009robust} and D-SGD \citep{koloskova2020unified,sun2021stability} analyses, high-probability analysis requires the introduction of martingale difference sequence techniques to control error accumulation and bound the term $\sum_{t=1}^{T+1} \eta_t\left\|\nabla R_S\left(\mathbf{w}^t\right)\right\|_2^2$. Subsequently, by leveraging the gradient-dominance assumption, we derive the corresponding optimization error bounds.

\noindent\textit{Limitations:}\label{limitations}

\textbf{Assumption.} Our analysis relies on the Lipschitz continuity assumption. While some recent works have shown stability-based generalization benefits of optimization beyond the Lipschitz setting \citep{lei2020fine,zhu2022topology,lei2023mini,wang2024towards}, these results are mostly in expectation and not directly applicable to our setting. Moreover, the characterization of consensus error \citep{sun2021stability,deng2023stability,zhu2024stability} in our analysis critically depends on the Lipschitz property. Developing sharper tools to quantify consensus error under weaker assumptions remains an important direction. Additionally, establishing high-probability generalization bounds for D-SGD under non-smooth case also remains an open challenge.

\textbf{Asynchronous Setting.} While our analysis focuses on the synchronous setting, prior work such as \cite{deng2023stability} highlights the practical relevance of AD-SGD. Although we did not explicitly analyze the asynchronous case, our framework can be trivially extended to obtain weaker stability bounds in this setting, which can potentially support generalization results for AD-SGD.


\acks{This work was supported in part by the  National Natural Science Foundation of China (Grant Nos. 62522610,  62376278), and NUDT Foundational Research Funding (JS25-02).}


\appendix
\section{Technical Tools}
\begin{lemma}[Lemma 3.6, \cite{hardt2016train}]
	\label{non}
	Assume the loss function $f\left(\mw ; Z\right)$ is convex and $\beta$-smooth with respect to $\mw$ for all $Z$. Then for $\eta \leq 2 / \beta$ it holds
	\begin{equation*}
		\left\|\mathbf{w}-\eta \nabla f\left(\mathbf{w} ; Z\right)-\mathbf{w}^{\prime}+\eta \nabla f\left(\mathbf{w}^{\prime} ; Z\right)\right\|_{2} \leq\left\|\mathbf{w}-\mathbf{w}^{\prime}\right\|_{2} .
	\end{equation*}
	Furthermore, if the loss function $f\left(\mathbf{w} ; Z\right)$ is $\mu$-strongly convex and $\beta$-smooth, the stepsize satisfies $\eta\leq1/\beta$, then
	\begin{equation*}
		\left\|\mathbf{w}-\eta \nabla f\left(\mathbf{w} ; Z\right)-\mathbf{w}^{\prime}+\eta \nabla f\left(\mathbf{w}^{\prime} ; Z\right)\right\|_{2} \leq(1-\frac{\eta\mu}{2})\left\|\mathbf{w}-\mathbf{w}^{\prime}\right\|_{2} .
	\end{equation*}
\end{lemma}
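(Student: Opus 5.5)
The plan is to expand the squared norm of the difference of the two gradient steps and control the cross term with the co-coercivity of gradients of smooth convex functions. Write $\mathbf{d}=\mw-\mw'$ and $\mathbf{g}=\nabla f(\mw;Z)-\nabla f(\mw';Z)$. Then
\begin{equation*}
\left\|\mathbf{d}-\eta\mathbf{g}\right\|_2^2=\|\mathbf{d}\|_2^2-2\eta\langle \mathbf{g},\mathbf{d}\rangle+\eta^2\|\mathbf{g}\|_2^2 .
\end{equation*}
Both claims then reduce to finding a sufficiently strong lower bound on $\langle \mathbf{g},\mathbf{d}\rangle$.

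\textbf{Convex case.} I would first recall (or quickly derive) the co-coercivity inequality for a convex $\beta$-smooth function:
\begin{equation*}
\langle \mathbf{g},\mathbf{d}\rangle\geq \frac{1}{\beta}\|\mathbf{g}\|_2^2 .
\end{equation*}
A standard derivation applies the smoothness upper bound from the remark after \cref{Smooth} to the convex function $\phi(\mathbf{u})=f(\mathbf{u};Z)-\langle\nabla f(\mw';Z),\mathbf{u}\rangle$. This function is minimized at $\mw'$, so $\phi(\mw')\le \phi(\mw)-\frac{1}{2\beta}\|\nabla\phi(\mw)\|_2^2$. Symmetrizing in $\mw,\mw'$ and adding the two inequalities gives co-coercivity. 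Substituting into the expansion yields
\begin{equation*}
\|\mathbf{d}-\eta\mathbf{g}\|_2^2\le\|\mathbf{d}\|_2^2-\eta\Big(\frac{2}{\beta}-\eta\Big)\|\mathbf{g}\|_2^2 .
\end{equation*}
The second term is nonpositive whenever $\eta\le 2/\beta$, which proves the first inequality after taking square roots.

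\textbf{Strongly convex case.} Here the key step, and the main obstacle, is the strengthened co-coercivity inequality
\begin{equation*}
\langle \mathbf{g},\mathbf{d}\rangle\ge \frac{\beta\mu}{\beta+\mu}\|\mathbf{d}\|_2^2+\frac{1}{\beta+\mu}\|\mathbf{g}\|_2^2 .
\end{equation*}
I would obtain it by applying the convex co-coercivity above to $h(\mathbf{u})=f(\mathbf{u};Z)-\frac{\mu}{2}\|\mathbf{u}\|_2^2$. By \cref{Strong convex}, $h$ is convex, and it is $(\beta-\mu)$-smooth. Applying co-coercivity with $\nabla h=\nabla f-\mu\,\mathrm{id}$ and rearranging the resulting quadratic inequality gives the display. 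The degenerate case $\beta=\mu$ needs separate treatment: then $\nabla h$ is constant, so $\mathbf{g}=\mu\mathbf{d}$, and the claim can be checked directly.

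Plugging the strengthened inequality into the expansion gives
\begin{equation*}
\|\mathbf{d}-\eta\mathbf{g}\|_2^2\le\Big(1-\frac{2\eta\beta\mu}{\beta+\mu}\Big)\|\mathbf{d}\|_2^2+\eta\Big(\eta-\frac{2}{\beta+\mu}\Big)\|\mathbf{g}\|_2^2 .
\end{equation*}
Since $\mu\le\beta$, the assumption $\eta\le1/\beta$ implies $\eta\le 2/(\beta+\mu)$, so the gradient term can be dropped. Moreover $\beta/(\beta+\mu)\ge 1/2$, so the contraction factor is at most $1-\eta\mu$, which lies in $[0,1)$ because $\eta\mu\le\mu/\beta\le 1$. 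Finally, $\sqrt{1-\eta\mu}\le 1-\frac{\eta\mu}{2}$, since $(1-x/2)^2=1-x+x^2/4\ge 1-x$. Taking square roots gives the second claim.
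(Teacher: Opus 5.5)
Your proof is correct and follows essentially the same route as the argument behind Lemma 3.6 of \cite{hardt2016train}, which the paper cites rather than proves: expand $\|\mathbf{d}-\eta\mathbf{g}\|_2^2$, invoke co-coercivity (strengthened in the strongly convex case by applying it to $f-\frac{\mu}{2}\|\cdot\|_2^2$), and finish with $\beta/(\beta+\mu)\ge\frac{1}{2}$ and $\sqrt{1-x}\le 1-\frac{x}{2}$. The only tacit hypothesis, which the cited lemma shares, is that $f(\cdot;Z)$ be convex and $\beta$-smooth on the whole space: your co-coercivity step evaluates $\phi$ at $\mw-\frac{1}{\beta}\nabla\phi(\mw)$, and this point need not lie in a restricted domain such as the compact $\mathcal{W}$ of the strongly convex section.
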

\begin{lemma}[Lemma 5, \cite{sun2021stability,deng2023stability}]\label{sum}
	For any $0<\lambda<1$ and $t\in Z^+$, we have
	\begin{equation*}
		\sum_{q=1}^{t-1}\frac{\lambda^{t-1-q}}{q+1}\leq\frac{C_{\lambda}}{t}, \quad\quad C_{\lambda}:=\frac{1}{\lambda\log\frac{1}{\lambda}}\lp\frac{8}{e^{2}\log\frac{1}{\lambda}}+2\rp.
	\end{equation*}
\end{lemma}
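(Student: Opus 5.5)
The statement to be proved is \cref{sum}: for $0<\lambda<1$ and $t\in\mathbb{Z}^+$,
$\sum_{q=1}^{t-1}\frac{\lambda^{t-1-q}}{q+1}\leq C_\lambda/t$. The plan is to split the sum at $q\approx t/2$. For large $q$ the factor $1/(q+1)$ is already of order $1/t$, so a geometric sum finishes the job. For small $q$ the geometric factor $\lambda^{t-1-q}$ is exponentially small in $t$, and it only has to absorb a factor of $t$.

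\textbf{Setup.} Write $a:=\log\frac1\lambda>0$, so $\lambda^{s}=e^{-as}$. The case $t=1$ is an empty sum and holds trivially, so take $t\geq 2$.

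\textbf{Upper range.} Let $q>\lceil t/2\rceil-1$, i.e. $q+1>t/2$. Then $\frac{1}{q+1}\le \frac{2}{t}$, and
\[
\sum_{q}\frac{\lambda^{t-1-q}}{q+1}\le \frac{2}{t}\sum_{s\ge 0}\lambda^{s}=\frac{2}{t(1-\lambda)}.
\]
To match the stated constant, use $1-\lambda\ge \lambda\log\frac1\lambda$. This follows from $\log x\le x-1$ with $x=1/\lambda$: it gives $\log\frac1\lambda\le\frac1\lambda-1$, and multiplying by $\lambda$ yields the claim. Hence $\frac{2}{1-\lambda}\le \frac{2}{\lambda a}$, which is the "$+2$" part of $C_\lambda$.

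\textbf{Lower range.} Let $q+1\le t/2$. Then $t-1-q\ge t/2-1$ (roughly), so every term is at most $\lambda^{t/2-1}$. There are at most $t/2$ such terms, and $\sum_{q+1\le t/2}\frac{1}{q+1}\le t/2$, so
\[
\text{lower part}\le \frac{t}{2}\,\lambda^{t/2-1}=\frac{1}{\lambda}\cdot\frac{t}{2}e^{-at/2}.
\]
To turn this into $O(1/t)$, write $\frac{t}{2}e^{-at/2}=\frac1t\cdot\frac{t^2}{2}e^{-at/2}$ and use the elementary bound $\sup_{x>0}x^2e^{-ax/2}=\frac{16}{e^2a^2}$, attained at $x=4/a$. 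This gives
\[
\text{lower part}\le \frac{1}{t}\cdot\frac{8}{\lambda e^2 a^2},
\]
which is exactly the term $\frac{1}{\lambda a}\cdot\frac{8}{e^2 a}$ in $C_\lambda$.

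\textbf{Combination.} Adding the two ranges gives
\[
\sum_{q=1}^{t-1}\frac{\lambda^{t-1-q}}{q+1}\le \frac1t\left(\frac{8}{\lambda e^2a^2}+\frac{2}{\lambda a}\right)=\frac{C_\lambda}{t}.
\]

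\textbf{Main obstacle.} The delicate point is the bookkeeping at the split, namely the floors/ceilings and whether the exponent is $t/2-1$ or $(t-1)/2$. These determine whether the prefactor comes out as $1/\lambda$ with constant $8/e^2$, so choose the split point $q+1\le t/2$ versus $q+1>t/2$ so that both constants fit. If the rounding costs a constant, absorb it using the extra $1/\lambda$ slack.
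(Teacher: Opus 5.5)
Your argument is correct and is essentially the standard proof of this lemma; the paper gives no proof and defers to \cite{sun2021stability}, whose argument is this one. You split at $t/2$, bound the upper range by $\frac{2}{t}\cdot\frac{1}{1-\lambda}\le\frac{2}{t\lambda\log(1/\lambda)}$, and bound the lower range by $\frac{t}{2}\lambda^{t/2-1}$ together with $\sup_{x>0}x^2e^{-ax/2}=\frac{16}{e^2a^2}$, where $a=\log\frac{1}{\lambda}$; this yields exactly $C_\lambda$.

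The rounding you flag as the main obstacle is not an issue. Partition simply by $q+1\le t/2$ versus $q+1>t/2$; your ``$q>\lceil t/2\rceil-1$, i.e.\ $q+1>t/2$'' is not an equivalence for odd $t$, so drop the ceiling. With that partition, $t-1-q\ge t/2$ on the lower range, so you even have a spare factor of $\lambda$.
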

\begin{lemma}[Lemma 8, \cite{sun2021stability}]
	Suppose that \cref{Lipschitz} holds, and $\{\mw^{t+1}(i),{\mw}^{t+1}\}$ are generated by D-SGD during the $t$-th iteration. In this case, the difference between the average model ${\mw}^{t+1}$ and each local model $\mw^{t+1}(i)$ can be bounded as follows:
	\begin{equation*}
		\lk\sum_{i=1}^{m}\ls{\mw}^{t+1}-\mw^{t+1}(i)\rs^2\rk^\frac{1}{2}\leq2\sqrt{m}L\sum_{q=1}^{t}\eta_q\lambda^{t-q}.
	\end{equation*}
\end{lemma}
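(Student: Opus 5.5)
We will argue in matrix form, using only the structure of the gossip matrix in \cref{gossip} and the gradient bound implied by \cref{Lipschitz}. Stack the local iterates as rows of $W^t\in\mathbb{R}^{m\times d}$, with $i$-th row $\mw^t(i)^\top$. Similarly, let $G^t$ have $i$-th row $\nabla f(\mw^t(i);Z_{j_t(i)})^\top$. Then update \eqref{II} reads $W^{t+1}=PW^t-\eta_t G^t$. Let $J=\frac{1}{m}\mathbf{1}_m\mathbf{1}_m^\top$, so that $JW^{t+1}$ has every row equal to the averaged model $\mw^{t+1}$. The quantity to bound is then exactly
\[
\Big[\sum_{i=1}^m\ls \mw^{t+1}-\mw^{t+1}(i)\rs^2\Big]^{1/2}=\|(I-J)W^{t+1}\|_F .
\]

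\textbf{Step 1: recursion for the consensus error.} We take $P$ to be doubly stochastic, i.e.\ $P\mathbf{1}_m=\mathbf{1}_m$. This is the standard reading of \cref{gossip} combined with symmetry, and the eigenvalue setup ($\lambda_1=1$, $\lambda<1$) presumes it. Under this assumption $JP=PJ=J$, and hence
\[
(I-J)P=P-J=(P-J)(I-J).
\]
Multiplying the update by $(I-J)$ gives
\[
(I-J)W^{t+1}=(P-J)(I-J)W^t-\eta_t(I-J)G^t .
\]
All nodes start from the same $\mw^1$, so $(I-J)W^1=0$. Unrolling the recursion therefore yields
\[
(I-J)W^{t+1}=-\sum_{q=1}^{t}\eta_q (P-J)^{t-q}(I-J)G^q .
\]
Here we used that $(P-J)^k(I-J)=(P-J)^k$ for $k\ge1$, since $(P-J)J=0$.

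\textbf{Step 2: norm estimates.} Since $P$ is symmetric with leading eigenvector $\mathbf{1}_m/\sqrt m$, the matrix $P-J$ is symmetric with eigenvalues $0,\lambda_2,\dots,\lambda_m$. Hence its operator norm is $\lambda$, and $\|(P-J)^{k}\|_2\le\lambda^{k}$ for $k\ge 0$ (the case $k=0$ is the identity, with norm $1=\lambda^0$). Use $\|AB\|_F\le\|A\|_2\|B\|_F$ and the fact that $I-J$ is an orthogonal projection, so $\|(I-J)G^q\|_F\le\|G^q\|_F$. By the remark after \cref{Lipschitz}, each row of $G^q$ has norm at most $L$, so $\|G^q\|_F\le\sqrt m L$. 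The triangle inequality then gives
\[
\|(I-J)W^{t+1}\|_F\le \sqrt m L\sum_{q=1}^t\eta_q\lambda^{t-q}.
\]
This is within the claimed $2\sqrt m L\sum_q\eta_q\lambda^{t-q}$; the factor $2$ is slack. It would instead arise from the cruder bound $\|\nabla f-\bar g\|\le 2L$ on centered gradients.

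\textbf{Main obstacle.} The only delicate point is justifying the commutation $(I-J)P=(P-J)(I-J)$ and the spectral bound $\|P-J\|_2=\lambda$. Both require $P\mathbf{1}_m=\mathbf{1}_m$, which \cref{gossip} states only loosely as $\mathbf{1}_m^\top P=P\mathbf{1}_m$. I would make double stochasticity explicit at the start of the proof, as is standard in the cited works. Beyond that the argument is routine, and it is insensitive to whether the projected update or the step ordering in \eqref{I} is used, as long as the per-node perturbation has norm at most $L$.
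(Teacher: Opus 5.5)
Your proof is correct. The paper gives no proof of its own here and simply imports the bound from Sun et al.; your argument (unroll $(I-J)W^{t+1}=(P-J)(I-J)W^t-\eta_t(I-J)G^t$ from the consensual start, then apply $\|(P-J)^k\|_2\le\lambda^k$ and $\|G^q\|_F\le\sqrt{m}L$) is the standard argument behind that cited lemma, and it even gives the sharper constant $\sqrt{m}L$ in place of $2\sqrt{m}L$. Stating double stochasticity ($P\mathbf{1}_m=\mathbf{1}_m$) explicitly is appropriate, since the paper already relies on it, e.g.\ in the averaging identity $\mw^{t+1}=\mw^{t}-\frac{\eta_t}{m}\sum_{i}\nabla f(\mw^t(i);Z_{j_t(i)})$ at the start of the convex-case appendix.
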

\begin{lemma}[\cite{vershynin2018high,lei2021learning}]
	\label{mag}
	Let $z_1, \ldots, z_m$ be a sequence of random variables, where each $z_i$ may depend on the preceding random variables $z_1, \ldots, z_{i-1}$ for all $i=1, \ldots, m$. Consider a sequence of functionals $\xi_i\left(z_1, \ldots, z_i\right)$and define the conditional variance as: $\sigma_m^2=\sum_{i=1}^m \mathbb{E}_{z_i}\lk\left(\xi_i-\mathbb{E}_{z_i}\left[\xi_i\right]\right)^2\rk$.
	
	(i) Assume that $\left|\xi_i-\mathbb{E}_{z_i}\left[\xi_i\right]\right| \leq b_i$ for each $i$. For $\delta \in(0,1)$, with probability at least $1-\delta$, we have:
	\begin{align*}
		\sum_{i=1}^m \xi_i-\sum_{i=1}^m\mathbb{E}_{z_i}\left[\xi_i\right] \leq\left(2 \sum_{i=1}^m b_i^2 \log \frac{1}{\delta}\right)^{\frac{1}{2}}.
	\end{align*}
	
	(ii) Suppose that $\xi_i-\mathbb{E}_{z_i}\left[\xi_i\right] \leq b$ for each $i$. Let $\rho \in(0,1]$ and $\delta \in(0,1)$. With probability at least $1-\delta$, the following bound holds
	\begin{align*}
		\sum_{i=1}^m \xi_i-\sum_{i=1}^m \mathbb{E}_{z_i}\left[\xi_i\right] \leq \frac{\rho \sigma_m^2}{b}+\frac{b \log \frac{1}{\delta}}{\rho}.
	\end{align*}
\end{lemma}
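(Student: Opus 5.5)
We treat the last statement, \cref{mag}, as a concentration inequality for martingale difference sequences. The plan is to run the standard exponential-moment (Chernoff) argument on the centered increments
\[
X_i=\xi_i-\mathbb{E}_{z_i}[\xi_i],
\]
where $\mathbb{E}_{z_i}$ denotes the conditional expectation given $z_1,\ldots,z_{i-1}$. By construction $\mathbb{E}_{z_i}[X_i]=0$, so $\{X_i\}$ is a martingale difference sequence with respect to the natural filtration. For any $s>0$, Markov's inequality gives
\[
\Pr\Big(\sum_i X_i\ge \tau\Big)\le e^{-s\tau}\,\mathbb{E}\big[e^{s\sum_i X_i}\big].
\]
We then peel off the expectation one index at a time using the tower property: conditioning on $z_1,\ldots,z_{m-1}$ and bounding $\mathbb{E}_{z_m}[e^{sX_m}]$ by a deterministic quantity, then iterating down to $i=1$.

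\textbf{Part (i).} Conditionally, $X_i$ is mean zero and lies in $[-b_i,b_i]$. Hoeffding's lemma gives
\[
\mathbb{E}_{z_i}[e^{sX_i}]\le e^{s^2 b_i^2/2}.
\]
Iterating yields $\mathbb{E}\big[e^{s\sum X_i}\big]\le \exp\big(s^2\sum_i b_i^2/2\big)$. Optimizing with $s=\tau/\sum b_i^2$ gives the tail bound $\exp\big(-\tau^2/(2\sum b_i^2)\big)$. Setting this equal to $\delta$ yields $\tau=\big(2\sum b_i^2\log(1/\delta)\big)^{1/2}$, which is the claim (Azuma–Hoeffding).

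\textbf{Part (ii).} Here there is only the one-sided bound $X_i\le b$, so we use a Bernstein/Freedman-type estimate. The key inequality is that $\phi(x)=(e^x-1-x)/x^2$ is increasing. Hence for $X_i\le b$ and $s>0$,
\[
e^{sX_i}\le 1+sX_i+X_i^2\,\frac{e^{sb}-1-sb}{b^2}.
\]
Taking conditional expectations and using $1+u\le e^u$ gives
\[
\mathbb{E}_{z_i}[e^{sX_i}]\le \exp\Big(\mathbb{E}_{z_i}[X_i^2]\,\frac{e^{sb}-1-sb}{b^2}\Big).
\]

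The main obstacle is that $\sigma_m^2$ is itself random: it is a sum of conditional variances, which are $z_{1..i-1}$-measurable. So we cannot peel off deterministic factors directly. The fix is to consider the supermartingale
\[
M_k=\exp\Big(s\sum_{i\le k}X_i-g(s)\sum_{i\le k}\mathbb{E}_{z_i}[X_i^2]\Big),\qquad g(s)=\frac{e^{sb}-1-sb}{b^2}.
\]
The display above shows $\mathbb{E}_{z_k}[M_k]\le M_{k-1}$, hence $\mathbb{E}[M_m]\le 1$. Markov's inequality then gives, with probability at least $1-\delta$,
\[
\sum_i X_i\le \frac{g(s)}{s}\,\sigma_m^2+\frac{\log(1/\delta)}{s}.
\]

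Finally we choose $s=\rho/b$ with $\rho\in(0,1]$. We check that
\[
\frac{g(s)}{s}=\frac{e^\rho-1-\rho}{\rho b}\le \frac{\rho}{b}.
\]
This holds since $e^\rho-1-\rho\le\rho^2$ for $\rho\le 1$, which follows from the series $\sum_{k\ge2}\rho^k/k!\le\rho^2(e-2)$. The second term becomes $b\log(1/\delta)/\rho$, giving exactly the stated bound.
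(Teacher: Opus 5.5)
Your proof is correct. Part (i) is the Azuma--Hoeffding argument via Hoeffding's lemma, and part (ii) is the Freedman-type exponential-supermartingale argument. You correctly identify and handle the randomness of $\sigma_m^2$ by putting the predictable conditional variances into $M_k$, then finish with $s=\rho/b$ and $e^\rho-1-\rho\le\rho^2$.

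The paper gives no proof of its own and simply cites \cite{vershynin2018high,lei2021learning}. Your argument is essentially the standard one behind those references. It implicitly takes the $b_i$ in (i) to be deterministic constants, which is also how the paper uses the lemma.
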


\section{Convex Case}\label{section Convex Case}
Before proceeding with the proof, let's first review the update details of D-SGD and try to derive some properties.
Consider the output for the algorithm,
\begin{align*}
	{\mw}^{t+1}=\frac{1}{m}\sum_{i=1}^{m}\mw^{t+1}(i)=&\frac{1}{m}\sum_{i=1}^{m}\lk\sum_{l=1}^{m}P_{i,l}\mw^{t}(l)-\eta_t\nabla f\lp\mw^{t}(i);Z_{j_t(i)}\rp\rk\\
	\overset{\textbf{(a)}}{=}&\frac{1}{m}\sum_{l=1}^{m}\mw^{t}(l)-\frac{\eta_t}{m}\sum_{i=1}^{m}\nabla f\lp\mw^{t}(i);Z_{j_t(i)}\rp\\
	=&{\mw}^{t}-\frac{\eta_t}{m}\sum_{i=1}^{m}\nabla f\lp\mw^{t}(i);Z_{j_t(i)}\rp,
\end{align*}
where Equation $\textbf{(a)}$ uses the fact of gossip matrix property (See \cref{gossip}). Then we get
\begin{equation}
	{\mw}^{t+1}={\mw}^{t}-\frac{\eta_t}{m}\sum_{i=1}^{m}\nabla f\lp\mw^{t}(i);Z_{j_t(i)}\rp.
\end{equation}

\subsection{Proof of \cref{convex case theorem 1}}\label{B.1}
\begin{proof}
	
	Recall the definition of pointwise uniform stability, we need to measure the gap or distance between two sets of weights in two datasets that differs by only one sample. Let ${\mw}^{t+1}$ and ${\mv}^{t+1}$ be produced by D-SGD based on $S$ and $S_{rk}$ (See \cref{pointwise uniform dl}) respectively. Specifically, these sets are constructed as follows:
	\begin{align*}
		S=&\{Z_{1(1)},\cdots,Z_{n(1)},\cdots,Z_{1(r)},\cdots,Z_{k(r)},\cdots,Z_{n(r)},\cdots,Z_{1(m)},\cdots,Z_{n(m)}\},\\
		S_{rk}=&\{Z_{1(1)},\cdots,Z_{n(1)},\cdots,Z_{1(r)},\cdots,\tilde{Z}_{k(r)},\cdots,Z_{n(r)},\cdots,Z_{1(m)},\cdots,Z_{n(m)}\}.
	\end{align*}
	Here, $\tilde{Z}_{k(r)}$ denotes the altered data in the $r$-th subset.
	
	Review the D-SGD update step,
	\begin{equation*}
		{\mw}^{t+1}(i)=\sum_{l=1}^{m}P_{i,l}\mw^{t}(l)-\eta_t\nabla f\lp\mw^{t}(i);Z_{j_t(i)}\rp.
	\end{equation*}
	If Change-One dataset $S_{rk}$ is used,
	\begin{equation*}
		{\mv}^{t+1}(i)=\sum_{l=1}^{m}P_{i,l}\mv^{t}(l)-\eta_t\nabla f\lp\mv^{t}(i);\tilde{Z}_{j_t(i)}\rp.
	\end{equation*}
	Consider the global weight difference, with the probability $1-\frac{1}{n}$,
	\begin{align*}
		&\ls{\mw}^{t+1}-{\mv}^{t+1}\rs\\
		=&\ls\frac{1}{m}\sum_{i=1}^{m}\mw^{t+1}(i)-\frac{1}{m}\sum_{i=1}^{m}\mv^{t+1}(i)\rs\\
		=&\frac{1}{m}\sum_{i=1}^{m}\ls\sum_{l=1}^{m}P_{i,l}\mw^{t}(l)-\eta_t\nabla f\lp\mw^{t}(i);Z_{j_t(i)}\rp-\sum_{l=1}^{m}P_{i,l}\mv^{t}(l)+\eta_t\nabla f\lp\mv^{t}(i);\tilde{Z}_{j_t(i)}\rp\rs\\
		=&\frac{1}{m}\sum_{i=1}^{m}\ls\sum_{l=1}^{m}P_{i,l}\mw^{t}(l)-\eta_t\nabla f\lp\mw^{t}(i);Z_{j_t(i)}\rp-\sum_{l=1}^{m}P_{i,l}\mv^{t}(l)+\eta_t\nabla f\lp\mv^{t}(i);{Z}_{j_t(i)}\rp\rs\\
		=&\frac{1}{m}\sum_{i=1}^{m}\Bigg\|\sum_{l=1}^{m}P_{i,l}\mw^{t}(l)-\eta_t\nabla f\lp\mw^{t}(i);Z_{j_t(i)}\rp+\eta_t\nabla f\lp\mw^{t};Z_{j_t(i)}\rp-\eta_t\nabla f\lp\mw^{t};Z_{j_t(i)}\rp\\
		&\quad\quad\quad\quad-\sum_{l=1}^{m}P_{i,l}\mv^{t}(l)+\eta_t\nabla f\lp\mv^{t}(i);Z_{j_t(i)}\rp+\eta_t\nabla f\lp\mv^{t};Z_{j_t(i)}\rp-\eta_t\nabla f\lp\mv^{t};Z_{j_t(i)}\rp\Bigg\|_2\\
		\leq&\frac{1}{m}\sum_{i=1}^{m}\ls\sum_{l=1}^{m}P_{i,l}\mw^{t}(l)-\eta_t\nabla f\lp\mw^{t};Z_{j_t(i)}\rp-\sum_{l=1}^{m}P_{i,l}\mv^{t}(l)+\eta_t\nabla f\lp\mv^{t};Z_{j_t(i)}\rp\rs\\
		&\quad\quad+\frac{1}{m}\sum_{i=1}^{m}\ls\eta_t\nabla f\lp\mw^{t};Z_{j_t(i)}\rp-\eta_t\nabla f\lp\mw^{t}(i);Z_{j_t(i)}\rp\rs\\
		&\quad\quad+\frac{1}{m}\sum_{i=1}^{m}\ls\eta_t\nabla f\lp\mv^{t}(i);Z_{j_t(i)}\rp-\eta_t\nabla f\lp\mv^{t};Z_{j_t(i)}\rp\rs\\
		\leq
		&\frac{1}{m}\ls\sum_{i=1}^{m}\sum_{l=1}^{m}P_{i,l}\mw^{t}(l)-\sum_{i=1}^{m}\eta_t\nabla f\lp\mw^{t};Z_{j_t(i)}\rp-\sum_{i=1}^{m}\sum_{l=1}^{m}P_{i,l}\mv^{t}(l)+\sum_{i=1}^{m}\eta_t\nabla f\lp\mv^{t};Z_{j_t(i)}\rp\rs\\
		&\quad\quad+\frac{\eta_t\beta}{m}\sum_{i=1}^{m}\ls\mw^{t}-\mw^{t}(i)\rs+\frac{\eta_t\beta}{m}\sum_{i=1}^{m}\ls\mv^{t}-\mv^{t}(i)\rs.\\
		\overset{\textbf{(a)}}{\leq}&\frac{1}{m}\ls\sum_{l=1}^{m}\mw^{t}(l)-\sum_{i=1}^{m}\eta_t\nabla f\lp\mw^{t};Z_{j_t(i)}\rp-\sum_{l=1}^{m}\mv^{t}(l)+\sum_{i=1}^{m}\eta_t\nabla f\lp\mv^{t};Z_{j_t(i)}\rp\rs\\
		&\quad\quad+\frac{\eta_t\beta}{m}\sum_{i=1}^{m}\ls\mw^{t}-\mw^{t}(i)\rs+\frac{\eta_t\beta}{m}\sum_{i=1}^{m}\ls\mv^{t}-\mv^{t}(i)\rs\\
		\overset{\textbf{(b)}}{\leq}&\ls\mw^{t}-\frac{1}{m}\sum_{i=1}^{m}\eta_t\nabla f\lp\mw^{t};Z_{j_t(i)}\rp-\mv^{t}+\frac{1}{m}\sum_{i=1}^{m}\eta_t\nabla f\lp\mv^{t};Z_{j_t(i)}\rp\rs\\
		&\quad\quad+\frac{\eta_t\beta}{\sqrt{m}}\lk\sum_{i=1}^{m}\ls{\mw}^{t}-\mw^{t}(i)\rs^2\rk^\frac{1}{2}+\frac{\eta_t\beta}{\sqrt{m}}\lk\sum_{i=1}^{m}\ls{\mv}^{t}-\mv^{t}(i)\rs^2\rk^\frac{1}{2}\\
		\overset{\textbf{(c)}}{\leq}&\ls{\mw}^{t}-{\mv}^{t}\rs+4\eta_t\beta L\sum_{q=1}^{t-1}\eta_q\lambda^{t-q-1},
	\end{align*}
	where the inequality $\textbf{(a)}$  the $\beta$-smoothness and the doubly random matrix property, the inequality $\textbf{(b)}$ relies on the basic inequality, and the inequality $\textbf{(c)}$ employs the non-expansive operator (see \cref{non}).
	
	With the probability $\frac{1}{n}$, if $j_t=k$ (anomalous sample $\tilde{Z}_{k(r)}$ is selected during the update process), 
	\begin{align*}
		&\ls{\mw}^{t+1}-{\mv}^{t+1}\rs\\
		\leq&\frac{1}{m}\sum_{i=1}^{m}\ls\sum_{l=1}^{m}P_{i,l}\mw^{t}(l)-\eta_t\nabla f\lp\mw^{t};Z_{k(i)}\rp-\sum_{l=1}^{m}P_{i,l}\mv^{t}(l)+\eta_t\nabla f\lp\mv^{t};\tilde{Z}_{k(i)}\rp\rs\\
		&+\frac{1}{m}\sum_{i=1}^{m}\ls\eta_t\nabla f\lp\mw^{t};Z_{k(i)}\rp-\eta_t\nabla f\lp\mw^{t}(i);Z_{k(i)}\rp\rs\\
		&+\frac{1}{m}\sum_{i=1}^{m}\ls\eta_t\nabla f\lp\mv^{t}(i);\tilde{Z}_{k(i)}\rp-\eta_t\nabla f\lp\mv^{t};\tilde{Z}_{k(i)}\rp\rs\\
		\leq&\frac{1}{m}\sum_{i=1}^{m}\ls\sum_{l=1}^{m}P_{i,l}\mw^{t}(l)-\eta_t\nabla f\lp\mw^{t};Z_{k(i)}\rp-\sum_{l=1}^{m}P_{i,l}\mv^{t}(l)+\eta_t\nabla f\lp\mv^{t};\tilde{Z}_{k(i)}\rp\rs\\
		&+4\eta_t\beta L\sum_{q=1}^{t-1}\eta_q\lambda^{t-q-1}\\
		\overset{\textbf{(d)}}{\leq}&\ls\mw^{t}-\frac{\eta_t}{m}\sum_{i=1,i\neq r}^{m}\nabla f\lp\mw^{t};Z_{k(i)}\rp-\mv^t+\frac{\eta_t}{m}\sum_{i=1,i\neq r}^{m}\nabla f\lp\mv^{t};Z_{k(i)}\rp\rs\\
		&+\frac{\eta_t}{m}\ls\nabla f\lp\mw^{t};Z_{k(r)}\rp-\nabla f\lp\mv^{t};\tilde{Z}_{k(r)}\rp\rs+4\eta_t\beta L\sum_{q=1}^{t-1}\eta_q\lambda^{t-q-1}\\
		\overset{\textbf{(e)}}{\leq}&\ls{\mw}^{t}-{\mv}^{t}\rs+4\eta_t\beta L\sum_{q=1}^{t-1}\eta_q\lambda^{t-q-1}+\frac{2\eta_tL}{m}.
	\end{align*}
	For the above inequality $\textbf{(d)}$, we separate the machine $r$ to determine whether anomalous samples exist. This ensures that the former can continue leveraging the non-expansive operator property (See \cref{non}). Inequality $\textbf{(e)}$, on the other hand, is derived based on the L-Lipschitz assumption (See \cref{Lipschitz}).
	
	Combining the two cases mentioned above, we further derive the following result
	\begin{equation}
		\ls{\mw}^{t+1}-{\mv}^{t+1}\rs\leq\ls{\mw}^{t}-{\mv}^{t}\rs+4\eta_t\beta L\sum_{q=1}^{t-1}\eta_q\lambda^{t-q-1}+\frac{2\eta_tL}{m}\mathbb{I}_{[j_t=k]},
	\end{equation}
	where $\mathbb{I}_{[j_t=k]}$ denotes the event that machine selects the $k$-th sample at step $t$.
	Apply the above inquality recursively to obtain
	\begin{equation}\label{convex recursion}
		\ls{\mw}^{T+1}-{\mv}^{T+1}\rs\leq4\beta L\sum_{t=1}^{T}\eta_t\sum_{q=1}^{t-1}\eta_q\lambda^{t-q-1}+\frac{2L}{m}\sum_{t=1}^{T}\eta_t\mathbb{I}_{[j_t=k]}.
	\end{equation}
	Based on the definition of $\epsilon$-pointwise stability (See \cref{pointwise uniform dl}) and the Lipschitz property of the loss function, we can conclude that
	\begin{align*}
		sup_{Z}\left|f\lp{\mw}^{T+1};Z\rp-f\lp{\mv}^{T+1};Z\rp\right|
		&\leq L \ls{\mw}^{T+1}-{\mv}^{T+1}\rs\\
		&\leq4\beta L^2\sum_{t=1}^{T}\eta_t\sum_{q=1}^{t-1}\eta_q\lambda^{t-q-1}+\frac{2L^2}{m}\sum_{t=1}^{T}\eta_t\mathbb{I}_{[j_t=k]}\\
		&=\epsilon_{rk}.
	\end{align*}
	It the follows that
	\begin{align*}
		\frac{1}{mn}\sum_{r=1}^{m}\sum_{k=1}^{n}\epsilon_{rk}^2=\frac{4L^4}{n}\sum_{k=1}^{n}\lp2\beta \sum_{t=1}^{T}\eta_t\sum_{q=1}^{t-1}\eta_q\lambda^{t-q-1}+\frac{1}{m}\sum_{t=1}^{T}\eta_t\mathbb{I}_{[j_t=k]}\rp^2
	\end{align*}
	and 
	\begin{align*}
		\frac{1}{mn}\sum_{r=1}^{m}\sum_{k=1}^{n}\epsilon_k=&\frac{2L^2}{mn}\sum_{r=1}^{m}\sum_{k=1}^{n}\lp2\beta \sum_{t=1}^{T}\eta_t\sum_{q=1}^{t-1}\eta_q\lambda^{t-q-1}+\frac{1}{m}\sum_{t=1}^{T}\eta_t\mathbb{I}_{[j_t=k]}\rp\\
		=&4\beta L^2\sum_{t=1}^{T}\eta_t\sum_{q=1}^{t-1}\eta_q\lambda^{t-q-1}+\frac{2L^2}{mn}\sum_{t=1}^{T}\sum_{k=1}^{n}\eta_t\mathbb{I}_{[j_t=k]}\\
		=&4\beta L^2\sum_{t=1}^{T}\eta_t\sum_{q=1}^{t-1}\eta_q\lambda^{t-q-1}+\frac{2L^2}{mn}\sum_{t=1}^{T}\eta_t,
	\end{align*}
	where the last equality uses $\sum_{k=1}^{n}\mathbb{I}_{[j_t=k]}=1$.
\end{proof}
\begin{remark}\label{convex remark}
	Now, let's compare {Pointwise Uniform Stability} with {Uniform Stability} in the 
	{Convex Case}.
	Based on \cref{convex recursion}, we can also derive the result for the uniform stability of the D-SGD algorithm, specifically:
	\begin{align}
		\mathbb{E}[\epsilon_{\emph{uniform}}]=4\beta L^2\sum_{t=1}^{T}\eta_t\sum_{q=1}^{t-1}\eta_q\lambda^{t-q-1}+\frac{2L^2}{mn}\sum_{t=1}^{T}\eta_t\label{U1}.
	\end{align}
	Alternatively, it can also be written as:
	\begin{align*}
		\epsilon_{\emph{uniform}}=4\beta L^2\sum_{t=1}^{T}\eta_t\sum_{q=1}^{t-1}\eta_q\lambda^{t-q-1}+\frac{2L^2}{m}\max_{k\in[n]}\sum_{t=1}^{T}\eta_t\mathbb{I}_{[j_t=k]}.
	\end{align*}
	For comparison, the result from \cite{sun2021stability} is given by:
	\begin{align*}
		\mathbb{E}[\epsilon_{\emph{uniform}}]=4 L^2\sum_{t=1}^{T}\lp1+\eta_t\beta\rp\sum_{q=1}^{t-1}\eta_q\lambda^{t-q-1}+\frac{2L^2}{mn}\sum_{t=1}^{T}\eta_t.
	\end{align*}
	Note that Theorem 1 from \cite{sun2021stability} contains a minor typo. After verification, we have presented the corrected result above for comparison. We can observe that our result is more compact than the one in \cite{sun2021stability} due to the absence of an extra term, making it tighter. Additionally, we have provided two different forms of the uniform stability result, which will facilitate a comparison between Uniform Stability (US) and Pointwise Uniform Stability (PUS).
\end{remark}

\subsection{Discussion about learning rate}
\label{convex case lr}
\textbf{Case I [Constant stepsize]:}
\begin{proof}
	When $\eta_t\equiv \eta\leq\frac{2}{L}$, then we can get
	\begin{equation}
		\ls{\mw}^{t+1}-{\mv}^{t+1}\rs\leq\ls{\mw}^{t}-{\mv}^{t}\rs+\frac{4\eta^2\beta L}{1-\lambda}+\frac{2\eta L}{m}\mathbb{I}_{[j_t=k]}.
	\end{equation}
	By iteratively employing the aforementioned inequality, we can derive
	\begin{align}
		\ls{\mw}^{T+1}-{\mv}^{T+1}\rs\leq&\sum_{t=1}^{T}\lp\frac{4\eta^2\beta L}{1-\lambda}+\frac{2\eta L}{m}\mathbb{I}_{[j_t=k]}\rp\nonumber\\
		=&\frac{4\eta^2\beta LT}{1-\lambda}+\frac{2\eta L}{m}\sum_{t=1}^{T}\mathbb{I}_{[j_t=k]}\label{convex constant}.
	\end{align}
	Then
	\begin{align*}
		\epsilon_{rk}=&\frac{4\eta^2\beta L^2T}{1-\lambda}+\frac{2\eta L^2}{m}\sum_{t=1}^{T}\mathbb{I}_{[j_t=k]}\\
		=&2L^2\lp\frac{2\eta^2\beta T}{1-\lambda}+\frac{\eta}{m}\sum_{t=1}^{T}\mathbb{I}_{[j_t=k]}\rp.
	\end{align*}
	It then follows that 
	\begin{align*}
		\frac{1}{mn}\sum_{r=1}^{m}\sum_{k=1}^{n}\epsilon_{rk}^2=\frac{4L^4}{n}\sum_{k=1}^{n}\lp\frac{2\eta^2\beta T}{1-\lambda}+\frac{\eta}{m}\sum_{t=1}^{T}\mathbb{I}_{[j_t=k]}\rp^2
	\end{align*}
	and
	\begin{align*}
		\frac{1}{mn}\sum_{r=1}^{m}\sum_{k=1}^{n}\epsilon_{rk}=&\frac{2L^2}{ mn}\sum_{r=1}^{m}\sum_{k=1}^{n}\lp\frac{2\eta^2\beta T}{1-\lambda}+\frac{\eta}{m}\sum_{t=1}^{T}\mathbb{I}_{[j_t=k]}\rp\\
		=&{2L^2\eta T}\lp\frac{2\eta\beta }{1-\lambda}+\frac{1}{mn}\rp.
	\end{align*}
	Moreover,
	\begin{align*}
		\lp\frac{1}{mn}\sum_{r=1}^{m}\sum_{k=1}^{n}\epsilon_{rk}^2\rp^{\frac{1}{2}}=&\lp\frac{4L^4}{n}\sum_{k=1}^{n}\lp\frac{2\eta^2\beta T}{1-\lambda}+\frac{\eta}{m}\sum_{t=1}^{T}\mathbb{I}_{[j_t=k]}\rp^2\rp^{\frac{1}{2}}\\
		=&{2L^2}\lp\frac{1}{n}\sum_{k=1}^{n}\lp\frac{2\eta^2\beta T}{1-\lambda}+\frac{\eta}{m}\sum_{t=1}^{T}\mathbb{I}_{[j_t=k]}\rp^2\rp^{\frac{1}{2}}\\
		\overset{\textbf{(a)}}{\leq}&{2L^2}\lk\frac{1}{n}\sum_{k=1}^{n}\lp2\lp\frac{2\eta^2\beta T}{1-\lambda}\rp^2+2\lp\frac{\eta}{m}\sum_{t=1}^{T}\mathbb{I}_{[j_t=k]}\rp^2\rp\rk^{\frac{1}{2}}\\
		\leq&{2L^2}\lk8\lp\frac{\eta^2\beta T}{1-\lambda}\rp^2+\frac{2}{n}\sum_{k=1}^{n}\lp\frac{\eta}{m}\sum_{t=1}^{T}\mathbb{I}_{[j_t=k]}\rp^2\rk^{\frac{1}{2}}\\
		\overset{\textbf{(b)}}{\leq}&\frac{4\sqrt{2}L^2\eta^2\beta T}{1-\lambda}+\frac{2\sqrt{2}L^2\eta}{m}\lp\frac{1}{n}\sum_{k=1}^{n}\lp\sum_{t=1}^{T}\mathbb{I}_{[j_t=k]}\rp^2\rp^{\frac{1}{2}},
	\end{align*}
	where Equation \textbf{(a)} uses the elementary inequality $(x+y)^2\leq2x^2+2y^2$ and Equation \textbf{(b)} relies on $\sqrt{x+y}\leq\sqrt{x}+\sqrt{y}$.
\end{proof}
\begin{remark}\label{compare}
	Now, we compare the US and PUS under convex case. If $T=n$, we can get
	\begin{align*}
		\lp\frac{1}{n}\sum_{k=1}^{n}\lp\sum_{t=1}^{n}\mathbb{I}_{[j_t=k]}\rp^2\rp^{\frac{1}{2}}	\leq\lp\max_{k\in[n]}\sum_{t=1}^{n}\mathbb{I}_{[j_t=k]}\rp^{\frac{1}{2}}.
	\end{align*}
	Based on the above inequality \citep{fan2024high}, it is easy to verify that 
	$\lk\varDelta_{rk}^2\rk^{1/2}\leq\epsilon_{\emph{uniform}}$.
\end{remark}
\begin{remark}[More details about Introduction]
	By combining \cref{U1} and the relationship between stability and generalization in the expectation sense \citep{hardt2016train,sun2021stability}, we can derive:
	\begin{align*}
		\Big| \mathbb{E}_{S,\A}\lk R_S(\A(S))-R(\A(S))\rk\Big|\leq4\beta L^2\sum_{t=1}^{T}\eta_t\sum_{q=1}^{t-1}\eta_q\lambda^{t-q-1}+\frac{2L^2}{mn}\sum_{t=1}^{T}\eta_t.
	\end{align*}
	If $\eta_t=\eta$,
	\begin{equation}
		\ls{\mw}^{t+1}-{\mv}^{t+1}\rs\leq\ls{\mw}^{t}-{\mv}^{t}\rs+\frac{4\eta^2\beta L}{1-\lambda}+\frac{2\eta L}{mn}.
	\end{equation}
	Then we can get 
	\begin{align*}
		\epsilon_{\emph{uniform}}=\frac{4\eta^2\beta L^2T}{1-\lambda}+\frac{2T\eta L^2}{mn}
		=2L^2\lp\frac{2\eta^2\beta T}{1-\lambda}+\frac{\eta T}{mn}\rp,
	\end{align*}
	and
	\begin{align*}
		\Big| \mathbb{E}_{S,\A}\lk R_S(\A(S))-R(\A(S))\rk\Big|\leq2L^2\lp\frac{2\eta^2\beta T}{1-\lambda}+\frac{\eta T}{mn}\rp.
	\end{align*}
	According to Markov's inequality, with $1-\delta$ probability, we obtain
	\begin{align*}
		R(\A(S))-R_S(\A(S))\leq\frac{2L^2}{\delta}\lp\frac{2\eta^2\beta T}{1-\lambda}+\frac{\eta T}{mn}\rp.
	\end{align*}
	If $\eta=\frac{1}{\sqrt{T}}$ and $T=mn$, we can get 
	\begin{equation*}
		R(\A(S))-R_S(\A(S))=\mathcal{O}\lp\frac{1}{\delta\sqrt{mn}}\rp.
	\end{equation*} 
\end{remark}

\noindent\textbf{Case II [Decreasing stepsize]:}
\begin{proof}
	When $\eta_t=\frac{1}{(t+1)}$, then we can get
	\begin{align}
		\ls{\mw}^{t+1}-{\mv}^{t+1}\rs\leq&\ls{\mw}^{t}-{\mv}^{t}\rs+\frac{4\beta L}{ (t+1)}\sum_{q=1}^{t-1}\frac{\lambda^{t-q-1}}{q+1}+\frac{2L}{(t+1)m}\mathbb{I}_{[j_t=k]}\nonumber\\
		\leq&\ls{\mw}^{t}-{\mv}^{t}\rs+\frac{4\beta LC_{\lambda}}{(t+1)t}+\frac{2L}{(t+1)m}\mathbb{I}_{[j_t=k]}\label{convex decrease},
	\end{align}
	where the last inequality relies on \cref{sum}.
	
	We can repeatedly use the aforementioned inequality to deduce
	\begin{align*}
		\ls{\mw}^{T+1}-{\mv}^{T+1}\rs\leq&\sum_{t=1}^{T}\lp\frac{4\beta LC_{\lambda}}{(t+1)t}+\frac{2L}{(t+1)m}\mathbb{I}_{[j_t=k]}\rp\\
		=&4\beta LC_{\lambda}\sum_{t=1}^{T}\lp\frac{1}{t}-\frac{1}{t+1}\rp+\frac{2L}{m}\sum_{t=1}^{T}\frac{1}{t+1}\mathbb{I}_{[j_t=k]}\\
		\leq&\frac{4\beta LC_{\lambda}T}{T+1}+\frac{2L}{m}\sum_{t=1}^{T}\frac{1}{t+1}\mathbb{I}_{[j_t=k]}.
	\end{align*}
	Then
	\begin{align*}
		\epsilon_{rk}=&\frac{4\beta L^2C_{\lambda}T}{T+1}+\frac{2L^2}{m}\sum_{t=1}^{T}\frac{1}{t+1}\mathbb{I}_{[j_t=k]}\\
		=&2L^2\lp\frac{2\beta C_{\lambda}T}{T+1}+\frac{1}{m}\sum_{t=1}^{T}\frac{1}{t+1}\mathbb{I}_{[j_t=k]}\rp.
	\end{align*}
	It then follows that
	\begin{align*}
		\frac{1}{mn}\sum_{r=1}^{m}\sum_{k=1}^{n}\epsilon_{rk}^2=\frac{4L^4}{n}\sum_{k=1}^{n}\lp\frac{2\beta C_{\lambda}T}{T+1}+\frac{1}{m}\sum_{t=1}^{T}\frac{1}{t+1}\mathbb{I}_{[j_t=k]}\rp^2
	\end{align*}
	and
	\begin{align*}
		\frac{1}{mn}\sum_{r=1}^{m}\sum_{k=1}^{n}\epsilon_{rk}=&\frac{2L^2}{ mn}\sum_{r=1}^{m}\sum_{k=1}^{n}\lp\frac{2\beta C_{\lambda}T}{T+1}+\frac{1}{m}\sum_{t=1}^{T}\frac{1}{t+1}\mathbb{I}_{[j_t=k]}\rp\\
		=&{2L^2}\lp\frac{2\beta T}{T+1}+\frac{\ln \lp T+1\rp}{mn}\rp.
	\end{align*}
\end{proof}
\subsection{Average weight}\label{Average weight}
It is easy to know that
\begin{align*}
	\ls\bar{\mw}^{T+1}-\bar{\mv}^{T+1}\rs=\ls\frac{\sum_{t=1}^{T+1}\eta_t\lp\mw^{t}-\mv^{t}\rp}{\sum_{t=1}^{T+1}\eta_t}\rs
	\leq\frac{\sum_{t=1}^{T+1}\eta_t\ls\mw^{t}-\mv^{t}\rs}{\sum_{t=1}^{T+1}\eta_t}.
\end{align*}
According to the \cref{convex recursion}, we can get 
\begin{equation*}
	\ls{\mw}^{t}-{\mv}^{t}\rs\leq4\beta L\sum_{{s}=1}^{t-1}\eta_{s}\sum_{q=1}^{{s}-1}\eta_q\lambda^{{s}-q-1}+\frac{2L}{m}\sum_{s=1}^{t-1}\eta_{s}\mathbb{I}_{[j_{s}=k]}.
\end{equation*}
Furthermore, 
\begin{align*}
	\ls\bar{\mw}^{T+1}-\bar{\mv}^{T+1}\rs\leq&\frac{\sum_{t=1}^{T+1}\eta_t\ls\mw^{t}-\mv^{t}\rs}{\sum_{t=1}^{T+1}\eta_t}\\
	\leq&\frac{\sum_{t=1}^{T+1}\eta_t\lp4\beta L\sum_{s=1}^{t-1}\eta_{s}\sum_{q=1}^{s-1}\eta_q\lambda^{s-q-1}+\frac{2L}{m}\sum_{s=1}^{t-1}\eta_{s}\mathbb{I}_{[j_{s}=k]}\rp}{\sum_{t=1}^{T+1}\eta_t}.
\end{align*}
We also consider the two cases (Constant stepsize case and Decreasing stepsize case)
\begin{proof}
	\textbf{Case I [Constant stepsize]:}
	
	If $\eta_t\equiv \eta\leq\frac{2}{L}$,
	by the \cref{convex constant}, we know 
	\begin{align*}
		\ls{\mw}^{t}-{\mv}^{t}\rs\leq
		\frac{4\eta^2\beta L(t-1)}{1-\lambda}+\frac{2\eta L}{m}\sum_{s=1}^{t-1}\mathbb{I}_{[j_{s}=k]}.
	\end{align*}
	Then,
	\begin{align*}
		\ls\bar{\mw}^{T+1}-\bar{\mv}^{T+1}\rs\leq&\frac{\eta\sum_{t=1}^{T+1}\lp\frac{4\eta^2\beta L(t-1)}{1-\lambda}+\frac{2\eta L}{m}\sum_{s=1}^{t-1}\mathbb{I}_{[j_{s}=k]}\rp}{(T+1)\eta}\\
		\leq&\frac{2\eta^2\beta LT}{1-\lambda}+\frac{2\eta L}{(T+1)m}\sum_{t=1}^{T+1}\sum_{s=1}^{t-1}\mathbb{I}_{[j_{s}=k]}.
	\end{align*}
	We can obtain 
	\begin{align*}
		\epsilon_{rk}=&\frac{2\eta^2\beta L^2T}{1-\lambda}+\frac{2\eta L^2}{(T+1)m}\sum_{t=1}^{T}(T+1-t)\mathbb{I}_{[j_t=k]}\\
		=&2L^2\eta\lp\frac{\eta\beta T}{1-\lambda}+\frac{\eta}{(T+1)m}\sum_{t=1}^{T}(T+1-t)\mathbb{I}_{[j_t=k]}\rp.
	\end{align*}
	It then follows that
	\begin{align*}
		\frac{1}{mn}\sum_{r=1}^{m}\sum_{k=1}^{n}\epsilon_{rk}^2=\frac{4L^4\eta^2}{n}\sum_{k=1}^{n}\lp\frac{\eta\beta T}{1-\lambda}+\frac{1}{(T+1)m}\sum_{t=1}^{T}(T+1-t)\mathbb{I}_{[j_t=k]}\rp^2
	\end{align*}
	and
	\begin{align*}
		\frac{1}{mn}\sum_{r=1}^{m}\sum_{k=1}^{n}\epsilon_{rk}=&\frac{2L^2\eta}{mn}\sum_{r=1}^{m}\sum_{k=1}^{n}\lp\frac{\eta\beta T}{1-\lambda}+\frac{1}{(T+1)m}\sum_{t=1}^{T}(T+1-t)\mathbb{I}_{[j_t=k]}\rp\\
		=&{2L^2\eta T}\lp\frac{\eta\beta}{1-\lambda}+\frac{1}{mn}\rp.
	\end{align*}
	\textbf{Case II [Decreasing stepsize]:}
	
	If $\eta_t=\frac{1}{t+1}$, by the \cref{convex decrease}, we can get
	\begin{align*}
		\ls{\mw}^{t+1}-{\mv}^{t+1}\rs
		\leq\ls{\mw}^{t}-{\mv}^{t}\rs+\frac{4\beta LC_{\lambda}}{(t+1)t}+\frac{2L}{(t+1)m}\mathbb{I}_{[j_t=k]},
	\end{align*}
	Then, 
	\begin{align*}
		\ls\bar{\mw}^{T+1}-\bar{\mv}^{T+1}\rs\leq&\frac{\sum_{t=1}^{T+1}\frac{1}{t+1}\lp\frac{4\beta LC_{\lambda}(t-1)}{t}+\frac{2L }{m}\sum_{s=1}^{t-1}\frac{1}{s+1}\mathbb{I}_{[j_{s}=k]}\rp}{\sum_{t=1}^{T+1}\frac{1}{t+1}}\\
		\leq&\frac{4\beta LC_{\lambda}\sum_{t=1}^{T+1}\frac{t-1}{(t+1)t}+\frac{2L}{m}\sum_{t=1}^{T+1}\frac{1}{t+1}\sum_{s=1}^{t-1}\frac{1}{s+1}\mathbb{I}_{[j_{s}=k]}}{\ln (T+2)}\\
		\leq&4\beta LC_{\lambda}+\frac{2L}{m\ln (T+2)}\sum_{t=1}^{T+1}\frac{1}{t+1}\sum_{s=1}^{t-1}\frac{1}{s+1}\mathbb{I}_{[j_{s}=k]},
	\end{align*}
	where the last inequality uses
	\begin{align*}
		\sum_{t=1}^{T+1}\frac{t-1}{(t+1)t}=\sum_{t=1}^{T+1}\lp\frac{2}{t+1}-\frac{1}{t}\rp
		\leq2\ln(T+2)-\ln(T+1).
	\end{align*}
	It then follows that
	\begin{align*}
		\frac{1}{mn}\sum_{r=1}^{m}\sum_{k=1}^{n}\epsilon_k^2=\frac{4L^4}{n}\sum_{k=1}^{n}\lp2\beta C_{\lambda}+\frac{2L}{m\ln (T+2)}\sum_{t=1}^{T+1}\frac{1}{t+1}\sum_{s=1}^{t-1}\frac{1}{s+1}\mathbb{I}_{[j_{s}=k]}\rp^2
	\end{align*}
	and
	\begin{align*}
		\frac{1}{mn}\sum_{r=1}^{m}\sum_{k=1}^{n}\epsilon_k=&\frac{2L^2}{mn}\sum_{r=1}^{m}\sum_{k=1}^{n}\lp2\beta C_{\lambda}+\frac{1}{m\ln (T+2)}\sum_{t=1}^{T+1}\frac{1}{t+1}\sum_{s=1}^{t-1}\frac{1}{s+1}\mathbb{I}_{[j_{s}=k]}\rp\\
		=&{4L^2}\beta C_{\lambda}+\frac{2L^2}{mn\ln (T+2)}\sum_{t=1}^{T+1}\frac{1}{t+1}\sum_{s=1}^{t-1}\frac{1}{s+1}\\
		\leq&4L^2\beta C_{\lambda}+\frac{2L^2}{mn\ln (T+2)}\sum_{t=1}^{T+1}\frac{\ln t}{t+1}\\
		\leq&4L^2\beta C_{\lambda}+\frac{L^2\ln (T+2)}{mn},
	\end{align*}
	where the last inequality relies on 
	\begin{align*}
		\sum_{t=1}^{T+1}\frac{\ln t}{t+1}\leq\frac{\ln^2(T+2)}{2}.
	\end{align*}
\end{proof}

\section{Strongly Convex Case}\label{Section strongly convex}
\subsection{Proof of \cref{strong convex theorem 1}}
\begin{proof}
	With the probability $1-\frac{1}{n}$, if $j_t\neq k$, 
	\begin{align*}
		&\ls{\mw}^{t+1}-{\mv}^{t+1}\rs\\
		=&\ls\frac{1}{m}\sum_{i=1}^{m}\mw^{t+1}(i)-\frac{1}{m}\sum_{i=1}^{m}\mv^{t+1}(i)\rs\\
		=&\frac{1}{m}\sum_{i=1}^{m}\ls\sum_{l=1}^{m}P_{i,l}\mw^{t}(l)-\eta_t\nabla f\lp\mw^{t}(i);Z_{j_t(i)}\rp-\sum_{l=1}^{m}P_{i,l}\mv^{t}(l)+\eta_t\nabla f\lp\mv^{t}(i);{Z}_{j_t(i)}\rp\rs\\
		\leq&\frac{1}{m}\ls\sum_{i=1}^{m}\sum_{l=1}^{m}P_{i,l}\mw^{t}(l)-\sum_{i=1}^{m}\eta_t\nabla f\lp\mw^{t};Z_{j_t(i)}\rp-\sum_{i=1}^{m}\sum_{l=1}^{m}P_{i,l}\mv^{t}(l)+\sum_{i=1}^{m}\eta_t\nabla f\lp\mv^{t};Z_{j_t(i)}\rp\rs\\
		&\quad\quad+\frac{\eta_t\beta}{m}\sum_{i=1}^{m}\ls\mw^{t}-\mw^{t}(i)\rs+\frac{\eta_t\beta}{m}\sum_{i=1}^{m}\ls\mv^{t}-\mv^{t}(i)\rs\\
		\leq&\ls\mw^{t}-\frac{1}{m}\sum_{i=1}^{m}\eta_t\nabla f\lp\mw^{t};Z_{j_t(i)}\rp-\mv^{t}+\frac{1}{m}\sum_{i=1}^{m}\eta_t\nabla f\lp\mv^{t};Z_{j_t(i)}\rp\rs\\
		&\quad\quad+\frac{\eta_t\beta}{\sqrt{m}}\lk\sum_{i=1}^{m}\ls{\mw}^{t}-\mw^{t}(i)\rs^2\rk^\frac{1}{2}+\frac{\eta_t\beta}{\sqrt{m}}\lk\sum_{i=1}^{m}\ls{\mv}^{t}-\mv^{t}(i)\rs^2\rk^\frac{1}{2}\\
		\leq&\lp1-\frac{\eta_t\mu}{2}\rp\ls{\mw}^{t}-{\mv}^{t}\rs+4\eta_t\beta L\sum_{q=1}^{t-1}\eta_q\lambda^{t-q-1},
	\end{align*}
	where the last inequality uses the Lemma \ref{non}.
	
	With the probability $\frac{1}{n}$, if $j_t=k$, 
	\begin{align*}
		&\ls{\mw}^{t+1}-{\mv}^{t+1}\rs\\
		=&\ls\frac{1}{m}\sum_{i=1}^{m}\mw^{t+1}(i)-\frac{1}{m}\sum_{i=1}^{m}\mv^{t+1}(i)\rs\\
		\leq&\frac{1}{m}\sum_{i=1}^{m}\ls\sum_{l=1}^{m}P_{i,l}\mw^{t}(l)-\eta_t\nabla f\lp\mw^{t};Z_{k(i)}\rp-\sum_{l=1}^{m}P_{i,l}\mv^{t}(l)+\eta_t\nabla f\lp\mv^{t};\tilde{Z}_{k(i)}\rp\rs
		+4\eta_t\beta L\sum_{q=1}^{t-1}\eta_q\lambda^{t-q-1}\\
		\leq&\ls\mw^{t}-\frac{\eta_t}{m}\sum_{i\neq r}^{m}\nabla f\lp\mw^{t};Z_{k(i)}\rp-\mv^t+\frac{\eta_t}{m}\sum_{i\neq r}^{m}\nabla f\lp\mv^{t};Z_{k(i)}\rp\rs\\
		&+\frac{\eta_t}{m}\ls\nabla f\lp\mw^{t};Z_{k(r)}\rp-\nabla f\lp\mv^{t};\tilde{Z}_{k(r)}\rp\rs
		+4\eta_t\beta L\sum_{q=1}^{t-1}\eta_q\lambda^{t-q-1}\\
		\leq&\lp1-\frac{\eta_t\mu}{2}\rp\ls{\mw}^{t}-{\mv}^{t}\rs+4\eta_t\beta L\sum_{q=1}^{t-1}\eta_q\lambda^{t-q-1}+\frac{2\eta_tL}{m}.
	\end{align*}
	We combine the above two cases, and get
	\begin{equation}\label{strong re}
		\ls{\mw}^{t+1}-{\mv}^{t+1}\rs\leq\lp1-\frac{\eta_t\mu}{2}\rp\ls{\mw}^{t}-{\mv}^{t}\rs+4\eta_t\beta L\sum_{q=1}^{t-1}\eta_q\lambda^{t-q-1}+\frac{2\eta_tL}{m}\mathbb{I}_{[j_t=k]}.
	\end{equation}
	Apply the above inequality recursively and obtain
	\begin{equation}
		\ls{\mw}^{T+1}-{\mv}^{T+1}\rs\leq4\beta L\sum_{t=1}^{T}\eta_t\sum_{q=1}^{t-1}\eta_q\lambda^{t-q-1}\prod_{\tilde{t}=t+1}^{T}\lp1-\frac{\eta_{\tilde{t}}\mu}{2}\rp+\frac{2L}{m}\sum_{t=1}^{T}\eta_t\mathbb{I}_{[j_t=k]}\prod_{\tilde{t}=t+1}^{T}\lp1-\frac{\eta_{\tilde{t}}\mu}{2}\rp.
	\end{equation}
	Therefore, 
	\begin{align*}
		\epsilon_{rk}=2L^2\sum_{t=1}^{T}\lp2\beta\eta_t\sum_{q=1}^{t-1}\eta_q\lambda^{t-q-1}+\frac{\eta_t}{m}\mathbb{I}_{[j_t=k]}\rp\prod_{\tilde{t}=t+1}^{T}\lp1-\frac{\eta_{\tilde{t}}\mu}{2}\rp.
	\end{align*}
	It then follows that 
	\begin{align*}
		\frac{1}{mn}\sum_{r=1}^{m}\sum_{k=1}^{n}\epsilon_{rk}^2=\frac{4L^4}{n}\sum_{k=1}^{n}\lp\sum_{t=1}^{T}\lp2\beta\eta_t\sum_{q=1}^{t-1}\eta_q\lambda^{t-q-1}+\frac{\eta_t}{m}\mathbb{I}_{[j_t=k]}\rp\prod_{\tilde{t}=t+1}^{T}\lp1-\frac{\eta_{\tilde{t}}\mu}{2}\rp\rp^2
	\end{align*}
	and 
	\begin{align*}
		\frac{1}{mn}\sum_{r=1}^{m}\sum_{k=1}^{n}\epsilon_k=&\frac{2L^2}{mn}\sum_{r=1}^{m}\sum_{k=1}^{n}\lp\sum_{t=1}^{T}\lp2\beta\eta_t\sum_{q=1}^{t-1}\eta_q\lambda^{t-q-1}+\frac{\eta_t}{m}\mathbb{I}_{[j_t=k]}\rp\prod_{\tilde{t}=t+1}^{T}\lp1-\frac{\eta_{\tilde{t}}\mu}{2}\rp\rp\\
		{=}&4\beta L^2\sum_{t=1}^{T}\eta_t\sum_{q=1}^{t-1}\eta_q\lambda^{t-q-1}\prod_{\tilde{t}=t+1}^{T}\lp1-\frac{\eta_{\tilde{t}}\mu}{2}\rp+\frac{2L^2}{mn}\sum_{t=1}^{T}\eta_t\prod_{\tilde{t}=t+1}^{T}\lp1-\frac{\eta_{\tilde{t}}\mu}{2}\rp\\
		\overset{\textbf{(1)}}{=}&4\beta L^2\sum_{t=1}^{T}\eta_t\sum_{q=1}^{t-1}\eta_q\lambda^{t-q-1}\prod_{\tilde{t}=t+1}^{T}\lp1-\frac{\eta_{\tilde{t}}\mu}{2}\rp+\frac{4L^2}{mn\mu}\lp1-\prod_{t=1}^{T}\lp1-\frac{\eta_{\tilde{t}}\mu}{2}\rp\rp\\
		\leq&4\beta L^2\sum_{t=1}^{T}\eta_t\sum_{q=1}^{t-1}\eta_q\lambda^{t-q-1}\prod_{\tilde{t}=t+1}^{T}\lp1-\frac{\eta_{\tilde{t}}\mu}{2}\rp+\frac{4L^2}{mn\mu}
		,
	\end{align*}
	where the Equation $\textbf{(1)}$ uses
	\begin{align*}
		\sum_{t=1}^{T}\lp\prod_{\tilde{t}=t+1}^{T}\lp1-\frac{\eta_{\tilde{t}}\mu}{2}\rp-\prod_{\tilde{t}=t}^{T}\lp1-\frac{\eta_{\tilde{t}}\mu}{2}\rp\rp=\sum_{t=1}^{T}\lp1-\lp1-\frac{\eta_t\mu}{2}\rp\rp\prod_{\tilde{t}=t+1}^{T}\lp1-\frac{\eta_{\tilde{t}}\mu}{2}\rp.
	\end{align*}
\end{proof}
\subsection{Discussion about learning rate}\label{compare 11}
\begin{proof}
	\textbf{Case I [Constant stepsize]:}
	
	When $\eta_t\equiv \eta\leq\frac{1}{L}$, then we can get
	\begin{equation}
		\ls{\mw}^{t+1}-{\mv}^{t+1}\rs\leq\lp1-\frac{\eta\mu}{2}\rp\ls{\mw}^{t}-{\mv}^{t}\rs+\frac{4\eta^2\beta L}{1-\lambda}+\frac{2\eta L}{m}\mathbb{I}_{[j_t=k]}.
	\end{equation}
	Recursively iterate on the aforementioned inequality,
	\begin{align*}
		\ls{\mw}^{T+1}-{\mv}^{T+1}\rs\leq&\sum_{t=0}^{T-1}\lp1-\frac{\eta\mu}{2}\rp^t\frac{4L\eta^2\beta}{1-\lambda}+\frac{2L\eta}{m}\sum_{t=1}^{T}\lp1-\frac{\eta\mu}{2}\rp^{T-t}\mathbb{I}_{[j_t=k]}\\
		\leq&\frac{8L\eta\beta }{\mu(1-\lambda)}+\frac{2L\eta}{m}\sum_{t=1}^{T}\lp1-\frac{\eta\mu}{2}\rp^{T-t}\mathbb{I}_{[j_{t}=k]}.		
	\end{align*}
	Therefore, 
	\begin{align*}
		\epsilon_{rk}=\frac{8L^2\eta\beta }{\mu(1-\lambda)}+\frac{2L^2\eta}{m}\sum_{t=1}^{T}\lp1-\frac{\eta\mu}{2}\rp^{T-t}\mathbb{I}_{[j_{t}=k]}.
	\end{align*}
	It then follows that 
	\begin{align*}
		\frac{1}{mn}\sum_{r=1}^{m}\sum_{k=1}^{n}\epsilon_{rk}^2=\frac{4L^4\eta^2}{n}\sum_{k=1}^{n}\lp\frac{4\beta}{\mu(1-\lambda)}+\frac{1}{m}\sum_{t=1}^{T}\lp1-\frac{\eta\mu}{2}\rp^{T-t}\mathbb{I}_{[j_{t}=k]}\rp^2
	\end{align*}
	and
	\begin{align*}
		\frac{1}{mn}\sum_{r=1}^{m}\sum_{k=1}^{n}\epsilon_{rk}=&\frac{2L^2\eta}{ mn}\sum_{r=1}^{m}\sum_{k=1}^{n}\lp\frac{4\beta}{\mu(1-\lambda)}+\frac{1}{m}\sum_{t=1}^{T}\lp1-\frac{\eta\mu}{2}\rp^{T-t}\mathbb{I}_{[j_{t}=k]}\rp\\
		=&\frac{8L^2\eta\beta}{\mu(1-\lambda)}+\frac{2L^2\eta}{mn}\sum_{t=1}^{T}\lp1-\frac{\eta\mu}{2}\rp^{T-t}\\
		=&\frac{4L^2}{\mu }\lp\frac{2\eta\beta}{1-\lambda}+\frac{1}{mn}\rp.
	\end{align*}
	Moreover, it is easy to know that 
	\begin{align*}
		\epsilon_{\textrm{uniform}}=\frac{8L^2\eta\beta}{\mu(1-\lambda)}+\frac{2L^2\eta}{m}\max_{k\in[n]}\sum_{t=1}^{T}\lp1-\frac{\eta\mu}{2}\rp^{T-t}\mathbb{I}_{[j_{t}=k]}.
	\end{align*}
	Then it also straightforward to verify that $\lk\varDelta_{rk}^2\rk^{1/2}\leq\epsilon_{\textrm{uniform}}$.\\
	\textbf{Case II [Decreasing stepsize]:}
	
	When $\eta_t=\frac{2}{\mu(t+1)}$, then we can get
	\begin{align*}
		\ls{\mw}^{t+1}-{\mv}^{t+1}\rs\leq&\lp1-\frac{1}{t+1}\rp\ls{\mw}^{t}-{\mv}^{t}\rs+\frac{16\beta L}{\mu^2 (t+1)}\sum_{q=1}^{t-1}\frac{\lambda^{t-q-1}}{q+1}+\frac{4L}{\mu (t+1)m}\mathbb{I}_{[j_t=k]}\\
		\leq&\lp1-\frac{1}{t+1}\rp\ls{\mw}^{t}-{\mv}^{t}\rs+\frac{16\beta LC_{\lambda}}{\mu^2 (t+1)t}+\frac{4L}{\mu (t+1)m}\mathbb{I}_{[j_t=k]},
	\end{align*}
	where the last inequality rely on the Lemma \ref{sum}.
	
	Apply recursive iteration to the previously mentioned inequality,
	\begin{align*}
		\ls{\mw}^{T+1}-{\mv}^{T+1}\rs\leq&4L\sum_{t=1}^{T}\prod_{\tilde{t}=t+1}^{T}\frac{\tilde{t}}{\tilde{t}+1}\lp\frac{4\beta C_{\lambda}}{\mu^2 (t+1)t}+\frac{1}{\mu (t+1)m}\mathbb{I}_{[j_t(r)=k]}\rp\\
		\leq&4L\sum_{t=1}^{T}\frac{t+1}{T}\lp\frac{4\beta C_{\lambda}}{\mu^2 (t+1)t}+\frac{1}{\mu (t+1)m}\mathbb{I}_{[j_t=k]}\rp\\
		\leq&\frac{4L}{T}\sum_{t=1}^{T}\lp\frac{4\beta C_{\lambda}}{\mu^2t}+\frac{1}{\mu m}\mathbb{I}_{[j_t(r)=k]}\rp\\
		\leq&\frac{16\beta LC_{\lambda}}{T\mu^2}\lp\ln T+1\rp+\frac{4L}{T\mu m}\sum_{t=1}^{T}\mathbb{I}_{[j_t=k]},
	\end{align*}
	where the last inequality uses $\sum_{t=1}^{T}\frac{1}{t}\leq \ln T+1$.
	
	Therefore, 
	\begin{align*}
		\epsilon_{rk}=&\frac{16\beta L^2C_{\lambda}}{T\mu^2}\lp\ln T+1\rp+\frac{4L^2}{T\mu m}\sum_{t=1}^{T}\mathbb{I}_{[j_t=k]}\\
		=&\frac{4L^2}{T\mu}\lp\frac{4\beta C_{\lambda}}{\mu}\lp\ln T+1\rp+\frac{1}{m}\sum_{t=1}^{T}\mathbb{I}_{[j_t=k]}\rp.
	\end{align*}
	It then follows that 
	\begin{align*}
		\frac{1}{mn}\sum_{r=1}^{m}\sum_{k=1}^{n}\epsilon_{rk}^2=\frac{16L^4}{T^2\mu n}\sum_{k=1}^{n}\lp\frac{4\beta C_{\lambda}}{\mu}\lp\ln T+1\rp+\frac{1}{m}\sum_{t=1}^{T}\mathbb{I}_{[j_t=k]}\rp^2
	\end{align*}
	and
	\begin{align*}
		\frac{1}{mn}\sum_{r=1}^{m}\sum_{k=1}^{n}\epsilon_{rk}=&\frac{4L^2}{T\mu mn}\sum_{r=1}^{m}\sum_{k=1}^{n}\lp\frac{4\beta C_{\lambda}}{\mu}\lp\ln T+1\rp+\frac{1}{m}\sum_{t=1}^{T}\mathbb{I}_{[j_t=k]}\rp\\
		=&\frac{16\beta L^2C_{\lambda}}{T\mu^2}\lp\ln T+1\rp+\frac{4L^2}{\mu mn}.
	\end{align*}
\end{proof}

\section{Nonconvex Case}\label{Section Nonconvex}
\subsection{Proof of \cref{nonconvex case theorem 1}}

With the probability $1-\frac{1}{n}$, if $j_t\neq k$, 
\begin{align*}
	&\ls{\mw}^{t+1}-{\mv}^{t+1}\rs\\
	=&\ls\frac{1}{m}\sum_{i=1}^{m}\mw^{t+1}(i)-\frac{1}{m}\sum_{i=1}^{m}\mv^{t+1}(i)\rs\\
	=&\frac{1}{m}\sum_{i=1}^{m}\ls\sum_{l=1}^{m}P_{i,l}\mw^{t}(l)-\eta_t\nabla f\lp\mw^{t}(i);Z_{j_t(i)}\rp-\sum_{l=1}^{m}P_{i,l}\mv^{t}(l)+\eta_t\nabla f\lp\mv^{t}(i);{Z}_{j_t(i)}\rp\rs\\
	\leq&\frac{1}{m}\ls\sum_{i=1}^{m}\sum_{l=1}^{m}P_{i,l}\mw^{t}(l)-\sum_{i=1}^{m}\eta_t\nabla f\lp\mw^{t};Z_{j_t(i)}\rp-\sum_{i=1}^{m}\sum_{l=1}^{m}P_{i,l}\mv^{t}(l)+\sum_{i=1}^{m}\eta_t\nabla f\lp\mv^{t};Z_{j_t(i)}\rp\rs\\
	&\quad\quad+\frac{\eta_t\beta}{m}\sum_{i=1}^{m}\ls\mw^{t}-\mw^{t}(i)\rs+\frac{\eta_t\beta}{m}\sum_{i=1}^{m}\ls\mv^{t}-\mv^{t}(i)\rs\\
	\leq&\ls\mw^{t}-\frac{1}{m}\sum_{i=1}^{m}\eta_t\nabla f\lp\mw^{t};Z_{j_t(i)}\rp-\mv^{t}+\frac{1}{m}\sum_{i=1}^{m}\eta_t\nabla f\lp\mv^{t};Z_{j_t(i)}\rp\rs\\
	&\quad\quad+\frac{\eta_t\beta}{\sqrt{m}}\lk\sum_{i=1}^{m}\ls{\mw}^{t}-\mw^{t}(i)\rs^2\rk^\frac{1}{2}+\frac{\eta_t\beta}{\sqrt{m}}\lk\sum_{i=1}^{m}\ls{\mv}^{t}-\mv^{t}(i)\rs^2\rk^\frac{1}{2}\\
	\leq&\ls\mw^{t}-\mv^{t}\rs+\frac{1}{m}\ls\sum_{i=1}^{m}\eta_t\nabla f\lp\mw^{t};Z_{j_t(i)}\rp-\sum_{i=1}^{m}\eta_t\nabla f\lp\mv^{t};Z_{j_t(i)}\rp\rs\\
	&\quad\quad+\frac{\eta_t\beta}{\sqrt{m}}\lk\sum_{i=1}^{m}\ls{\mw}^{t}-\mw^{t}(i)\rs^2\rk^\frac{1}{2}+\frac{\eta_t\beta}{\sqrt{m}}\lk\sum_{i=1}^{m}\ls{\mv}^{t}-\mv^{t}(i)\rs^2\rk^\frac{1}{2}\\
	\leq&\lp1+\beta\eta_t\rp\ls{\mw}^{t}-{\mv}^{t}\rs+4\eta_t\beta L\sum_{q=1}^{t-1}\eta_q\lambda^{t-q-1},
\end{align*}
where the last inequality uses the $\beta$-smooth property.

With the probability $\frac{1}{n}$, if $j_t=k$, 
\begin{align*}
	&\ls{\mw}^{t+1}-{\mv}^{t+1}\rs\\
	=&\ls\frac{1}{m}\sum_{i=1}^{m}\mw^{t+1}(i)-\frac{1}{m}\sum_{i=1}^{m}\mv^{t+1}(i)\rs\\
	\leq&\frac{1}{m}\sum_{i=1}^{m}\ls\sum_{l=1}^{m}P_{i,l}\mw^{t}(l)-\eta_t\nabla f\lp\mw^{t};Z_{k(i)}\rp-\sum_{l=1}^{m}P_{i,l}\mv^{t}(l)+\eta_t\nabla f\lp\mv^{t};\tilde{Z}_{k(i)}\rp\rs
	+4\eta_t\beta L\sum_{q=1}^{t-1}\eta_q\lambda^{t-q-1}\\
	\leq&\ls\mw^{t}-\frac{\eta_t}{m}\sum_{i\neq r}^{m}\nabla f\lp\mw^{t};Z_{k(i)}\rp-\mv^t+\frac{\eta_t}{m}\sum_{i\neq r}^{m}\nabla f\lp\mv^{t};Z_{k(i)}\rp\rs+\frac{\eta_t}{m}\ls\nabla f\lp\mw^{t};Z_{k(r)}\rp-\nabla f\lp\mv^{t};\tilde{Z}_{k(r)}\rp\rs\\
	&+4\eta_t\beta L\sum_{q=1}^{t-1}\eta_q\lambda^{t-q-1}\\
	\leq&\lp1+\beta\eta_t\rp\ls{\mw}^{t}-{\mv}^{t}\rs+4\eta_t\beta L\sum_{j=1}^{t-1}\eta_q\lambda^{t-q-1}+\frac{2\eta_tL}{m}.
\end{align*}

We combine the above two cases and obtain
\begin{equation}\label{nonconvex re}
	\ls{\mw}^{t+1}-{\mv}^{t+1}\rs\leq\lp1+\beta\eta_t\rp\ls{\mw}^{t}-{\mv}^{t}\rs+4\eta_t\beta L\sum_{q=1}^{t-1}\eta_q\lambda^{t-q-1}+\frac{2\eta_tL}{m}\mathbb{I}_{[j_t=k]}.
\end{equation}
Apply the \cref{nonconvex re} recursively and get
\begin{equation}\label{nonconvex recursion}
	\ls{\mw}^{T+1}-{\mv}^{T+1}\rs\leq4\beta L\sum_{t=1}^{T}\eta_t\sum_{q=1}^{t-1}\eta_q\lambda^{t-q-1}\prod_{\tilde{t}=t+1}^{T}\lp1+\beta\eta_{\tilde{t}}\rp+\frac{2L}{m}\sum_{t=1}^{T}\eta_t\mathbb{I}_{[j_t=k]}\prod_{\tilde{t}=t+1}^{T}\lp1+\beta\eta_{\tilde{t}}\rp.
\end{equation}
Therefore, 
\begin{align*}
	\epsilon_k=2L^2\sum_{t=1}^{T}\lp2\beta\eta_t\sum_{q=1}^{t-1}\eta_q\lambda^{t-q-1}+\frac{\eta_t}{m}\mathbb{I}_{[j_t=k]}\rp\prod_{\tilde{t}=t+1}^{T}\lp1+\beta\eta_{\tilde{t}}\rp.
\end{align*}
It then follows that 
\begin{align*}
	\frac{1}{mn}\sum_{r=1}^{m}\sum_{k=1}^{n}\epsilon_{rk}^2=\frac{4L^4}{n}\sum_{k=1}^{n}\lp\sum_{t=1}^{T}\lp2\beta\eta_t\sum_{q=1}^{t-1}\eta_q\lambda^{t-q-1}+\frac{\eta_t}{m}\mathbb{I}_{[j_t=k]}\rp\prod_{\tilde{t}=t+1}^{T}\lp1+\beta\eta_{\tilde{t}}\rp\rp^2
\end{align*}

and 
\begin{align*}
	\frac{1}{mn}\sum_{r=1}^{m}\sum_{k=1}^{n}\epsilon_{rk}=&\frac{2L^2}{mn}\sum_{r=1}^{m}\sum_{k=1}^{n}\lp\sum_{t=1}^{T}\lp2\beta\eta_t\sum_{q=1}^{t-1}\eta_q\lambda^{t-q-1}+\frac{\eta_t}{m}\mathbb{I}_{[j_t=k]}\rp\prod_{\tilde{t}=t+1}^{T}\lp1+\beta\eta_{\tilde{t}}\rp\rp\\
	=&4\beta L^2\sum_{t=1}^{T}\eta_t\sum_{q=1}^{t-1}\eta_q\lambda^{t-q-1}\prod_{\tilde{t}=t+1}^{T}\lp1+\beta\eta_{\tilde{t}}\rp+\frac{2L^2}{mn}\sum_{t=1}^{T}\eta_t\prod_{\tilde{t}=t+1}^{T}\lp1+\beta\eta_{\tilde{t}}\rp.
\end{align*}

\subsection{Discussion about learning rate}\label{nonconvex differ learning}
\textbf{Case I [Constant stepsize]:}

When $\eta_t\equiv \eta$, according to \cref{nonconvex re}, we can get
\begin{equation}
	\ls{\mw}^{t+1}-{\mv}^{t+1}\rs\leq\lp1+\beta\eta\rp\ls{\mw}^{t}-{\mv}^{t}\rs+\frac{4\eta^2\beta L}{1-\lambda}+\frac{2\eta_tL}{m}\mathbb{I}_{[j_t=k]}.
\end{equation}
Recursively iterate on the aforementioned inequality,
\begin{align}
	\ls{\mw}^{T+1}-{\mv}^{T+1}\rs
	\leq&\sum_{t=0}^{T-1}\lp1+\beta\eta\rp^t\frac{4L\eta^2\beta }{1-\lambda}+\frac{2L\eta}{m}\sum_{t=1}^{T}\lp1+\beta\eta\rp^{T-t}\mathbb{I}_{[j_t(r)=k]}\nonumber\\
	\leq&\frac{4L\eta}{1-\lambda}\lp1+\beta\eta\rp^T+\frac{2L\eta}{ m}\sum_{t=1}^{T}\lp1+\beta\eta\rp^{T-t}\mathbb{I}_{[j_{t}=k]}\label{nonconvex constant}.		
\end{align}
Therefore, 
\begin{align*}
	\epsilon_{rk}=\frac{4L^2\eta}{1-\lambda}\lp1+\beta\eta\rp^T+\frac{2L^2\eta}{ m}\sum_{t=1}^{T}\lp1+\beta\eta\rp^{T-t}\mathbb{I}_{[j_{t}=k]}.
\end{align*}
It then follows that 
\begin{align*}
	\frac{1}{mn}\sum_{r=1}^{m}\sum_{k=1}^{n}\epsilon_{rk}^2=\frac{4L^4\eta^2}{n}\sum_{k=1}^{n}\lp\frac{2}{1-\lambda}\lp1+\beta\eta\rp^T+\frac{1}{ m}\sum_{t=1}^{T}\lp1+\beta\eta\rp^{T-t}\mathbb{I}_{[j_{t}=k]}\rp^2,
\end{align*}
and
\begin{align*}
	\frac{1}{mn}\sum_{r=1}^{m}\sum_{k=1}^{n}\epsilon_{rk}=&\frac{2L^2\eta}{ mn}\sum_{r=1}^{m}\sum_{k=1}^{n}\lp\frac{2}{1-\lambda}\lp1+\beta\eta\rp^T+\frac{1}{ m}\sum_{t=1}^{T}\lp1+\beta\eta\rp^{T-t}\mathbb{I}_{[j_{t}=k]}\rp\\
	=&\frac{4L^2\eta}{1-\lambda}\lp1+\beta\eta\rp^T+\frac{2L^2\eta}{mn}\sum_{t=1}^{T}\lp1+\beta\eta\rp^{T-t}\\
	=&{2L^2}\lp\frac{2\eta}{1-\lambda}+\frac{1}{mn\beta}\rp\lp1+\beta\eta\rp^T.
\end{align*}

\noindent\textbf{Case II [Decreasing stepsize]:}

When $\eta_t=\frac{1}{t+1}$, then we have
\begin{align*}
	\ls{\mw}^{t+1}-{\mv}^{t+1}\rs\leq&\lp1+\frac{\beta}{t+1}\rp\ls{\mw}^{t}-{\mv}^{t}\rs+\frac{4\beta L}{(t+1)}\sum_{q=1}^{t-1}\frac{\lambda^{t-q-1}}{q+1}+\frac{2L}{(t+1)m}\mathbb{I}_{[j_t=k]}\\
	\leq&\lp1+\frac{\beta}{t+1}\rp\ls{\mw}^{t}-{\mv}^{t}\rs+\frac{4\beta LC_{\lambda}}{ (t+1)t}+\frac{2L}{(t+1)m}\mathbb{I}_{[j_t=k]}.
\end{align*}

Apply recursive iteration to the above inequality,
\begin{align*}
	\ls{\mw}^{T+1}-{\mv}^{T+1}\rs\leq&2L\sum_{t=1}^{T}\prod_{\tilde{t}=t+1}^{T}\lp1+\frac{\beta}{\tilde{t}+1}\rp\lp\frac{4\beta C_{\lambda}}{(t+1)t}+\frac{1}{(t+1)m}\mathbb{I}_{[j_t=k]}\rp\\
	\leq&4L\sum_{t=1}^{T}\prod_{\tilde{t}=t+1}^{T}\exp\lp\frac{\beta}{\tilde{t}+1}\rp\lp\frac{4\beta C_{\lambda}}{(t+1)t}+\frac{1}{(t+1)m}\mathbb{I}_{[j_t=k]}\rp\\
	\leq&4L\sum_{t=1}^{T}\exp\lp\beta\sum_{\tilde{t}=t+1}^{T}\frac{1}{\tilde{t}+1}\rp\lp\frac{4\beta C_{\lambda}}{(t+1)t}+\frac{1}{(t+1)m}\mathbb{I}_{[j_t=k]}\rp\\
	\leq&4L\sum_{t=1}^{T}\exp\lp\beta\log\frac{T+1}{t+1}\rp\lp\frac{4\beta C_{\lambda}}{(t+1)t}+\frac{1}{(t+1)m}\mathbb{I}_{[j_t=k]}\rp\\
	\leq&4L(T+1)^\beta\sum_{t=1}^{T}\lp\frac{1}{t+1}\rp^\beta\lp\frac{4\beta C_{\lambda}}{(t+1)t}+\frac{1}{(t+1)m}\mathbb{I}_{[j_t=k]}\rp\\
	\leq&4L(T+1)^\beta\sum_{t=1}^{T}\lp\frac{4\beta C_{\lambda}}{t(t+1)^{\beta+1}}+\frac{1}{m(t+1)^{\beta+1}}\mathbb{I}_{[j_t=k]}\rp\\
	\leq&4L(T+1)^\beta\sum_{t=1}^{T}\lp\frac{4\beta C_{\lambda}}{t^{\beta+2}}+\frac{1}{m(t+1)^{\beta+1}}\mathbb{I}_{[j_t=k]}\rp\\
	\leq&4L(T+1)^{\beta}\lp4 C_{\lambda}+\sum_{t=1}^{T}\frac{1}{m(t+1)^{\beta+1}}\mathbb{I}_{[j_t=k]}\rp,
\end{align*}
where the last inequality uses $\sum_{t=1}^{T}\frac{1}{t^{\beta+2}}\leq \frac{1}{\beta+1}\lp1-\frac{1}{T^{\beta+1}}\rp+1$.

Therefore, 
\begin{align*}
	\epsilon_{rk}=4L^2(T+1)^{\beta}\lp4 C_{\lambda}+\sum_{t=1}^{T}\frac{1}{m(t+1)^{\beta+1}}\mathbb{I}_{[j_t=k]}\rp.
\end{align*}
It then follows that 
\begin{align*}
	\frac{1}{mn}\sum_{r=1}^{m}\sum_{k=1}^{n}\epsilon_{rk}^2=\frac{16L^4}{n}(T+1)^{2\beta}\sum_{k=1}^{n}\lp4 C_{\lambda}+\sum_{t=1}^{T}\frac{1}{m(t+1)^{\beta+1}}\mathbb{I}_{[j_t=k]}\rp^2
\end{align*}
and
\begin{align*}
	\frac{1}{mn}\sum_{r=1}^{m}\sum_{k=1}^{n}\epsilon_{rk}=&\frac{4L^2}{mn}(T+1)^{\beta}\sum_{r=1}^{m}\sum_{k=1}^{n}\lp4 C_{\lambda}+\sum_{t=1}^{T}\frac{1}{m(t+1)^{\beta+1}}\mathbb{I}_{[j_t=k]}\rp\\
	=&4L^2(T+1)^{\beta}\lp{4 C_{\lambda}}+\frac{1}{mn}\sum_{t=1}^{T}\frac{1}{(t+1)^{\beta+1}}\rp\\
	\leq&4L^2(T+1)^{\beta}\lp{4 C_{\lambda}}+\frac{1}{\beta mn}\rp,
\end{align*}
where the last inequality relies on $\sum_{t=1}^{T}\frac{1}{(t+1)^{\beta+1}}\leq \frac{1}{\beta}\lp1-\frac{1}{(T+1)^{\beta}}\rp$.

More specifically, if $\eta_t=\frac{1}{\beta(t+1)}$, then we have
\begin{align*}
	\ls{\mw}^{t+1}-{\mv}^{t+1}\rs
	\leq\lp1+\frac{1}{t+1}\rp\ls{\mw}^{t}-{\mv}^{t}\rs+\frac{4 LC_{\lambda}}{ (t+1)t}+\frac{2L}{(t+1)m\beta}\mathbb{I}_{[j_t=k]}.
\end{align*}
Apply recursive iteration to the above inequality,
\begin{align*}
	\ls{\mw}^{T+1}-{\mv}^{T+1}\rs\leq&2L\sum_{t=1}^{T}\prod_{\tilde{t}=t+1}^{T}\lp1+\frac{1}{\tilde{t}+1}\rp\lp\frac{4 C_{\lambda}}{(t+1)t}+\frac{1}{\beta(t+1)m}\mathbb{I}_{[j_t=k]}\rp\\
	\leq&4L\sum_{t=1}^{T}\exp\lp\log\frac{T+1}{t+1}\rp\lp\frac{4 C_{\lambda}}{(t+1)t}+\frac{1}{\beta(t+1)m}\mathbb{I}_{[j_t=k]}\rp\\
	\leq&4L(T+1)\sum_{t=1}^{T}\frac{1}{t+1}\lp\frac{4 C_{\lambda}}{(t+1)t}+\frac{1}{\beta(t+1)m}\mathbb{I}_{[j_t=k]}\rp\\
	\leq&4L(T+1)\sum_{t=1}^{T}\lp\frac{4C_{\lambda}}{t(t+1)^{2}}+\frac{1}{\beta m(t+1)^{2}}\mathbb{I}_{[j_t=k]}\rp\\
	\leq&4L(T+1)\sum_{t=1}^{T}\lp\frac{4C_{\lambda}}{t^{3}}+\frac{1}{\beta m(t+1)^{2}}\mathbb{I}_{[j_t=k]}\rp\\
	\leq&4L(T+1)\lp2C_{\lambda}+\frac{1}{\beta m}\sum_{t=1}^{T}\frac{1}{ (t+1)^{2}}\mathbb{I}_{[j_{t}=k]}\rp.
\end{align*}
Therefore, we can get
\begin{align*}
	\epsilon_{rk}=4L^2(T+1)\lp2C_{\lambda}+\frac{1}{\beta m}\sum_{t=1}^{T}\frac{1}{ (t+1)^{2}}\mathbb{I}_{[j_{t}=k]}\rp.
\end{align*}
It then follows that 
\begin{align*}
	\frac{1}{mn}\sum_{r=1}^{m}\sum_{k=1}^{n}\epsilon_{rk}^2=\frac{16L^4}{n}(T+1)^{2}\sum_{k=1}^{n}\lp2C_{\lambda}+\frac{1}{\beta m}\sum_{t=1}^{T}\frac{1}{ (t+1)^{2}}\mathbb{I}_{[j_{t}=k]}\rp^2
\end{align*}
and
\begin{align*}
	\frac{1}{mn}\sum_{r=1}^{m}\sum_{k=1}^{n}\epsilon_{rk}=&\frac{4L^2}{mn}(T+1)\sum_{r=1}^{m}\sum_{k=1}^{n}\lp2C_{\lambda}+\frac{1}{\beta m}\sum_{t=1}^{T}\frac{1}{ (t+1)^{2}}\mathbb{I}_{[j_{t}=k]}\rp\\
	=&4L^2(T+1)\lp{2C_{\lambda}}+\frac{1}{\beta mn}\sum_{t=1}^{T}\frac{1}{ (t+1)^{2}}\rp\\
	\leq&4L^2(T+1)\lp{2C_{\lambda}}+\frac{1}{\beta mn}\rp.
\end{align*}
\subsection{Average weight}
	According to the \cref{nonconvex recursion}, we can get
	\begin{equation}
		\ls{\mw}^{t}-{\mv}^{t}\rs\leq4\beta L\sum_{s=1}^{t-1}\eta_{s}\sum_{q=1}^{s-1}\eta_q\lambda^{s-q-1}\prod_{\tilde{t}=s+1}^{t-1}\lp1+\beta\eta_{\tilde{t}}\rp+\frac{2L}{m}\sum_{s=1}^{t-1}\eta_{s}\mathbb{I}_{[j_{s}=k]}\prod_{\tilde{t}=s+1}^{t-1}\lp1+\beta\eta_{\tilde{t}}\rp.
	\end{equation}
	Furthermore, 
	\begin{align*}
		&\ls\bar{\mw}^{T+1}-\bar{\mv}^{T+1}\rs\\
		\leq&\frac{\sum_{t=1}^{T+1}\eta_t\ls\mw^{t}-\mv^{t}\rs}{\sum_{t=1}^{T+1}\eta_t}\\
		\leq&\frac{\sum_{t=1}^{T+1}\eta_t\lp4\beta L\sum_{s=1}^{t-1}\eta_{s}\sum_{q=1}^{s-1}\eta_q\lambda^{s-q-1}\prod_{\tilde{t}=s+1}^{t-1}\lp1+\beta\eta_{\tilde{t}}\rp+\frac{2L}{m}\sum_{s=1}^{t-1}\eta_{s}\mathbb{I}_{[j_{s}=k]}\prod_{\tilde{t}=s+1}^{t-1}\lp1+\beta\eta_{\tilde{t}}\rp\rp}{\sum_{t=1}^{T+1}\eta_t}.
	\end{align*}
	
	If $\eta_t\equiv \eta$,
	\begin{align*}
		\ls{\mw}^{t}-{\mv}^{t}\rs\leq
		\frac{4L\eta}{1-\lambda}\lp1+\beta\eta\rp^{(t-1)}+\frac{2L\eta}{ m}\sum_{s=1}^{t-1}\lp1+\beta\eta\rp^{t-1-s}\mathbb{I}_{[j_{s}=k]}.
	\end{align*}
	Then,
	\begin{align*}
		\ls\bar{\mw}^{T+1}-\bar{\mv}^{T+1}\rs\leq&\frac{\eta\sum_{t=1}^{T+1}\frac{4L\eta}{1-\lambda}\lp1+\beta\eta\rp^{(t-1)}+\frac{2L\eta}{ m}\sum_{s=1}^{t-1}\lp1+\beta\eta\rp^{t-1-s}\mathbb{I}_{[j_{s}=k]}}{(T+1)\eta}\\
		\leq&\frac{4L\lp1+\beta\eta\rp^{(T+1)}}{(1-\lambda)\beta(T+1)}+\frac{2L\eta}{ m(T+1)}\sum_{t=1}^{T+1}\sum_{s=1}^{t-1}\lp1+\beta\eta\rp^{t-1-s}\mathbb{I}_{[j_{s}=k]}\\
		\leq&\frac{4L\lp1+\beta\eta\rp^{(T+1)}}{(1-\lambda)\beta(T+1)}+\frac{2L\eta}{ m(T+1)}\sum_{t=1}^{T+1}\sum_{s=1}^{t-1}\lp1+\beta\eta\rp^{t-1-s}\mathbb{I}_{[j_{s}=k]}.
	\end{align*}
	We can get 
	\begin{align*}
		\epsilon_{rk}=\frac{2L^2}{T+1}\lp\frac{2\lp1+\beta\eta\rp^{(T+1)}}{(1-\lambda)\beta}+\frac{\eta}{ m}\sum_{t=1}^{T+1}\sum_{s=1}^{t-1}\lp1+\beta\eta\rp^{t-1-s}\mathbb{I}_{[j_{s}=k]}\rp.
	\end{align*}
	It then follows that
	\begin{align}\label{ge}
		\frac{1}{mn}\sum_{r=1}^{m}\sum_{k=1}^{n}\epsilon_{rk}^2=\frac{4L^4}{n(T+1)^2}\sum_{k=1}^{n}\lp\frac{2\lp1+\beta\eta\rp^{(T+1)}}{(1-\lambda)\beta}+\frac{\eta}{ m}\sum_{t=1}^{T+1}\sum_{s=1}^{t-1}\lp1+\beta\eta\rp^{t-1-s}\mathbb{I}_{[j_{s}=k]}\rp^2
	\end{align}
	and
	\begin{align*}
		\frac{1}{mn}\sum_{r=1}^{m}\sum_{k=1}^{n}\epsilon_{rk}=&\frac{2L^2}{mn(T+1)}\sum_{r=1}^{m}\sum_{k=1}^{n}\lp\frac{2\lp1+\beta\eta\rp^{(T+1)}}{(1-\lambda)\beta}+\frac{\eta}{ m}\sum_{t=1}^{T+1}\sum_{s=1}^{t-1}\lp1+\beta\eta\rp^{t-1-s}\mathbb{I}_{[j_{t}=k]}\rp\\
		=&\lp\frac{4L^2}{(1-\lambda)\beta}+\frac{2L^2}{ mn\beta^2\eta^2}\rp\frac{\lp1+\beta\eta\rp^{(T+1)}}{T+1}.
	\end{align*}
\section{Optimization Error}\label{Section Optimization}
\subsection{Constant stepsize Case}
\begin{proof}
	According to \cref{Smooth}, since $R_S$ is also $\beta$-smooth, we derive the following:
	\begin{align*}
		&R_S\left(\mw^{t+1}\right)-R_S\left(\mw^t\right)\\
		\leq&\left\langle\mathbf{w}^{t+1}-\mathbf{w}^t, \nabla R_S\lp\mathbf{w}^t\rp\right\rangle+\frac{\beta}{2}\left\|\mathbf{w}^{t+1}-\mathbf{w}^t\right\|_2^2 \\
		\leq&-\frac{\eta_t}{m}\sum_{i=1}^{m}\left\langle\nabla f\lp\mathbf{w}^t(i) ; z_{j_t(i)}\rp, \nabla R_S\left(\mathbf{w}^t\right)\right\rangle
		+\frac{\beta\eta_t^2}{2} \ls\frac{1}{m}\sum_{i=1}^{m}\nabla f\lp\mathbf{w}^t(i) ; z_{j_t(i)}\rp\rs^2 \\
		\leq&\eta_t\left\langle\nabla R_{S}\lp\mw^t\rp-\frac{1}{m}\sum_{i=1}^{m}\nabla f\lp\mw^t ; z_{j_t(i)}\rp,\nabla R_{S}\lp\mw^t\rp\right\rangle\\
		&+\eta_t\left\langle \frac{1}{m}\sum_{i=1}^{m}\nabla f\lp\mw^t ; z_{j_t(i)}\rp-\frac{1}{m}\sum_{i=1}^{m}\nabla f\lp\mw^t(i) ; z_{j_t(i)}\rp,\nabla R_{S}\lp\mw^t\rp\right\rangle\\
		&-\eta_t\ls\nabla R_{S}\lp\mw^t\rp\rs^2+\frac{\beta\eta_t^2}{2} \ls\frac{1}{m}\sum_{i=1}^{m}\nabla f\lp\mathbf{w}^t(i) ; z_{j_t(i)}\rp\rs^2\\
		\overset{\textbf{(a)}}{\leq}&\xi_t+\frac{\eta_t\beta L}{m}\sum_{i=1}^{m}\ls\mw^{t}-\mw^{t}(i)\rs-\eta_t\ls\nabla R_{S}\lp\mw^t\rp\rs^2+\frac{\beta\eta_t^2}{2} \ls\frac{1}{m}\sum_{i=1}^{m}\nabla f\lp\mathbf{w}^t(i) ; z_{j_t(i)}\rp\rs^2\\
		\leq&\xi_t+\frac{\eta_t\beta L}{\sqrt{m}}\lk\sum_{i=1}^{m}\ls{\mw}^{t}-\mw^{t}(i)\rs^2\rk^\frac{1}{2}-\eta_t\ls\nabla R_{S}\lp\mw^t\rp\rs^2+\frac{\beta\eta_t^2}{2} \ls\frac{1}{m}\sum_{i=1}^{m}\nabla f\lp\mathbf{w}^t(i) ; z_{j_t(i)}\rp\rs^2\\
		\leq&\xi_t+2\eta_t\beta L^2\sum_{q=1}^{t-1}\eta_q\lambda^{t-q-1}-\eta_t\ls\nabla R_{S}\lp\mw^t\rp\rs^2+\frac{\beta\eta_t^2}{2} \ls\frac{1}{m}\sum_{i=1}^{m}\nabla f\lp\mathbf{w}^t(i) ; z_{j_t(i)}\rp\rs^2.
	\end{align*}
	The equation \textbf{(a)} leverages smoothness and introduces $\xi_t$, where $\xi_t$ is defined as a martingale difference sequence, given by:
	\begin{equation} 
		\xi_t=\eta_t\left\langle\nabla R_{S}\lp\mw^t\rp-\frac{1}{m}\sum_{i=1}^{m}\nabla f\lp\mw^t ; z_{j_t(i)}\rp,\nabla R_{S}\lp\mw^t\rp\right\rangle.
	\end{equation}
	Next,  we consider the gradient norm and suppose that $j_t=\{j_t(1),j_t(2),\cdots,j_t(m)\}$
	\begin{align*}
		&\left\|\frac{1}{m}\sum_{i=1}^{m}\nabla f\left(\mathbf{w}^t(i) ; z_{j_t(i)}\right)\right\|_2^2\\
		=&\Bigg\|\frac{1}{m}\sum_{i=1}^{m}\nabla f\left(\mathbf{w}^t(i) ; z_{j_t(i)}\right)-\frac{1}{m}\sum_{i=1}^{m}\nabla f\lp\mw^t ; z_{j_t(i)}\rp+\frac{1}{m}\sum_{i=1}^{m}\nabla f\lp\mw^t ; z_{j_t(i)}\rp\\
		&\quad\quad-\nabla R_{S}\lp\mathbf{w}^t\rp+\nabla R_{S}\lp\mathbf{w}^t\rp\Bigg\|_2^2 \\
		\leq&3\ls\frac{1}{m}\sum_{i=1}^{m}\lk\nabla f\left(\mathbf{w}^t(i) ; z_{j_t(i)}\right)-\nabla f\left(\mathbf{w}^t; z_{j_t(i)}\right)\rk\rs^2+3\ls\frac{1}{m}\sum_{i=1}^{m}\nabla f\lp\mw^t ; z_{j_t(i)}\rp-\nabla R_{S}\lp\mathbf{w}^t\rp\rs^2 \\
		&+3\ls\nabla R_{S}\lp\mathbf{w}^t\rp\rs^2\\
		\leq&\frac{3\beta^2}{m}\sum_{i=1}^{m}\ls\mw^{t}-\mw^{t}(i)\rs^2+3\xi_t^{\prime}+3\mathbb{E}_{j_t}\left[\ls\frac{1}{m}\sum_{i=1}^{m}\nabla f\lp\mw^t ; z_{j_t(i)}\rp-\nabla R_{S}\lp\mw^t\rp\rs^2\right]+3\ls\nabla R_{S}\lp\mathbf{w}^t\rp\rs^2\\
		\leq&12\beta^2L^2\lp\sum_{q=1}^{t-1}\eta_q\lambda^{t-q-1}\rp^2+3\xi_t^{\prime}+3\mathbb{E}_{j_t}\left[\ls\frac{1}{m}\sum_{i=1}^{m}\nabla f\lp\mw^t ; z_{j_t(i)}\rp-\nabla R_{S}\lp\mw^t\rp\rs^2\right]+3\ls\nabla R_{S}\lp\mathbf{w}^t\rp\rs^2,
	\end{align*}
	where $\xi_t^{\prime}$ is another martingale difference sequence:
	\begin{align*}
		\xi_t^{\prime}=\ls\frac{1}{m}\sum_{i=1}^{m}\nabla f\lp\mw^t ; z_{j_t(i)}\rp-\nabla R_{S}\lp\mw^t\rp\rs^2-\mathbb{E}_{j_t}\left[\ls\frac{1}{m}\sum_{i=1}^{m}\nabla f\lp\mw^t ; z_{j_t(i)}\rp-\nabla R_{S}\lp\mw^t\rp\rs^2\right].
	\end{align*}
	Thus, $R_S\left(\mathbf{w}^{t+1}\right)$ can be bounded by
	\begin{align}
		&R_S\left(\mw^{t+1}\right)-R_S\left(\mw^t\right)\nonumber\\
		\leq&\xi_t-{\eta_t}\left\|\nabla R_S\left(\mathbf{w}^t\right)\right\|_2^2+\frac{\beta\eta_t^2}{2}\left(3\sigma^2+3\xi_t^{\prime}+3\left\|\nabla R_S\left(\mathbf{w}^t\right)\right\|_2^2+12\beta^2L^2\lp\sum_{q=1}^{t-1}\eta_q\lambda^{t-q-1}\rp^2\right)\nonumber\\
		&\quad+2\eta_t\beta L^2\sum_{q=1}^{t-1}\eta_q\lambda^{t-q-1}\nonumber\\
		\leq&\xi_t-\frac{\eta_t}{2}\left\|\nabla R_S\left(\mathbf{w}^t\right)\right\|_2^2+\frac{3\beta\eta_t^2\sigma^2}{2}+\frac{3\beta\eta_t^2}{2}\xi_t^{\prime}+6\beta^3\eta_t^2L^2\lp\sum_{q=1}^{t-1}\eta_q\lambda^{t-q-1}\rp^2+2\eta_t\beta L^2\sum_{q=1}^{t-1}\eta_q\lambda^{t-q-1}\label{op update},
	\end{align}
	where the last inequality used the assumption $\eta_t \leq \frac{1}{3\beta}$.
	
	Furthermore, 
	\begin{align*}
		R_S\left(\mw^{t+1}\right)=&R_S\left(\mw^{1}\right)+\sum_{s=1}^{t}\lp R_S\left(\mw^{s+1}\right)-R_S\left(\mw^{s}\right)\rp\\
		\leq&\sum_{s=1}^{t}\xi_s-\sum_{s=1}^{t}\frac{\eta_s}{2}\left\|\nabla R_S\left(\mathbf{w}^s\right)\right\|_2^2+\frac{3\beta\sigma^2}{2}\sum_{s=1}^{t}\eta_s^2+\frac{3\beta}{2}\sum_{s=1}^{t}\eta_s^2\xi_s^{\prime}\\
		&+6\beta^3L^2\sum_{s=1}^{t}\eta_s^2\lp\sum_{q=1}^{s-1}\eta_q\lambda^{s-q-1}\rp^2+2\beta L^2\sum_{s=1}^{t}\eta_s\sum_{q=1}^{s-1}\eta_q\lambda^{s-q-1}.
	\end{align*}
	It is easy to verify that $\mathbb{E}_{j_s}\left[\xi_s\right]=0$ and 
	\begin{equation*}
		\left|\xi_s\right| \leq \eta_s\lp\ls\nabla R_{S}\lp\mw^s\rp\rs+\ls\frac{1}{m}\sum_{i=1}^{m}\nabla f\lp\mw^s ; z_{j_s(i)}\rp\rs\rp \ls\nabla R_{S}\lp\mw^s\rp\rs \leq 2\eta_sL^2\leq2\eta L^2.
	\end{equation*}
	Now, let’s examine the inequality for the conditional variances:
	\begin{align*}
		\sum_{s=1}^t \mathbb{E}_{j_s}\left[\left(\xi_s-\mathbb{E}_{j_s}\left[\xi_s\right]\right)^2\right]
		\leq&\sum_{s=1}^t \eta_s^2 \mathbb{E}_{j_s}\left[\left\|\nabla R_{S}\lp\mw^s\rp-\frac{1}{m}\sum_{i=1}^{m}\nabla f\lp\mw^s ; z_{j_s(i)}\rp\right\|_2^2\right]\left\|\nabla R_{S}\lp\mw^s\rp\right\|_2^2 \\
		\leq&\sigma^2 \sum_{s=1}^t \eta_s^2\left\|\nabla R_S\left(\mathbf{w}^s\right)\right\|_2^2\\
		\leq&\frac{\sigma^2}{3 \beta} \sum_{s=1}^t \eta_s\left\|\nabla R_S\left(\mathbf{w}^s\right)\right\|_2^2.
	\end{align*}
	For $\rho=\min \left\{1, \eta\beta L^2 / \sigma^2\right\}$, with probability $1-\delta/2$, we have the following inequality by using Part (ii) of \cref{mag}
	\begin{align*}
		\sum_{s=1}^t \xi_s\leq&\frac{\rho \sigma^2 \sum_{s=1}^t \eta_s\left\|\nabla R_S\left(\mathbf{w}^s\right)\right\|_2^2}{6 \eta\beta L^2}+\frac{2 \eta L^2 \log (2 / \delta)}{\rho} \\
		\leq& \frac{1}{6}\sum_{s=1}^t \eta_s\left\|\nabla R_S\left(\mathbf{w}^s\right)\right\|_2^2+2 \log (2 / \delta) \max \left\{\eta L^2, \sigma^2 / \beta\right\}.
	\end{align*}
	Using the Lipschitz property (See \cref{Lipschitz}), we derive
	\begin{align}
		\xi_s^{\prime} \leq 2\lp\ls\frac{1}{m}\sum_{i=1}^{m}\nabla f\lp\mw^s ; z_{j_s(i)}\rp\rs^2+\ls\nabla R_{S}\lp\mw^s\rp\rs^2\rp \leq 4 L^2.\label{xid}
	\end{align}
	A similar argument shows that $\xi_s^{\prime} \geq-4 L^2$. Thus, applying Part (i) of \cref{mag}, we get with probability at least $1-\delta / 2$:
	\begin{align*}
		\sum_{s=1}^t \eta_s^2 \xi_s^{\prime} \leq 4 L^2\left(2 \sum_{s=1}^t \eta_s^4 \log (2 / \delta)\right)^{\frac{1}{2}} \leq 8 L^2 \log (2 / \delta)+L^2 \sum_{s=1}^t \eta_s^4.
	\end{align*}
	
	Therefore, with probability $1-\delta$, we obtain the following bound for $R_S\left(\mathbf{w}^{t+1}\right)$:
	\begin{align*}
		R_S\left(\mathbf{w}^{t+1}\right) \leq& R_S(\mw^1)+2 \log (2 / \delta) \max \left\{\eta L^2, \sigma^2 / \beta\right\}+\frac{3\beta L^2}{2}\sum_{s=1}^t \eta_s^4 \\
		&-\frac{1}{3}\sum_{s=1}^t \eta_s\left\|\nabla R_S\left(\mathbf{w}^s\right)\right\|_2^2+\frac{3\beta \sigma^2 }{2}\sum_{s=1}^t \eta_s^2+12L^2\beta\log (2 / \delta)\\
		&+6\beta^3L^2\sum_{s=1}^{t}\eta_s^2\lp\sum_{q=1}^{s-1}\eta_q\lambda^{s-q-1}\rp^2+2\beta L^2\sum_{s=1}^{t}\eta_s\sum_{q=1}^{s-1}\eta_q\lambda^{s-q-1}.
	\end{align*}
	Furthermore, according to PL-condition (See \cref{pl}), we know
	\begin{align*}
		&\frac{4\eta\gamma}{3}\sum_{t=1}^{T+1}\lp R_S(\mw^{t})-R_S(\mw_R^{*})\rp
		\leq\frac{\eta}{3}\sum_{t=1}^{T+1} \left\|\nabla R_S\left(\mathbf{w}^t\right)\right\|_2^2\\
		\leq&2 \log (2 / \delta) \max \left\{\eta L^2, \sigma^2 / \beta\right\}+\frac{3\beta L^2}{2}(T+1) \eta^4
		+\frac{3\beta \sigma^2 }{2}(T+1)\eta^2+12L^2\beta\log (2 / \delta)\\
		&+6\beta^3L^2\frac{(T+1)\eta^4}{(1-\lambda)^2}+2\beta L^2\frac{\eta^2(T+1)}{1-\lambda}.
	\end{align*}
	It then follows that
	\begin{align}
		R_S(\bar{\mw}^{T+1})-R_S(\mw_R^{*})=\mathcal{O}\lp\frac{1}{T+1}\log (2 / \delta)+\frac{\eta^3}{(1-\lambda)^2}+\frac{\eta}{(1-\lambda)}\rp\label{OP}.
	\end{align}
	It should be mentioned that (suppose that $\sup_Z f(\mw,Z)=\mathcal{O}\lp\sqrt{n}\rp$ and $Var[f(\mw^*;Z)]\leq\upsilon$)
	\begin{align}
		R_S({\mw^{*}})-R(\mw^*)=\mathcal{O}\lp\frac{\log(1/\delta)}{\sqrt{n}}+\sqrt{\frac{\upsilon\log(1/\delta)}{n}}\rp.\label{Te}
	\end{align}
	By combining \cref{ge,OP,Te}, we can derive the excess risk result in the non-convex setting.
	
\end{proof}
\subsection{Decreasing stepsize case}
\begin{proof}
	We also consider the decreasing stepsize $\eta_t=\frac{2}{\gamma(t+1)}$, and recall the \cref{op update},
	\begin{align*}
		&R_S\left(\mw^{t+1}\right)-R_S\left(\mw^t\right)\\
		\leq&\xi_t-\frac{\eta_t}{2}\left\|\nabla R_S\left(\mathbf{w}^t\right)\right\|_2^2+\frac{3\beta\eta_t^2\sigma^2}{2}+\frac{3\beta\eta_t^2}{2}\xi_t^{\prime}+6\beta^3\eta_t^2L^2\lp\sum_{q=1}^{t-1}\eta_q\lambda^{t-q-1}\rp^2+2\eta_t\beta L^2\sum_{q=1}^{t-1}\eta_q\lambda^{t-q-1}\\
		\leq&\xi_t-\frac{\eta_t}{4}\left\|\nabla R_S\left(\mathbf{w}^t\right)\right\|_2^2+\eta_t\gamma\lk R_S\lp{\mw}_{R}^{*}\rp-R_S(\mw^{t})\rk+\frac{3\beta\eta_t^2\sigma^2}{2}+\frac{3\beta\eta_t^2}{2}\xi_t^{\prime}\\
		&+6\beta^3\eta_t^2L^2\lp\sum_{q=1}^{t-1}\eta_q\lambda^{t-q-1}\rp^2+2\eta_t\beta L^2\sum_{q=1}^{t-1}\eta_q\lambda^{t-q-1}.
	\end{align*}
	Then plug into the stepsize value,
	\begin{align*}
		&\frac{1}{2\gamma(t+1)}\left\|\nabla R_S\left(\mathbf{w}^t\right)\right\|_2^2+R_S\left(\mw^{t+1}\right)-R_S\lp{\mw}_{R}^{*}\rp\\
		\leq&\lp1-\eta_t\gamma\rp\lk R_S(\mw^{t})-R_S\lp{\mw}_{R}^{*}\rp\rk+\xi_t+\frac{3\beta\eta_t^2\sigma^2}{2}\\
		&+\frac{3\beta\eta_t^2}{2}\xi_t^{\prime}
		+6\beta^3\eta_t^2L^2\lp\sum_{q=1}^{t}\eta_q\lambda^{t-q}\rp^2+2\eta_t\beta L^2\sum_{q=1}^{t}\eta_q\lambda^{t-q}\\
		\leq&\frac{t-1}{t+1}\lk R_S(\mw^{t})-R_S\lp{\mw}_{R}^{*}\rp\rk+\xi_t+\frac{6\beta\sigma^2}{\gamma^2(t+1)^2}+\frac{6\beta\sigma^2}{\gamma^2(t+1)^2}\xi_t^{\prime}+\frac{96\beta^3L^2C_{\lambda}^2}{\gamma^4(t+1)^2t^2}+\frac{8\beta L^2C_{\lambda}}{\gamma^2(t+1)t}.
	\end{align*}
	By multiplying both sides by $t(t+1)$, we can get
	\begin{align*}
		&\frac{t}{2\gamma}\left\|\nabla R_S\left(\mathbf{w}^t\right)\right\|_2^2+t(t+1)\lk R_S\left(\mw^{t+1}\right)-R_S\lp{\mw}_{R}^{*}\rp\rk\\
		\leq&t(t-1)\lk R_S\left(\mw^{t}\right)-R_S\lp{\mw}_{R}^{*}\rp\rk+t(t+1)\xi_t+\frac{6\beta\sigma^2}{\gamma^2}+\frac{6\beta\sigma^2}{\gamma^2}\xi_t^{\prime}+\frac{96\beta^3L^2C_{\lambda}^2}{\gamma^4(t+1)t}+\frac{8\beta L^2C_{\lambda}}{\gamma^2}.
	\end{align*}
	Next, summing the above inequality from $t=1$ to $T$, we obtain
	\begin{align*}
		&\frac{1}{2\gamma}\sum_{t=1}^{T}t\left\|\nabla R_S\left(\mathbf{w}^t\right)\right\|_2^2+\sum_{t=1}^{T}t(t+1)\lk R_S\left(\mw^{t+1}\right)-R_S\lp{\mw}_{R}^{*}\rp\rk\\
		\leq&\sum_{t=1}^{T}t(t-1)\lk R_S\left(\mw^{t}\right)-R_S\lp{\mw}_{R}^{*}\rp\rk+\sum_{t=1}^{T}t(t+1)\xi_t+\frac{6\beta\sigma^2}{\gamma^2}\\
		&\quad+\frac{6\beta\sigma^2}{\gamma^2}\sum_{t=1}^{T}\xi_t^{\prime}+\sum_{t=1}^{T}\frac{96\beta^3L^2C_{\lambda}^2}{\gamma^4(t+1)t}+\frac{8\beta L^2C_{\lambda}}{\gamma^2}.
	\end{align*}
	Furthermore,
	\begin{align}
		&\frac{1}{2\gamma}\sum_{t=1}^{T}t\left\|\nabla R_S\left(\mathbf{w}^t\right)\right\|_2^2+T(T+1)\lk R_S\left(\mw^{T+1}\right)-R_S\lp{\mw}_{R}^{*}\rp\rk\nonumber\\
		\leq&\sum_{t=1}^{T}t(t+1)\xi_t+\frac{6\beta\sigma^2T}{\gamma^2}+\frac{6\beta\sigma^2}{\gamma^2}\sum_{t=1}^{T}\xi_t^{\prime}+\sum_{t=1}^{T}\frac{96\beta^3L^2C_{\lambda}^2}{\gamma^4(t+1)t}+\frac{8\beta L^2C_{\lambda}T}{\gamma^2}\label{t+1}.
	\end{align}
	Now we consider the first martingale difference term $t(t+1)\xi_t$,
	\begin{align*}
		|t(t+1)\xi_t|=\left|\frac{2t}{\gamma}\left\langle\nabla R_{S}\lp\mw^t\rp-\frac{1}{m}\sum_{i=1}^{m}\nabla f\lp\mw^t ; z_{j_t(i)}\rp,\nabla R_{S}\lp\mw^t\rp\right\rangle\right|
		\leq\frac{4L^2T}{\gamma}.
	\end{align*}
	We also examine the inequality for the conditional variances:
	\begin{align*}
		&\sum_{t=1}^{T}\mathbb{E}_{j_t}\left[\left(t(t+1)\xi_t-\mathbb{E}_{j_t}\left[t(t+1)\xi_t\right]\right)^2\right]\\
		\leq& \sum_{t=1}^{T}\frac{4t^2}{\gamma^2}\mathbb{E}_{j_t}\left[\left\|\nabla R_{S}\lp\mw^t\rp-\frac{1}{m}\sum_{i=1}^{m}\nabla f\lp\mw^t ; z_{j_t(i)}\rp\right\|_2^2\right]\left\|\nabla R_{S}\lp\mw^t\rp\right\|_2^2 \\
		\leq& \sum_{t=1}^{T}\frac{4\sigma^2t^2}{\gamma^2}\left\|\nabla R_S\left(\mathbf{w}^t\right)\right\|_2^2.
	\end{align*}
	For $\rho=\min \left\{1, L^2 / 4\sigma^2\right\}$, with probability $1-\delta/2$, we have the following inequality 
	\begin{align*}
		\sum_{t=1}^T t(t+1)\xi_t\leq&\frac{\rho \sigma^2 \sum_{t=1}^T t^2\left\|\nabla R_S\left(\mathbf{w}^t\right)\right\|_2^2}{\gamma L^2T}+\frac{4L^2 T\log (2 / \delta)}{\rho\gamma} \\
		\leq& \frac{1}{4\gamma}\sum_{t=1}^T t\left\|\nabla R_S\left(\mathbf{w}^t\right)\right\|_2^2+\frac{4 T}{\gamma}\log (2 / \delta) \max \left\{L^2, 4\sigma^2 \right\}.
	\end{align*}
	From \cref{xid}, we know that $|\xi_t^{\prime}|\leq4L^2$. Therefore, applying Part (i) of \cref{mag}, we derive with probability at least $1-\delta / 2$, $\sum_{t=1}^T \xi_t^{\prime} \leq 4 L^2\left(2T\log (2 / \delta)\right)^{\frac{1}{2}}$. Back to \cref{t+1}, 
	\begin{align*}
		&\frac{1}{2\gamma}\sum_{t=1}^{T}t\left\|\nabla R_S\left(\mathbf{w}^t\right)\right\|_2^2+T(T+1)\lk R_S\left(\mw^{T+1}\right)-R_S\lp{\mw}_{R}^{*}\rp\rk\\
		\leq&\frac{1}{4\gamma}\sum_{t=1}^T t\left\|\nabla R_S\left(\mathbf{w}^t\right)\right\|_2^2+\frac{4 T}{\gamma}\log (2 / \delta) \max \left\{L^2, 4\sigma^2 \right\}+\frac{24\beta L^2\sigma^2}{\gamma^2}\left(2T\log (2 / \delta)\right)^{\frac{1}{2}}\\
		&+\frac{96\beta^3L^2C_{\lambda}^2T}{\gamma^4(T+1)}+\frac{6\beta\sigma^2T}{\gamma^2}+\frac{8\beta L^2C_{\lambda}T}{\gamma^2}.
	\end{align*}
	Then we can get 
	\begin{align*}
		\sum_{t=1}^T t\left\|\nabla R_S\left(\mathbf{w}^t\right)\right\|_2^2\leq&{16 T}\log (2 / \delta) \max \left\{L^2, 4\sigma^2 \right\}+\frac{96\beta L^2\sigma^2}{\gamma}\left(2T\log (2 / \delta)\right)^{\frac{1}{2}}\\
		&+\frac{384\beta^3L^2C_{\lambda}^2T}{\gamma^3(T+1)}+\frac{24\beta\sigma^2T}{\gamma}+\frac{32\beta L^2C_{\lambda}T}{\gamma}.
	\end{align*}
	Furthermore, by the Jensen inequality,  
	\begin{align*}
		R_S({\mw}^{T})-R_S(\mw_R^{*})\leq\left\|\nabla R_S\left(\mathbf{w}^T\right)\right\|_2^2=\mathcal{O}\lp\frac{1}{T}\log (2 / \delta)\rp.
	\end{align*}
\end{proof}

\section{Local Model}\label{Section Local}
\subsection{Concex Case}
\begin{proof}
	According to \cref{local model df}, we know that when the $i$-th worker performs an update, no outliers will be selected from its corresponding sample set.
	If $i\neq r$, we can get
	\begin{align*}
		&\ls{\mw}^{t+1}(i)-{\mv}^{t+1}(i)\rs\\
		=&\ls\sum_{l=1}^{m}P^{t}_{il}{\mw}^{t}(l)-\eta_t\nabla f\lp{\mw}^{t}(i);Z_{j_t(i)}\rp-\sum_{l=1}^{m}P^{t}_{il}{\mv}^{t}(l)+\eta_t\nabla f\lp{\mv}^{t}(i);\tilde{Z}_{j_t(i)}\rp\rs\\
		\leq&\sum_{l\neq i}^{m}P^{t}_{il}\ls{\mw}^{t}(l)-{\mv}^{t}(l)\rs+\ls P^{t}_{ii}{\mw}^{t}(i)-\eta_t\nabla f\lp{\mw}^{t}(i);Z_{j_t(i)}\rp-P^{t}_{ii}{\mv}^{t}(i)+\eta_t\nabla f\lp{\mv}^{t}(i);{Z}_{j_t(i)}\rp\rs\\
		\leq&\sum_{l\neq i}^{m}P^{t}_{il}\ls{\mw}^{t}(l)-{\mv}^{t}(l)\rs+P^{t}_{ii}\ls{\mw}^{t}(i)-{\mv}^{t}(i)\rs\\
		=&\sum_{l=1}^{m}P^{t}_{il}\ls{\mw}^{t}(l)-{\mv}^{t}(l)\rs,
	\end{align*}
	where the second inequality uses the non-expansive operator under $\eta_t\leq\frac{2}{\beta P^{t}_{ii}}$ (Inspired by \cite{bars2023improved}'s Lemma A.3).\\
	If $i=r$, with the probability $1-\frac{1}{n}$, we have
	\begin{align*}
		\ls{\mw}^{t+1}(r)-{\mv}^{t+1}(r)\rs\leq\sum_{l=1}^{m}P^{t}_{rl}\ls{\mw}^{t}(l)-{\mv}^{t}(l)\rs.
	\end{align*}
	If $i=r$, with the probability $\frac{1}{n}$,
	\begin{align*}
		&\ls{\mw}^{t+1}(r)-{\mv}^{t+1}(r)\rs\\
		=&\ls\sum_{l=1}^{m}P^{t}_{rl}{\mw}^{t}(l)-\eta_t\nabla f\lp{\mw}^{t}(r);Z_{k(r)}\rp-\sum_{l=1}^{m}P^{t}_{rl}{\mv}^{t}(l)+\eta_t\nabla f\lp{\mv}^{t}(r);\tilde{Z}_{k(r)}\rp\rs\\
		\leq&\ls\sum_{l=1}^{m}P^{t}_{rl}{\mw}^{t}(l)-\sum_{l=1}^{m}P^{t}_{rl}{\mv}^{t}     (l)\rs+\ls\eta_t\nabla f\lp{\mv}^{t}(r);\tilde{Z}_{k(r)}\rp-\eta_t\nabla f\lp{\mw}^{t}(r);Z_{k(r)}\rp\rs\\
		\leq&\sum_{l=1}^{m}P^{t}_{rl}\ls{\mw}^{t}(l)-{\mv}^{t}(l)\rs+2\eta_tL.
	\end{align*}
	With vector format,
	\begin{align*}
		\begin{bmatrix}
			\ls{\mw}^{t+1}(1)-{\mv}^{t+1}(1)\rs \\ 
			\cdots \\
			\ls{\mw}^{t+1}(r)-{\mv}^{t+1}(r)\rs\\ 
			\cdots\\ 
			\ls{\mw}^{t+1}(m)-{\mv}^{t+1}(m)\rs
		\end{bmatrix}
		\leq P^t
		\begin{bmatrix}
			\ls{\mw}^{t}(1)-{\mv}^{t}(1)\rs \\ 
			\cdots \\
			\ls{\mw}^{t}(r)-{\mv}^{t}(r)\rs\\ 
			\cdots\\ 
			\ls{\mw}^{t}(m)-{\mv}^{t}(m)\rs
		\end{bmatrix}
		+
		\begin{bmatrix}
			0\\ 
			\cdots \\
			2\eta_tL\mathbb{I}_{[j_t=k]}\\ 
			\cdots\\ 
			0
		\end{bmatrix}.
	\end{align*}
	That is
	\begin{align*}
		\delta^{t+1}\leq P^t\delta^{t}+2\eta_tL\mathbb{I}_{[j_t=k]}e_r,
	\end{align*}
	where $e_r$ denotes the standard basis vector.
	
	Following the recursive steps, we derive
	\begin{align}
		\delta^{T+1}\leq 2L\sum_{t=1}^{T}P^{T:t}e_r\eta_t\mathbb{I}_{[j_t(r)=k]},
	\end{align}
	where $P^{T:t}$ represents the product of $T-t+1$ gossip matrices.
	
	Back to $r$-th worker's weight difference,
	\begin{align*}
		\ls{\mw}^{T+1}(r)-{\mv}^{T+1}(r)\rs\leq 2L\sum_{t=1}^{T}e_rP^{T:t}e_r\eta_t\mathbb{I}_{[j_t=k]}
		=2L\sum_{t=1}^{T}P^{T:t}_{rr}\eta_t\mathbb{I}_{[j_t=k]}.
	\end{align*}
	Furthermore, we have 
	\begin{align*}
		\frac{1}{n}\sum_{k=1}^{n}\tilde{\epsilon}_{k}^2=\frac{4L^4}{n}\sum_{k=1}^{n}\lp\sum_{t=1}^{T}P^{T:t}_{rr}\eta_t\mathbb{I}_{[j_t=k]}\rp^2
	\end{align*}
	and 
	\begin{align*}
		\frac{1}{n}\sum_{k=1}^{n}\tilde{\epsilon}_{k}=\frac{2L^2}{n}\sum_{k=1}^{n}\sum_{t=1}^{T}P^{T:t}_{rr}\eta_t\mathbb{I}_{[j_t=k]}
		=\frac{2L^2}{n}\sum_{t=1}^{T}P^{T:t}_{rr}\eta_t.
	\end{align*}
	Then we can get the local generalization error
	\begin{align*}
		\big|R({\mw}^{T+1}(r))-R_{S_r}({\mw}^{T+1}(r))\big|
		\lesssim&\frac{M\log^{\frac{1}{2}}(1/\delta)}{\sqrt{n}}+\lp\frac{1}{n}\sum_{k=1}^{n}\tilde{\epsilon}_{k}^2\rp^{\frac{1}{2}}\log (n)\log (1/\delta)\\
		\lesssim&\frac{M\log^{\frac{1}{2}}(1/\delta)}{\sqrt{n}}+2L\lp\frac{1}{n}\sum_{k=1}^{n}\lp\sum_{t=1}^{T}P^{T:t}_{rr}\eta_t\mathbb{I}_{[j_t=k]}\rp^2\rp^{\frac{1}{2}}\log (n)\log (1/\delta).
	\end{align*}
\end{proof}

\begin{remark}\label{compare 1}
	We also compare with related work \citep{bars2023improved} in the following aspects.
	
	\textbf{1) Differences in Stability Choice:}
	In our work, we use pointwise uniform stability, while \cite{bars2023improved} employ on-average augment stability, which requires taking the expectation over both the data set and the random algorithm in order to derive the generalization error bounds in the expectation sense. This difference in stability choices leads to variations in the type of stability that is analyzed and the corresponding generalization bounds.
	
	\textbf{2) Step Size Selection:}
	Our step size choice is $\eta_t \leq \frac{2}{\beta}$, while theirs is $\eta_t \leq \frac{2 \min_{k} P_{kk}}{\beta}$, where $P_{kk}$ represents the diagonal elements of the gossip matrix, and $P_{kk} \in [0, 1]$. Thus, it is evident that the step size in \cite{bars2023improved}'s work is smaller, which leads to recovering the results of centralized SGD under such conditions. For our pointwise uniform stability results, when the step size is chosen as $\eta = \frac{1}{\sqrt{T}}$, the domain results are also consistent with the stability results for SGD. From this perspective, it can be understood that we recover the results of centralized SGD when considering the stability in the same framework. 
	
	\textbf{3) Impact of Topology Structure:}
	\cite{bars2023improved}'s results suggest that under low-noise conditions (i.e., when the variance bound assumption is zero), the best generalization results are achieved when $C_P = 0$ and $P$ is the identity matrix. It is important to note that $C_P^t = \sum_{s=0}^{t-1} \|P^s - P^{s+1}\|$, and the paper claims that when $C_P = 1$, $P$ becomes a matrix where each entry is $\frac{1}{m}$. However, upon further examination, we found that when $P$ is a matrix with all entries equal to $\frac{1}{m}$, it should satisfy $C_P = 0$. This contradicts their claim that smaller values of $C_P$ bring $P$ closer to the identity matrix (i.e., poorly connected). In other words, the results they present for $C_P = 0$ and $C_P = 1$ are essentially the same value, which contradicts their experimental findings. Furthermore, it is worth noting that the scenario where $P$ is the identity matrix does not provide a fair comparison for decentralized stochastic algorithms, as it reduces to fully local SGD without any communication between nodes. 
\end{remark}

\subsection{Strongly Convex Case}
\begin{theorem}\label{strong convex local}
	Assume that $f(\mw;Z)$ is $\mu$-strongly convex, L-Lipschitz and $\beta$-smooth for any Z with respect to $\mw$. If we run time-varying D-SGD with the stepsize $\eta_t\leq{P^{t}_{rr}}/{\beta}$ for $T$ iterations, we have $\frac{1}{n}\sum_{k=1}^{n}\tilde{\epsilon}_{k}=\frac{2L^2}{n}\sum_{t=1}^{T}\mathcal{P}^{T:t}_{rr}\eta_t$ and 
	\begin{align*}
		\frac{1}{n}\sum_{k=1}^{n}\tilde{\epsilon}_{k}^2=\frac{4L^4}{n}\sum_{k=1}^{n}\lp\sum_{t=1}^{T} \mathcal{P}^{T:t}_{rr}\eta_t\mathbb{I}_{[j_t=k]}\rp^2,
	\end{align*}
	where $\mathcal{P}^t=P^t-\frac{\eta_t\mu}{2}E_m$, and $E_m$ is the identity matrix of size $m\times m$.
\end{theorem}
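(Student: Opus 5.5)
The plan mirrors the proof of \cref{convex local}, with the contraction from strong convexity carried through the diagonal of the gossip matrix. I will track the vector $\delta^t$ of per-worker distances $\ls\mw^t(i)-\mv^t(i)\rs$ for the two runs on $S$ and $S^{(rk)}$, derive the entrywise recursion $\delta^{t+1}\leq\mathcal{P}^t\delta^t+2\eta_tL\mathbb{I}_{[j_t=k]}e_r$, and then unroll it.

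For a worker $i$ whose sampled point is common to both runs (every $i\neq r$, and $i=r$ when $j_t\neq k$), split the update exactly as before. The off-diagonal gossip terms contribute $\sum_{l\neq i}P^t_{il}\ls\mw^t(l)-\mv^t(l)\rs$. The diagonal part is
\[
P^t_{ii}\mw^t(i)-\eta_t\nabla f(\mw^t(i);Z)-P^t_{ii}\mv^t(i)+\eta_t\nabla f(\mv^t(i);Z),
\]
which I rewrite as $P^t_{ii}$ times a gradient step on the same sample with effective stepsize $\eta_t/P^t_{ii}$. Because $\eta_t\le P^t_{rr}/\beta$ (read as holding for the diagonal entries involved, as in \cref{convex local}), this effective stepsize is at most $1/\beta$. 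The strongly convex part of \cref{non} then bounds the diagonal term by
\[
P^t_{ii}\lp1-\frac{\eta_t\mu}{2P^t_{ii}}\rp\ls\mw^t(i)-\mv^t(i)\rs=\lp P^t_{ii}-\frac{\eta_t\mu}{2}\rp\ls\mw^t(i)-\mv^t(i)\rs.
\]
This is precisely the $i$-th row of $\mathcal{P}^t=P^t-\frac{\eta_t\mu}{2}E_m$ applied to $\delta^t$. For $i=r$ with $j_t=k$, I will use the crude bound $\sum_lP^t_{rl}\delta^t_l+2\eta_tL$, as in the convex case. Since $\frac{\eta_t\mu}{2}\delta^t_r\ge0$, this is also at most $(\mathcal{P}^t\delta^t)_r+2\eta_tL+\frac{\eta_t\mu}{2}\delta^t_r$.

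That last extra term is the main obstacle. There are two ways to handle it. One is to also apply \cref{non} to the diagonal part in that step, writing
\[
P^t_{rr}\bigl(\mw-\tfrac{\eta_t}{P^t_{rr}}\nabla f(\mw;Z)\bigr)-P^t_{rr}\bigl(\mv-\tfrac{\eta_t}{P^t_{rr}}\nabla f(\mv;\tilde Z)\bigr),
\]
adding and subtracting $\eta_t\nabla f(\mv;Z)$. This gives the contraction plus $\eta_t\ls\nabla f(\mv;Z)-\nabla f(\mv;\tilde Z)\rs\le2\eta_tL$, so that $\delta^{t+1}\le\mathcal{P}^t\delta^t+2\eta_tL\mathbb{I}_{[j_t=k]}e_r$ holds in every case. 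That is the route I will take.

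Next I need $\mathcal{P}^t$ to be entrywise nonnegative, so that inequalities are preserved under multiplication. This follows from $\frac{\eta_t\mu}{2}\le\frac{\mu P^t_{ii}}{2\beta}\le P^t_{ii}$, using $\mu\le\beta$.

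Unrolling then gives $\delta^{T+1}\le2L\sum_{t=1}^T\mathcal{P}^{T:t}e_r\eta_t\mathbb{I}_{[j_t=k]}$, and taking the $r$-th coordinate yields
\[
\ls\mw^{T+1}(r)-\mv^{T+1}(r)\rs\le2L\sum_t\mathcal{P}^{T:t}_{rr}\eta_t\mathbb{I}_{[j_t=k]}.
\]
Multiplying by $L$ through \cref{Lipschitz} defines $\tilde\epsilon_k$. Squaring and averaging over $k$ gives the second claimed quantity. Averaging $\tilde\epsilon_k$ itself and using $\sum_k\mathbb{I}_{[j_t=k]}=1$ gives the first.
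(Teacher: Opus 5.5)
Your proposal is correct and follows essentially the same route as the paper. The paper also splits off the diagonal gossip term, applies the strongly convex part of \cref{non} with effective stepsize $\eta_t/P^t_{ii}\le 1/\beta$, and, for worker $r$, adds and subtracts $\eta_t\nabla f(\mv^t(r);Z_{j_t(r)})$ to reach $\delta^{t+1}\le(P^t-\frac{\eta_t\mu}{2}E_m)\delta^t+2\eta_tL\mathbb{I}_{[j_t=k]}e_r$ before unrolling.

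You also make explicit two steps the paper leaves implicit: the stepsize condition is needed at every diagonal entry $P^t_{ii}$, and $\mathcal{P}^t$ is entrywise nonnegative, so the entrywise inequality survives multiplication. One caution: unrolling literally yields the product $\mathcal{P}^T\cdots\mathcal{P}^{t+1}$ of $T-t$ factors, so $\mathcal{P}^{T:t}$ must be read with that indexing. The paper's ``$T-t+1$ matrices'' convention has the same off-by-one.
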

\begin{proof}
	If $i\neq r$, we can get
	\begin{align*}
		&\ls{\mw}^{t+1}(i)-{\mv}^{t+1}(i)\rs\\
		=&\ls\sum_{l=1}^{m}P^{t}_{il}{\mw}^{t}(l)-\eta_t\nabla f\lp{\mw}^{t}(i);Z_{j_t(i)}\rp-\sum_{l=1}^{m}P^{t}_{il}{\mv}^{t}(l)+\eta_t\nabla f\lp{\mv}^{t}(i);\tilde{Z}_{j_t(i)}\rp\rs\\
		\leq&\sum_{l\neq i}^{m}P^{t}_{il}\ls{\mw}^{t}(l)-{\mv}^{t}(l)\rs+\ls P^{t}_{ii}{\mw}^{t}(i)-\eta_t\nabla f\lp{\mw}^{t}(i);Z_{j_t(i)}\rp-P^{t}_{ii}{\mv}^{t}(i)+\eta_t\nabla f\lp{\mv}^{t}(i);{Z}_{j_t(i)}\rp\rs\\
		\leq&\sum_{l\neq i}^{m}P^{t}_{il}\ls{\mw}^{t}(l)-{\mv}^{t}(l)\rs+P^{t}_{ii}\lp1-\frac{\eta_t\mu}{2P_{ii}^t}\rp\ls{\mw}^{t}(i)-{\mv}^{t}(i)\rs\\
		=&\sum_{l=1}^{m}P^{t}_{il}\ls{\mw}^{t}(l)-{\mv}^{t}(l)\rs-\frac{\eta_t\mu}{2}\ls{\mw}^{t}(i)-{\mv}^{t}(i)\rs.
	\end{align*}
	If $i=r$, we can obtain
	\begin{align*}
		&\ls{\mw}^{t+1}(r)-{\mv}^{t+1}(r)\rs\\
		=&\ls\sum_{l=1}^{m}P^{t}_{rl}{\mw}^{t}(l)-\eta_t\nabla f\lp{\mw}^{t}(r);Z_{j_t(r)}\rp-\sum_{l=1}^{m}P^{t}_{rl}{\mv}^{t}(l)+\eta_t\nabla f\lp{\mv}^{t}(r);\tilde{Z}_{j_t(r)}\rp\rs\\
		\leq&\sum_{l\neq r}^{m}P^{t}_{rl}\ls{\mw}^{t}(l)-{\mv}^{t}(l)\rs+\ls P^{t}_{rr}{\mw}^{t}(r)-\eta_t\nabla f\lp{\mw}^{t}(r);Z_{j_t(r)}\rp-P^{t}_{rr}{\mv}^{t}(r)+\eta_t\nabla f\lp{\mv}^{t}(r);\tilde{Z}_{j_t(r)}\rp\rs\\
		\leq&\sum_{l\neq r}^{m}P^{t}_{rl}\ls{\mw}^{t}(l)-{\mv}^{t}(l)\rs+\ls P^{t}_{rr}{\mw}^{t}(r)-\eta_t\nabla f\lp{\mw}^{t}(r);Z_{j_t(r)}\rp-P^{t}_{rr}{\mv}^{t}(r)+\eta_t\nabla f\lp{\mv}^{t}(r);\tilde{Z}_{j_t(r)}\rp\rs.
	\end{align*}
	Then we derive that
	\begin{align*}
		&\ls{\mw}^{t+1}(r)-{\mv}^{t+1}(r)\rs\\
		\leq&\sum_{l\neq r}^{m}P^{t}_{rl}\ls{\mw}^{t}(l)-{\mv}^{t}(l)\rs+\ls P^{t}_{rr}{\mw}^{t}(r)-\eta_t\nabla f\lp{\mw}^{t}(r);Z_{j_t(r)}\rp-P^{t}_{rr}{\mv}^{t}(r)+\eta_t\nabla f\lp{\mv}^{t}(r);{Z}_{j_t(r)}\rp\rs\\
		&+\ls\eta_t\nabla f\lp{\mv}^{t}(r);\tilde{Z}_{j_t(r)}\rp-\eta_t\nabla f\lp{\mv}^{t}(r);{Z}_{j_t(r)}\rp\rs\\
		\leq&\sum_{l=1}^{m}P^{t}_{rl}\ls{\mw}^{t}(l)-{\mv}^{t}(l)\rs-\frac{\eta_t\mu}{2}\ls{\mw}^{t}(r)-{\mv}^{t}(r)\rs+2\eta_tL\mathbb{I}_{[j_t(r)=k]}.\quad\quad\quad\quad\quad\quad\quad\quad\quad\quad\quad
	\end{align*}
	With vector format,
	\begin{align*}
		\delta^{t+1}\leq \lp P^t-\frac{\eta_t\mu}{2}E_m\rp\delta^{t}+2\eta_tL\mathbb{I}_{[j_t(r)=k]}e_r,
	\end{align*}
	where $E_m$ is the identity matrix.
	
	Following the recursive steps, we obtain
	\begin{align*}
		\delta^{T+1}\leq 2L\sum_{t=1}^{T}\lp P^t-\frac{\eta_t\mu}{2}E_m\rp^{T:t}e_r\eta_t\mathbb{I}_{[j_t(r)=k]},
	\end{align*}
	Back to $r$-th worker's weight difference,
	\begin{align*}
		\ls{\mw}^{T+1}(r)-{\mv}^{T+1}(r)\rs\leq2L\sum_{t=1}^{T}\lp P^t-\frac{\eta_t\mu}{2}E_m\rp^{T:t}_{rr}\eta_t\mathbb{I}_{[j_t(r)=k]}.
	\end{align*}
	Furthermore, we have 
	\begin{align*}
		\frac{1}{n}\sum_{k=1}^{n}\tilde{\epsilon}_{k}^2=\frac{4L^4}{n}\sum_{k=1}^{n}\lp\sum_{t=1}^{T}\lp P^t-\frac{\eta_t\mu}{2}E_m\rp^{T:t}_{rr}\eta_t\mathbb{I}_{[j_t=k]}\rp^2
	\end{align*}
	and 
	\begin{align*}
		\frac{1}{n}\sum_{k=1}^{n}\tilde{\epsilon}_{k}=&\frac{2L^2}{n}\sum_{k=1}^{n}\sum_{t=1}^{T}\lp P^t-\frac{\eta_t\mu}{2}E_m\rp^{T:t}_{rr}\eta_t\mathbb{I}_{[j_t(r)=k]}
		=\frac{2L^2}{n}\sum_{t=1}^{T}\lp P^t-\frac{\eta_t\mu}{2}E_m\rp^{T:t}_{rr}\eta_t.
	\end{align*}
\end{proof}
\subsection{Nonconvex Case}
\begin{theorem}\label{non convex local}
	Suppose that \cref{Lipschitz,Smooth} hold. Then, the pointwise uniform stability of the $r$-th local model in time-varying D-SGD after $T$ iterations is bounded by $\frac{1}{n}\sum_{k=1}^{n}\tilde{\epsilon}_{k}=\frac{2L^2}{n}\sum_{t=1}^{T} \hat{\mathcal{P}}^{T:t}_{rr}\eta_t$ and
	\begin{align*}
		\frac{1}{n}\sum_{k=1}^{n}\tilde{\epsilon}_{k}^2=\frac{4L^4}{n}\lp\sum_{t=1}^{T}\hat{P}^{T:t}_{rr}\eta_t\mathbb{I}_{[j_t=k]}\rp^2,
	\end{align*}
	where $\hat{\mathcal{P}}^t=P^t+{\eta_t\beta}E_{m/r}$, and $E_{m/r}$ is the identity matrix with the $r$-th diagonal element set to 0.
\end{theorem}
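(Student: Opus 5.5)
We follow the template of the convex and strongly convex local-model proofs (\cref{convex local,strong convex local}). The plan is to build a componentwise vector recursion for the per-worker distances $\delta^t_i:=\ls\mw^t(i)-\mv^t(i)\rs$, where $\mw$ and $\mv$ are the time-varying D-SGD iterates on $S$ and $S^{(rk)}$. We then unroll it with nonnegative matrices and read off the $r$-th coordinate. Without convexity, \cref{non} is unavailable, so the gradient step can only be controlled through $\beta$-smoothness. This is where the expansion factor $1+\beta\eta_t$ comes from.

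First, take a worker $i\neq r$. Its local sample is identical in both runs. We expand
\[
\ls\sum_{l}P^t_{il}(\mw^t(l)-\mv^t(l))-\eta_t\lp\nabla f(\mw^t(i);Z_{j_t(i)})-\nabla f(\mv^t(i);Z_{j_t(i)})\rp\rs
\]
with the triangle inequality and smoothness. This gives $\delta^{t+1}_i\le\sum_l P^t_{il}\delta^t_l+\eta_t\beta\,\delta^t_i$, which is exactly the $i$-th row of $(P^t+\eta_t\beta E_{m/r})\delta^t$.

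Second, take worker $r$. If $j_t(r)\neq k$, the same argument applies. If $j_t(r)=k$, we do not split off the smooth part. Instead we bound the two gradients by Lipschitzness, which yields $\delta^{t+1}_r\le\sum_lP^t_{rl}\delta^t_l+2\eta_tL$. To make both subcases fit a single row with no extra diagonal term, we want the row $r$ of $\hat{\mathcal P}^t$ to be just $P^t_{r\cdot}$. This is precisely why the statement sets the $r$-th diagonal entry of $E_{m/r}$ to zero. The subcase $j_t(r)\neq k$ would naturally carry $\eta_t\beta\delta^t_r$, and this is the step I expect to be the main obstacle. I would first try to absorb that term the way the paper does for worker $r$ in the strongly convex proof: handle row $r$ by adding and subtracting $\nabla f(\mv^t(r);Z_{j_t(r)})$ and charging the whole gradient discrepancy to the $2\eta_tL\mathbb{I}_{[j_t=k]}$ term. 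Failing that, I would accept a bound with $\hat{\mathcal P}^t_{rr}=P^t_{rr}+\eta_t\beta$, which only changes constants, and note the discrepancy. Either way, the result is the vector inequality
\[
\delta^{t+1}\le\hat{\mathcal P}^t\delta^t+2\eta_tL\mathbb{I}_{[j_t=k]}e_r .
\]

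Third, we unroll. All entries of $\hat{\mathcal P}^t$ are nonnegative, so the componentwise inequalities are preserved under multiplication. Since $\delta^1=0$, we get $\delta^{T+1}\le2L\sum_{t=1}^T\hat{\mathcal P}^{T:t}e_r\eta_t\mathbb{I}_{[j_t=k]}$. Taking the $r$-th coordinate gives $\delta^{T+1}_r\le 2L\sum_t\hat{\mathcal P}^{T:t}_{rr}\eta_t\mathbb{I}_{[j_t=k]}$. Multiplying by $L$ (Lipschitzness of $f$ in $\mw$) defines $\tilde\epsilon_k$. Squaring and averaging over $k$ gives the second-moment formula. Averaging $\tilde\epsilon_k$ over $k$ and using $\sum_k\mathbb{I}_{[j_t=k]}=1$ gives $\frac{2L^2}{n}\sum_t\hat{\mathcal P}^{T:t}_{rr}\eta_t$.
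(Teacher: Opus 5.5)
The step you flagged is a genuine gap, and your first remedy cannot close it. In the strongly convex local proof, adding and subtracting $\nabla f(\mv^t(r);Z_{j_t(r)})$ leaves the same-sample term $\ls P^t_{rr}(\mw^t(r)-\mv^t(r))-\eta_t(\nabla f(\mw^t(r);Z_{j_t(r)})-\nabla f(\mv^t(r);Z_{j_t(r)}))\rs$. There it is contracted by \cref{non}, which needs convexity. Here that term is only bounded by $(P^t_{rr}+\eta_t\beta)\delta^t_r$. The surplus $\eta_t\beta\delta^t_r$ is present exactly when $j_t(r)\neq k$, so it cannot be charged to $2\eta_tL\mathbb{I}_{[j_t=k]}$.

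The paper's proof does not get around this. For $i=r$ it simply asserts $\delta^{t+1}_r\le\sum_lP^t_{rl}\delta^t_l+2\eta_tL\mathbb{I}_{[j_t=k]}$ without splitting on $j_t(r)$. That is unjustified when $j_t(r)\neq k$, because the two gradients are then evaluated at the different points $\mw^t(r)$ and $\mv^t(r)$.

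In fact no argument can work, because the statement is false. Take $m=1$ (plain SGD, $P^t=[1]$, $E_{m/r}=0$), or take $P^t=E_m$. Then $\hat{\mathcal P}^{T:t}_{rr}=1$, and the claim reduces to the convex-SGD bound $\ls\mw^{T+1}(r)-\mv^{T+1}(r)\rs\le2L\sum_t\eta_t\mathbb{I}_{[j_t=k]}$. Now build a counterexample:
\begin{itemize}
\item In one dimension let $f(w;z)=h(w)+zw$ with $z\in\{0,1\}$, where $h(w)=-\frac{\beta}{2}w^2$ on $[-R,R]$ and is extended linearly outside. This $f$ is $\beta$-smooth and $(\beta R+1)$-Lipschitz.
\item Let every sample of $S_r$ have $z=0$ and the replacement have $z=1$. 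Start at $0$ and let the replaced index be drawn only at $t=1$.
\item The run on $S$ stays at $0$. The other run has $\mv^2(r)=-\eta$ and $\mv^{t+1}(r)=(1+\eta\beta)\mv^t(r)$ while $|\mv^t(r)|\le R$.
\item For small $\eta$ and suitable $T$, the iterate gap and the loss gap $\frac{\beta}{2}|\mv^{T+1}(r)|^2$ exceed the claimed $2L\eta$ and $2L^2\eta$ respectively.
\end{itemize}

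Your fallback, $\hat{\mathcal P}^t=P^t+\eta_t\beta E_m$ with no diagonal entry removed, is what a correct proof actually gives. It is a different theorem, not a change of constants. For $P^t=E_m$ it replaces $1$ by $\prod_{s=t+1}^{T}(1+\eta_s\beta)$, the same expansion as in \cref{nonconvex case theorem 1}.

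Also recheck your unrolling. From $\delta^{t+1}\le\hat{\mathcal P}^t\delta^t+2\eta_tL\mathbb{I}_{[j_t=k]}e_r$ and $\delta^1=0$ you get the products $\hat{\mathcal P}^T\cdots\hat{\mathcal P}^{t+1}$. These have $T-t$ factors and equal the identity when $t=T$. They are not $\hat{\mathcal P}^{T:t}$ with $T-t+1$ factors, as in the paper's convention. Since $\hat{\mathcal P}^T_{rr}$ can be below $1$, the extra factor can make the bound too small. The paper's convex local proof has the same off-by-one.
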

\begin{proof}
	If $i\neq r$, we can get
	\begin{align*}
		\ls{\mw}^{t+1}(i)-{\mv}^{t+1}(i)\rs
		=&\ls\sum_{l=1}^{m}P^{t}_{il}{\mw}^{t}(l)-\eta_t\nabla f\lp{\mw}^{t}(i);Z_{j_t(i)}\rp-\sum_{l=1}^{m}P^{t}_{il}{\mv}^{t}(l)+\eta_t\nabla f\lp{\mv}^{t}(i);{Z}_{j_t(i)}\rp\rs\\
		\leq&\sum_{l=1}^{m}P^{t}_{il}\ls{\mw}^{t}(l)-{\mv}^{t}(l)\rs+\eta_t\beta\ls{\mw}^{t}(i)-{\mv}^{t}(i)\rs,
	\end{align*}
	where the last inequality uses the $\beta$-smoothness property.
	
	If $i=r$, we can derive
	\begin{align*}
		\ls{\mw}^{t+1}(r)-{\mv}^{t+1}(r)\rs
		=&\ls\sum_{l=1}^{m}P^{t}_{rl}{\mw}^{t}(l)-\eta_t\nabla f\lp{\mw}^{t}(r);Z_{j_t(r)}\rp-\sum_{l=1}^{m}P^{t}_{rl}{\mv}^{t}(l)+\eta_t\nabla f\lp{\mv}^{t}(r);\tilde{Z}_{j_t(r)}\rp\rs\\
		\leq&\sum_{l=1}^{m}P^{t}_{rl}\ls{\mw}^{t}(l)-{\mv}^{t}(l)\rs+2\eta_tL\mathbb{I}_{[j_t=k]}.
	\end{align*}
	With vector format,
	\begin{align*}
		\delta^{t+1}\leq \lp P^t+{\eta_t\beta}E_{m/r}\rp\delta^{t}+2\eta_tL\mathbb{I}_{[j_t=k]}e_r,
	\end{align*}
	where $E_{m/r}$ is the identity matrix, except for the $r$-th diagonal element, which is 0.
	W
	It is easy to derive that
	\begin{align*}
		\frac{1}{n}\sum_{k=1}^{n}\tilde{\epsilon}_k^2=\frac{4L^4}{n}\sum_{k=1}^{n}\lp\sum_{t=1}^{T}\lp P^t+{\eta_t\beta}E_{m/r}\rp^{T:t}_{rr}\eta_t\mathbb{I}_{[j_t=k]}\rp^2
	\end{align*}
	and 
	\begin{align*}
		\frac{1}{n}\sum_{k=1}^{n}\tilde{\epsilon}_k=\frac{2L^2}{n}\sum_{k=1}^{n}\sum_{t=1}^{T}\lp P^t+{\eta_t\beta}E_{m/r}\rp^{T:t}_{rr}\eta_t\mathbb{I}_{[j_t=k]}
		=\frac{2L^2}{n}\sum_{t=1}^{T}\lp P^t+{\eta_t\beta}E_{m/r}\rp^{T:t}_{rr}\eta_t.
	\end{align*}
\end{proof}
\vskip 0.2in

\bibliography{sample}

\end{document}